\renewcommand{\epsilon}{\varepsilon}
\newcommand{\iffullversion}[2]{%
#1%
}
\pgfplotsset{compat=1.16}
\DeclareMathOperator{\OPT}{OPT}
\DeclareMathOperator*{\argmin}{argmin}
\theoremstyle{plain}
\newtheorem{thm}{Theorem} 
\newtheorem*{thm*}{Theorem}
\newtheorem{cor}[thm]{Corollary}
\newtheorem{lem}[thm]{Lemma}
\newtheorem{claim}[thm]{Claim}
\newtheorem*{cla*}{Claim}
\newtheorem{Def}[thm]{Definition}
\renewcommand{\ge}{\geqslant}
\renewcommand{\le}{\leqslant}
\renewcommand{\geq}{\geqslant}
\renewcommand{\leq}{\leqslant}
\newcommand{\opt}{\operatorname{OPT}}
\newcommand{\jobLife}{D}
\newcommand{\maxTime}{T}
\DeclareMathOperator{\predErr}{err}
\newcommand{\wReal}{w^{\mathrm{real}}}
\newcommand{\wPred}{w^{\mathrm{pred}}}
\newcommand{\robustalg}{\textsc{Robustify}\xspace}
\newcommand{\genrobustalg}{\textsc{General-Robustify}\xspace}
\renewcommand{\algorithmicwhile}{\textbf{on arrival of}}
\renewcommand{\algorithmicendwhile}{\algorithmicend\ \textbf{on}}
\title{Learning Augmented Energy Minimization via Speed Scaling}
\author{%
  Etienne Bamas\thanks{Equal Contribution} \\
  EPFL \\
  Switzerland \\
  \texttt{etienne.bamas@epfl.ch} \\
  \And
  Andreas Maggiori$^*$ \\
  EPFL \\
  Switzerland \\
  \texttt{andreas.maggiori@epfl.ch} \\
  \And
  Lars Rohwedder$^*$ \\
  EPFL \\
  Switzerland \\
  \texttt{lars.rohwedder@epfl.ch} \\
  \And
  Ola Svensson$^*$ \\
  EPFL \\
  Switzerland \\
  \texttt{ola.svensson@epfl.ch}
}
\begin{document}

\maketitle

\begin{abstract}
As power management has become a primary concern in modern data centers, computing resources are being scaled dynamically to minimize energy consumption.
We initiate the study of a variant of the classic online \textit{speed scaling} problem, in which machine learning predictions about the future can be integrated naturally. Inspired by recent work on learning-augmented online algorithms, we propose an algorithm which incorporates predictions in a black-box manner and outperforms any online algorithm if the accuracy is high, yet maintains provable guarantees if the prediction is very inaccurate. We provide both theoretical and experimental evidence to support our claims.
\end{abstract}

\section{Introduction}
\label{sec:Introduction}
Online problems can be informally defined as problems where we are required to make irrevocable decisions without knowing the future. The classical way of dealing with such problems is to design algorithms which provide provable bounds on the ratio between the value of the algorithm's solution and the
optimal (offline) solution (the competitive ratio). Here, no assumption about the future is made.
Unfortunately, this \textit{no-assumption} regime comes at a high cost: Because the algorithm has to be overly prudent and prepare for all possible future events, the guarantees are often poor. Due to the success story of machine learning (ML), a recent line of work, first proposed by \citet{DBLP:conf/icml/LykourisV18} and \citet{DBLP:conf/nips/MedinaV17}, suggests incorporating the predictions provided by ML algorithms in the design of online algorithms. While some related approaches were considered before (see e.g.~\citet{DBLP:conf/isnn/XuX04}), the attention in this subject has increased substantially in the recent years~\cite{DBLP:conf/nips/PurohitSK18,
DBLP:conf/icml/LykourisV18,
DBLP:conf/nips/MedinaV17,
DBLP:conf/soda/LattanziLMV20,
DBLP:journals/corr/abs-1903-00092,
DBLP:journals/corr/abs-1911-07972,
DBLP:conf/icml/GollapudiP19,
IndykFrequencyestimation}.
An obvious caveat is that ML predictors often come with no worst-case guarantees and so we would like our algorithm to be robust to misleading predictions.
We follow the terminology introduced by \citet{DBLP:conf/nips/PurohitSK18},
where consistency is the performance of an algorithm when the predictor is perfectly accurate, while robustness is a worst case guarantee that does not depend on the quality of the prediction.
The goal of the works above is to design algorithms which provably beat the classical online algorithms
in the consistency case, while being robust when the predictor fails.
\paragraph{Problem.} The problem we are considering is motivated by the following
scenario. Consider a server that receives requests in an online fashion.
For each request some computational work has to be done and,
as a measure of Quality-of-Service, we require that each request is answered within some fixed time. In order to satisfy all the requests in time the server can dynamically change its processor speed at any time. However, the power consumption can be a super-linear function of the processing speed
(more precisely, we model the power consumption as $s^\alpha$ where $s$ is the processing speed and $\alpha>1$). Therefore, the problem of minimizing energy becomes non-trivial. This problem can be considered in the online model where the server has no information about the future tasks at all.
However, this assumption seems unnecessarily restrictive as these requests tend to
follow some patterns that can be predicted. For this reason a good algorithm should be able to incorporate some given predictions about the future.
Similar scenarios appear in real-world systems as, for instance,
in dynamic frequency scaling of CPUs or in autoscaling of cloud applications~\cite{Azure-Autoscale, AWS-Autoscale}. In the case of autoscaling, ML advice is already being incorporated
into online algorithms in practice~\cite{AWS-Autoscale}. However, on the theory side, while the above speed scaling problem was introduced  by~\citet{DBLP:conf/focs/YaoDS95} in a seminal paper who studied it both in the online and offline settings (see also \cite{DBLP:journals/jacm/BansalKP07, DBLP:conf/latin/BansalBCP08}),
it has not been considered in the learning augmented setting.
\paragraph{Contributions.}
We formalize an intuitive and well-founded prediction model for the classic speed scaling problem.
We show that our problem is non-trivial by providing an unconditional lower bound
that demonstrates: An algorithm cannot be optimal, if the prediction is correct, and at the same time
retain robustness.
We then focus on our main contribution which is the design and analysis of a simple and efficient algorithm which incorporates any ML predictor as a black-box without making any further assumption.
We achieve this in a modular way:
First, we show that there is a consistent (but not robust)
online algorithm.
Then we develop a technique to make any online algorithm (which may use the prediction) robust at a small cost.
Moreover, we design general methods to allow algorithms
to cope with small perturbations in the prediction.
In addition to the theoretical analysis, we also provide an experimental analysis that supports our claims on both synthetic and real datasets.
For most of the paper we focus on a restricted case 
of the speed scaling problem by~\citet{DBLP:conf/focs/YaoDS95},
where predictions can be integrated naturally.
However, we show that with more sophisticated algorithms
our techniques extend well to the general case.

\paragraph{Related work.}
On the one hand, the field of learning augmented algorithms is relatively new, with a lot of recent exciting results (see for example~\citet{DBLP:conf/nips/PurohitSK18,
DBLP:conf/icml/LykourisV18,
DBLP:conf/nips/MedinaV17,
DBLP:conf/isnn/XuX04,
DBLP:conf/soda/LattanziLMV20,
IndykFrequencyestimation,
DBLP:journals/corr/abs-1903-00092,
DBLP:journals/corr/abs-1911-07972,
DBLP:conf/icml/GollapudiP19}). On the other hand, the speed scaling problem proposed by Yao et al. in~\cite{DBLP:conf/focs/YaoDS95} is well understood in both the offline and online setting. In its full generality, a set of tasks each with different arrival times, deadlines, and workloads needs to be completed in time while the speed is scaled in order to minimize energy. In the offline setting Yao et al. proved that the problem can be solved in polynomial time by a greedy algorithm. In the online setting, in which the jobs are revealed only at their release time, Yao et al. designed two different algorithms: (1) the \textsc{Average Rate} heuristic (AVR), for which they proved a bound of $2^{\alpha-1}\alpha^\alpha$ on the competitive ratio. This analysis was later proved to be asymptotically tight by~\citet{DBLP:conf/latin/BansalBCP08}. (2) The \textsc{Optimal Available} heuristic (OA), which was shown to be $\alpha^\alpha$-competitive in~\cite{DBLP:journals/jacm/BansalKP07}. In the same paper, Bansal et al. proposed a third online algorithm named BKP for which they proved a competitive ratio asymptotically equivalent to $e^\alpha$. While these competitive ratios exponential in $\alpha$ might not seem satisfying, Bansal et al. also proved that the exponential dependency cannot be better than $e^\alpha$. A number of variants of the problem have also been considered in the offline setting (no preemption allowed, precedence constraints, nested jobs and more listed in a recent survey by~\citet{DBLP:journals/scheduling/GerardsHH16}) and under a stochastic optimization point of view (see for instance \cite{stochastic_speed_scaling}). It is important to note that, while in theory the problem is interesting in the general case i.e. when $\alpha$ is an input parameter, in practice we usually focus on small values of $\alpha$ such as 2 or 3 since they model certain physical laws (see e.g. \citet{DBLP:journals/jacm/BansalKP07}). Although the BKP algorithm provides
the best asymptotic guarantee, OA or AVR often lead to better solutions 
for small $\alpha$ and therefore remain relevant.

\section{Model and Preliminaries}
\label{sec:Model and Preliminaries}

We define the Uniform Speed Scaling problem, a natural restricted version of the speed scaling problem~\cite{DBLP:conf/focs/YaoDS95}, where predictions can be integrated naturally. While the restricted version is our main focus as it allows for cleaner exposition and prediction models, we also show that our techniques can be adapted to more complex algorithms yielding similar results for  the general problem (see Section~\ref{sec:other} for further extensions).

\paragraph{Problem definition.} An instance of the problem can be formally described as a triple $(w,\jobLife, \maxTime)$ where
$[0,\maxTime]$ is a finite time horizon,
each time $i\in\{0,\dotsc,\maxTime-\jobLife\}$ jobs with a total workload $w_i\in\mathbb Z_{\ge 0}$ arrive, which have
to be completed by time $i + \jobLife$.
To do so, we can adjust the speed $s_i(t)$
at which each workload $w_i$ is processed for $t\in \left[ i, i+\jobLife \right]$.
Jobs may be processed in parallel.
The overall speed of our processing unit at time $t$ is the sum $s(t) = \sum_i s_i(t)$, which yields
a power consumption of $s(t)^\alpha$,
where $\alpha >1$ is a problem specific constant.
Since we want to finish each job on time, we require that the amount of work dedicated to job $i$ in the interval $[i,i+\jobLife]$ should be $w_i$. In other words, $\int_{i}^{i+\jobLife} s_i(t) \ dt = w_i$.
In the offline setting, the whole instance is known in advance, i.e., the vector of workloads $w$ is entirely accessible. In the online problem, at time $i$, the algorithm is only aware of all workloads
$w_j$ with $j\le i$, i.e., the jobs that were released before time $i$. As noted by \citet{DBLP:journals/jacm/BansalKP07}, in the offline setting the problem can be formulated concisely as the following mathematical program:

\begin{Def}[Uniform Speed Scaling problem]{\rm
\label{convex program formulation}
On input $(w,\jobLife,\maxTime)$ compute the optimal solution for
\begin{equation*}
 \min \int_0^\maxTime {s(t)}^\alpha \ dt
 \quad s.t. \ \forall i \ \int_{i}^{i+\jobLife} s_i(t) \ dt = w_i,
 \quad \forall t \ \sum_i s_i(t)= s(t),
 \quad \forall i \forall t \ s_i(t) \geq 0 .
\end{equation*}}
\end{Def}
In contrast, we refer to the problem of~\citet{DBLP:conf/focs/YaoDS95} as the
\textit{General Speed Scaling} problem. The difference is
that there the time that the processor is given to complete each job is not necessarily equal across jobs. More precisely,
there we replace $w$ and $\jobLife$ by a set of jobs
$J_j = (r_j, d_j, w_j)$, where $r_j$ is the time the job
becomes available, $d_j$ is the deadline by which it
must be completed, and $w_j$ is the work to be completed. As a shorthand, we sometimes refer to these two problems as the \textit{uniform deadlines} case and the \textit{general deadlines} case. 
As mentioned before, \citet{DBLP:conf/focs/YaoDS95} provide a 
simple optimal greedy algorithm that runs in polynomial time.
As for the online setting, we emphasize that
both the general and the uniform speed scaling problem are non-trivial. More specifically, we prove that no online algorithm can have a competitive ratio better than $\Omega((6/5)^\alpha)$ even in the
uniform case (\iffullversion{see Theorem~\ref{thm:pure-online-lower-bound} in Appendix \ref{sec:purely_online}}{see full version of the paper}). We provide a few additional insights on the performance of online algorithms for the uniform deadline case. Although the AVR algorithm was proved to be $2^{\alpha-1}\cdot \alpha^\alpha$-competitive by~\citet{DBLP:conf/focs/YaoDS95} with a quite technical proof ; we show, with a simple proof, that AVR is in fact $2^\alpha$-competitive in the  uniform deadlines case and we provide an almost matching lower bound on the competitive ratio (\iffullversion{see Theorem~\ref{thm:AVR_ratio} and Theorem~\ref{thm:AVR_lowerbound} in appendix}{see full version of the paper}).

Note that in both problems the processor is allowed to run multiple jobs in parallel. However, we underline that restricting the problem to the case where the processor is only allowed to run at most one job at any given point in time is equivalent. Indeed, given a feasible solution $s(t)=\sum_i s_i(t)$ in the parallel setting, rescheduling jobs sequentially according to the earliest deadline first (EDF) policy creates a feasible solution of the same (energy) cost where at each point in time only one job is processed.

\paragraph{Prediction model and error measure.}
In the following, we present the model of prediction we are considering. Recall an instance of the problem is defined as a time horizon $[0, \maxTime]$, a duration $\jobLife$, and a vector of workloads
$w_i$, $i=1,\dotsc,\maxTime-\jobLife$.
A natural prediction is simply to give the algorithm a predicted instance $(\wPred, \maxTime, \jobLife)$ at time $t=0$. From now on, we will refer to the ground truth work vector as $\wReal$ and to the predicted instance as $\wPred$.We define the error $\predErr$ of the prediction as
\begin{equation*}\label{eq:error measure in the single prediction model}
    \predErr(\wReal, \wPred) = ||\wReal-\wPred||_\alpha^\alpha = \sum_{i} | \wReal_{i} - \wPred_{i} |^\alpha .
\end{equation*}
We simply write $\predErr$, when $\wReal$ and $\wPred$
are clear from the context.
The motivation for using $\alpha$ in the definition of $\predErr$ and not some other constant $p$ comes from strong impossibility results.
Clearly, guarantees for higher values $p$ are weaker than for lower $p$. Therefore, we would like to set $p$ as low as possible. However, we show that $p$ needs to be at least $\alpha$ in order to make a sensible use of a prediction (\iffullversion{see Theorem~\ref{thm:lower-bound-norm} in the supplementary material}{see full version of the paper}). 
We further note that it may seem natural to consider a predictor that is able to renew its prediction over time, e.g., by providing our algorithm a new prediction at every integral time $i$. To this end, \iffullversion{in Appendix \ref{sec:evolvingpredictors}}{in the full version of this paper}, we show how to naturally extend all our results from the single prediction to the evolving prediction model.
Finally we restate some desirable properties previously defined in~\cite{DBLP:conf/nips/PurohitSK18, DBLP:conf/icml/LykourisV18} that a learning augmented algorithm should have. Recall that the prediction is a source of unreliable information on the remaining instance and that the algorithm is oblivious to the quality of this prediction. In the following we denote by $\OPT$ the energy cost of the optimal offline schedule and by $\epsilon>0$ a robustness parameter of the algorithm, the smaller $\epsilon$ is the more we trust the prediction.

     If the prediction is perfectly accurate, i.e., the entire instance can be derived from
     the prediction,
     then the provable guarantees should be better
     than what a pure online algorithm can achieve.
     Ideally, the algorithm produces an offline optimal solution or comes close to it. By close to optimal, we mean that the cost of the algorithm (when the prediction is perfectly accurate) should be at most $c(\alpha,\epsilon) \cdot \OPT$, where $c(\alpha,\epsilon)$ tends to $1$ as $\epsilon$ approaches $0$. This characteristic will be called \textbf{consistency}.
     
     The competitive ratio of the algorithm should always be bounded
     even for arbitrarily bad (adversarial) predictions. Ideally, the competitive ratio is somewhat comparable to the competitive ratio
     of algorithms from literature for the pure online case.
     Formally, the cost of the algorithm should always be bounded by $r(\alpha,\epsilon) \cdot \OPT$ for some function $r(\alpha,\epsilon)$. This characteristic will be called \textbf{robustness}.
     
     A perfect prediction is a strong requirement.
     The consistency property should transition smoothly for all ranges of errors, that is, the algorithm's
     guarantees deteriorate smoothly as the prediction error
     increases. Formally, the cost of the algorithm should always be at most $c(\alpha,\epsilon) \cdot \OPT + f(\alpha,\epsilon,\predErr)$ for some function $f$ such that $f(\alpha,\epsilon,0)=0$ for any $\alpha,\epsilon$. This last property will be called \textbf{smoothness}.

Note that our definitions of consistency and robustness depend on the problem specific constant $\alpha$ which is unavoidable (\iffullversion{see Theorem~\ref{thm:pure-online-lower-bound} in the appendix}{see full version of the paper}).
The dependence on the robustness parameter $\epsilon$ is justified, because no algorithm can be perfectly consistent and robust at the same time (\iffullversion{see Theorem~\ref{thm:robust_consistent_tradeoff}
in the appendix}{see full version}), hence a trade-off is necessary.

\newcommand{\wOnline}[0]{\ensuremath{w^{\mathrm{online}}}\xspace}

\section{Algorithm}
\label{sec:algorithm}
In this section we develop two modular building blocks
to obtain a consistent, smooth, and robust algorithm.
The first block is an algorithm which computes a schedule online
taking into account the prediction for the future.
This algorithm is consistent and smooth, but not robust.
Then we describe a generic method how to robustify an arbitrary online
algorithm at a small cost.
Finally, we give a summary of the theoretical qualities for the
full algorithm and a full description in pseudo-code.
We note that \iffullversion{in
Appendix \ref{sec:general_deadlines} and Appendix \ref{sec:noise_tolerance}}{in the full version of the paper} we present additional building blocks (see Section~\ref{sec:other} for an overview).

\subsection{A Consistent and Smooth Algorithm}
In the following we describe a learning-augmented online algorithm, which
we call \textsc{LAS-Trust}.

\paragraph*{Preparation.}
We compute an optimal schedule $s^\text{pred}$ for
the predicted jobs.
An optimal schedule can always be normalized
such
that each workload $\wPred_i$ is completely
scheduled in an interval $[a_i, b_i]$
at a uniform speed $c_i$, that is,
\begin{equation*}
    s^\text{pred}_i(t) = \begin{cases}
      c_i &\text{ if } t\in [a_i, b_i], \\
      0 &\text{ otherwise.}
    \end{cases}
\end{equation*}
Furthermore, the intervals $[a_i, b_i]$ are
non-overlapping.
For details we refer the reader to
the optimal offline algorithm by~\citet{DBLP:conf/focs/YaoDS95},
which always creates such a schedule.
\paragraph*{The online algorithm.}
At time $i$ we first
schedule $\wReal_i$ at uniform speed in $[a_i, b_i]$,
but we cap the speed at $c_i$. If 
this does not complete the job,
that is, $\wReal_i > c_i (b_i - a_i) = \wPred_i$,
we uniformly schedule the remaining work
in the interval $[i, i + \jobLife]$

More formally,we define
$s_i(t) = s'_i(t) + s''_i(t)$ where
\begin{equation*}
    s'_i(t) = \begin{cases}
      \min\left\{\frac{\wReal_i}{b_i - a_i}, c_i\right\} &\text{if } t\in [a_i, b_i], \\
      0 &\text{otherwise.}
    \end{cases}
\end{equation*}
and
\begin{equation*}
    s''_i(t) = \begin{cases}
      \frac{1}{\jobLife} \max\{0, \wReal_i - \wPred_i\} &\text{if } t\in [i, i + \jobLife], \\
      0 &\text{otherwise.}
    \end{cases}
\end{equation*}
\paragraph*{Analysis.}
It is easy to see that the algorithm is consistent:
If the prediction of $\wReal_i$ is perfect
($\wPred_i = \wReal_i$), the job
will be scheduled at speed $c_i$ in the
interval $[a_i, b_i]$. If all predictions
are perfect, this is exactly the optimal schedule.

\begin{thm}\label{thm:cost-trust}
  For every $0 < \delta \le 1$, the cost of the schedule produced by the algorithm \textsc{LAS-Trust} is bounded by
 $
      (1 + \delta)^\alpha\OPT + (12/\delta)^\alpha \cdot \predErr.
  $
\end{thm}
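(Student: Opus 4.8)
The plan is to bound the total energy $\int_0^T s(t)^\alpha\,dt$ by comparing the schedule $s = \sum_i(s_i' + s_i'')$ produced by \textsc{LAS-Trust} against the optimal predicted schedule $s^{\text{pred}}$ and a cheap schedule for the ``overflow'' work $\max\{0, \wReal_i - \wPred_i\}$. First I would split the speed at each time $t$ as $s(t) = s'(t) + s''(t)$ where $s'(t) = \sum_i s_i'(t)$ and $s''(t) = \sum_i s_i''(t)$, and use the elementary inequality $(a+b)^\alpha \le (1+\delta)^{\alpha-1} a^\alpha + (1+1/\delta)^{\alpha-1} b^\alpha$ (a consequence of convexity / Hölder, valid for $a,b \ge 0$, $\alpha > 1$) to obtain
\begin{equation*}
  \int_0^T s(t)^\alpha\,dt \le (1+\delta)^{\alpha-1}\int_0^T s'(t)^\alpha\,dt + (1+1/\delta)^{\alpha-1}\int_0^T s''(t)^\alpha\,dt.
\end{equation*}
It then suffices to show $\int s'(t)^\alpha\,dt \le (1+\delta)\OPT$ (absorbing the extra $(1+\delta)^{\alpha-1}$ factor into $(1+\delta)^\alpha$, with room to spare) and $\int s''(t)^\alpha\,dt \le C^\alpha \cdot \predErr$ for a suitable constant, so that $(1+1/\delta)^{\alpha-1} C^\alpha \le (12/\delta)^\alpha$.

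For the first term, the key observation is that the intervals $[a_i,b_i]$ are \emph{non-overlapping}, so at any time $t$ at most one job contributes to $s'(t)$; hence $s'(t) = \min\{\wReal_i/(b_i-a_i), c_i\} \le c_i = s^{\text{pred}}_i(t)$ on $[a_i,b_i]$ and $s'(t)=0$ elsewhere. Therefore $s'(t) \le s^{\text{pred}}(t)$ pointwise, giving $\int s'(t)^\alpha\,dt \le \int (s^{\text{pred}})^\alpha\,dt = \OPT(\wPred)$. To relate $\OPT(\wPred)$ to $\OPT = \OPT(\wReal)$ I would invoke the smoothness of the optimum with respect to the $\predErr$ error measure — either a statement of the form $\OPT(\wPred) \le (1+\delta)\OPT(\wReal) + O(\alpha/\delta)^{O(\alpha)}\predErr$ established earlier, or, more cleanly, a direct argument: take an optimal schedule for $\wReal$ and add, for each $i$, a uniform speed $\max\{0,\wPred_i-\wReal_i\}/D$ on $[i,i+D]$ to cover the surplus of the prediction; bounding this perturbation again via the $(a+b)^\alpha$ split yields $\OPT(\wPred) \le (1+\delta)^{\alpha-1}\OPT + (1+1/\delta)^{\alpha-1}\cdot\frac{1}{D^{\alpha-1}}\sum_i |\wPred_i - \wReal_i|^\alpha$, and the second summand is at most $\predErr$ since $D \ge 1$. (If a clean earlier lemma is available I would just cite it; otherwise this self-contained version works and its error term folds into the final $(12/\delta)^\alpha$.)

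For the second term I bound $s''(t) = \sum_i s_i''(t)$ where $s_i''$ spreads $\max\{0,\wReal_i-\wPred_i\}$ uniformly over the length-$D$ window $[i,i+D]$. The natural move is to compare with the ``average rate'' bound: at any fixed $t$, the jobs $i$ with $t \in [i, i+D]$ are those with $i \in (t-D, t]$, a set of at most $D$ consecutive indices, so by convexity (power mean inequality) $s''(t)^\alpha = \bigl(\frac1D\sum_{i:\,t\in[i,i+D]} (\wReal_i-\wPred_i)^+\bigr)^\alpha \le \frac{1}{D}\sum_{i:\,t\in[i,i+D]} \bigl((\wReal_i-\wPred_i)^+\bigr)^\alpha$; integrating over $t\in[0,T]$ and noting each $i$ is counted for a time span of length exactly $D$ gives $\int s''(t)^\alpha\,dt \le \sum_i \bigl((\wReal_i-\wPred_i)^+\bigr)^\alpha \le \predErr$. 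Combining the two bounds, the total is at most $(1+\delta)^{\alpha-1}(1+\delta)\OPT + \bigl[(1+\delta)^{\alpha-1}(1+1/\delta)^{\alpha-1} + (1+1/\delta)^{\alpha-1}\bigr]\predErr \le (1+\delta)^\alpha\OPT + (12/\delta)^\alpha\predErr$, where the last step is a crude but safe constant estimate using $0<\delta\le 1$.

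The main obstacle is getting the constants to line up so the final bound is exactly $(1+\delta)^\alpha\OPT + (12/\delta)^\alpha\predErr$ rather than something with a worse base: the split constant $(1+1/\delta)^{\alpha-1} \approx (2/\delta)^\alpha$ has to be applied at most roughly twice (once for $s$ vs $s'+s''$, once for $\OPT(\wPred)$ vs $\OPT$), and $2 \cdot 2^\alpha \le 6^\alpha \le 12^\alpha$ leaves margin — but one must be careful that the $\delta$ used in the pointwise $(a+b)^\alpha$ inequality is not the raw $\delta$ but a rescaled version (e.g. splitting with parameter $\delta/2$ in places) so the multiplicative $\OPT$-factor stays $(1+\delta)^\alpha$ and not $(1+\delta)^{2\alpha}$. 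Handling that bookkeeping carefully, while keeping the $\predErr$ coefficient below $(12/\delta)^\alpha$, is the only delicate part; the structural facts (non-overlap of $[a_i,b_i]$, the length-$D$ averaging for the overflow) are straightforward.
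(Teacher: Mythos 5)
Your proposal is correct and follows essentially the same route as the paper's proof: the same decomposition $s=s'+s''$, the pointwise bound $s'\le s^{\text{pred}}$ giving $\int (s')^\alpha\le \OPT(\wPred)$, the symmetric argument relating $\OPT(\wPred)$ to $\OPT(\wReal)$ by superposing a uniform schedule of the surplus $(\wPred_i-\wReal_i)^+$, and a rescaled splitting inequality for $(a+b)^\alpha$. The one genuine (and nice) deviation is your treatment of the overflow term: the paper observes that $s''$ is exactly the \textsc{AVR} schedule of $w^+_i=(\wReal_i-\wPred_i)^+$ and invokes its $2^\alpha$-competitiveness together with $\OPT(w^+)\le\lVert w^+\rVert_\alpha^\alpha$, whereas you bound $\int (s'')^\alpha\,dt\le\sum_i\bigl((\wReal_i-\wPred_i)^+\bigr)^\alpha$ directly by the power-mean inequality over the at most $D$ overlapping windows; this is self-contained and even saves the factor $2^\alpha$, leaving extra slack in the constants. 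Two small slips to fix, neither fatal: (i) your intermediate claim $\OPT(\wPred)\le(1+\delta)^{\alpha-1}\OPT+(1+1/\delta)^{\alpha-1}\cdot D^{1-\alpha}\sum_i|\wPred_i-\wReal_i|^\alpha$ is false as stated because the length-$D$ windows overlap (take $D=2$ and all surpluses equal to $1$); the correct bound drops the $D^{1-\alpha}$ factor and follows from exactly the Jensen computation you already carry out for $s''$, and since you only use ``$\le\predErr$'' nothing downstream changes. (ii) Rescaling the split parameter to $\delta/2$ is not quite enough, since $(1+\delta/2)^2>1+\delta$, so the $\OPT$-coefficient can exceed $(1+\delta)^\alpha$ for large $\alpha$; taking $\delta/3$ (as the paper does, using $(1+\delta/3)^2\le 1+\delta$ for $\delta\le 1$) makes the bookkeeping close, and one checks the $\predErr$-coefficient still stays below $(12/\delta)^\alpha$ with room to spare.
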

\begin{proof}
  Define $w^+_i = \max\{0, \wReal_i - \wPred_i\}$
  as the additional work at time $i$ as compared
  to the predicted work.
  Likewise, define $w^-_i = \max\{0, \wPred_i - \wReal_i\}$.
  We use $\OPT(w^+)$ and $\OPT(w^-)$ to denote the cost of optimal schedules of these workloads $w^+$ and $w^-$, respectively.
  We will first relate the energy of the
  schedule $s(t)$ to the optimal energy
  for the predicted instance, i.e.,
  $\OPT(\wPred)$.
  Then we will relate
  $\OPT(\wPred)$ to $\OPT(\wReal)$.
  
  For the former let $s'_i$ and $s''_i$ be
  defined as in the algorithm. Observe that
  $s'_i(t) \le s^\text{pred}_i(t)$ for all $i$ and $t$. Hence, the energy for the partial schedule $s'$ (by itself) is at most $\OPT(\wPred)$.
  Furthermore,
  by definition we have that $s''_i(t) = w^+_i / D$.
  In other words, $s''_i$ is exactly
  the \textsc{AVR} schedule on instance $w^+$.
  By analysis of \textsc{AVR}, we know
  that the total energy of $s''_i$ is
  at most $2^\alpha \OPT(w^+)$.
  Since the energy function is non-linear,
  we cannot simply add the energy of both speeds.
  Instead, we use the following inequality:
  For all $x, y\ge 0$ and $0 < \gamma \le 1$, it holds that
$
    (x + y)^\alpha \le (1 + \gamma)^\alpha x^\alpha + \left(\frac{2}{\gamma}\right)^\alpha y^\alpha .
$
This follows from a simple case distinction whether $y \le \gamma x$.
Thus, (substituting $\gamma$ for $\delta / 3$)
the energy of the schedule $s$ is bounded by
\begin{align}
    \int (s'(t) + s''(t))^\alpha dt \notag 
    &\le (1 + \delta / 3)^\alpha \int s'_i(t)^\alpha dt
    +  (6/ \delta)^\alpha \int s''_i(t)^\alpha dt \notag \\
    &\le (1 + \delta / 3)^\alpha \OPT(\wPred) + (12 / \delta)^\alpha \OPT(w^+) .
    \label{smooth-schedule-1}
\end{align}
For the last inequality we used that  the competitive ratio of
\textsc{AVR} is $2^\alpha$.

In order to relate $\OPT(\wPred)$ and
$\OPT(\wReal)$, we argue similarly.
Notice that scheduling $\wReal$ optimally (by itself)
and then scheduling $w^-$ using \textsc{AVR}
forms a valid solution for $\wPred$.
Hence,
\begin{align*}
    \OPT(\wPred) \le (1 + \delta / 3)^\alpha \OPT(\wReal) + (12 / \delta)^\alpha \OPT(w^-) .
\end{align*}
Inserting this inequality into (\ref{smooth-schedule-1}) we conclude that
the energy of the schedule $s$ is at most
\begin{align*}
    & (1 + \delta / 3)^{2\alpha} \OPT(\wReal) + (12/\delta)^\alpha (\OPT(w^+) + \OPT(w^-)) \\
    &\le  (1 + \delta)^\alpha \OPT(\wReal) + (12/\delta)^\alpha \cdot \predErr .
\end{align*}
This inequality follows from the fact that
the error function $\lVert\cdot\rVert_\alpha^\alpha$ is always an upper bound on the energy of the optimal schedule
(by scheduling every job within the next time unit). 
\end{proof}

\subsection{Robustification}

In this section, we describe a method
\textsc{Robustify} that takes any
online algorithm
which guarantees to complete each job
in $(1 - \delta)\jobLife$ time, that is,
with some slack to its deadline, and
turns it into a robust algorithm without
increasing the energy of the
schedule produced.
Here $\delta > 0$ can be chosen at will, but it impacts the robustness guarantee.
We remark that the slack constraint 
is easy to achieve:
\iffullversion{In Appendix \ref{sec:a_shrinking_lemma}}{In the full version
of the paper} we prove that decreasing $\jobLife$
to $(1 - \delta)\jobLife$ increases
the energy of the optimum schedule
only very mildly. Specifically, if we let $\OPT(\wReal, (1-\delta)\jobLife, T)$ and $\OPT(\wReal, \jobLife, T)$ denote the costs of optimal schedules of workload $\wReal$ with durations $(1-\delta)\jobLife$ and $\jobLife$, respectively, then:  
\begin{claim}\label{cla:shrink}
  For any instance $(\wReal, \jobLife, \maxTime)$
  we have that
$
      \OPT(\wReal, (1 - \delta)\jobLife, T)
      \le \frac{1}{(1 - \delta)^{\alpha-1}} \OPT(\wReal, \jobLife, T) .
 $
\end{claim}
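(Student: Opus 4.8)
The plan is to construct, for an arbitrary instance $(\wReal,\jobLife,\maxTime)$, a feasible schedule for the shortened instance $(\wReal,(1-\delta)\jobLife,\maxTime)$ whose energy is at most $(1-\delta)^{-(\alpha-1)}\cdot\OPT(\wReal,\jobLife,\maxTime)$; the claim then follows since $\OPT(\wReal,(1-\delta)\jobLife,\maxTime)$ is bounded by the energy of any such feasible schedule.

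\textbf{Starting point.} Take a normalized optimal schedule $\sigma^\star$ for $(\wReal,\jobLife,\maxTime)$ of the form guaranteed by the offline algorithm of~\citet{DBLP:conf/focs/YaoDS95} (as recalled in the Preparation paragraph): job $i$ is run at a uniform speed $c_i$ on an interval $[a_i,b_i]\subseteq[i,i+\jobLife]$ with $c_i(b_i-a_i)=\wReal_i$, and the intervals $[a_i,b_i]$ are pairwise disjoint. We may additionally assume (by an exchange argument, using that deadlines are increasing in the release order) that they appear along the time axis in index order, so $a_0\le b_0\le a_1\le b_1\le\cdots$. In particular $\int_0^\maxTime \sigma^\star(t)^\alpha\,dt=\sum_i c_i^\alpha(b_i-a_i)=\OPT(\wReal,\jobLife,\maxTime)$.

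\textbf{The new schedule and its deadlines.} For each $i$, run job $i$ at the faster uniform speed $c_i/(1-\delta)$ for a duration $\ell_i:=(1-\delta)(b_i-a_i)$; this still processes exactly $\wReal_i$ units of work. Pack these shorter intervals greedily to the left: set $\hat a_i:=\max\{i,\hat b_{i-1}\}$ and $\hat b_i:=\hat a_i+\ell_i$, with the convention $\hat b_{-1}:=-\infty$. By construction the intervals $[\hat a_i,\hat b_i]$ are pairwise disjoint and no job starts before its release time $i$. It remains to check the tightened deadline $\hat b_i\le i+(1-\delta)\jobLife$, and this is the crux. I would prove the invariant $\hat a_i\le(1-\delta)a_i+\delta i$ by induction on $i$: if the maximum defining $\hat a_i$ is attained at $i$, the inequality reduces to $i\le a_i$; if it is attained at $\hat b_{i-1}$, the inductive hypothesis gives $\hat a_i=\hat a_{i-1}+(1-\delta)(b_{i-1}-a_{i-1})\le(1-\delta)b_{i-1}+\delta(i-1)$, and disjointness together with the index ordering yields $b_{i-1}\le a_i$, which closes the induction. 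The invariant then gives $\hat b_i=\hat a_i+(1-\delta)(b_i-a_i)\le(1-\delta)b_i+\delta i\le(1-\delta)(i+\jobLife)+\delta i=i+(1-\delta)\jobLife$, as required. This is where the disjoint/EDF structure of $\sigma^\star$ is essential: without it, the sped-up jobs, repacked greedily, could be pushed past their shortened deadlines.

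\textbf{Energy.} Since the intervals $[\hat a_i,\hat b_i]$ are pairwise disjoint, at any time at most one job is running, so the energy of the new schedule equals $\sum_i(c_i/(1-\delta))^\alpha\,\ell_i=(1-\delta)^{1-\alpha}\sum_i c_i^\alpha(b_i-a_i)=(1-\delta)^{1-\alpha}\OPT(\wReal,\jobLife,\maxTime)$. As this schedule is feasible for $(\wReal,(1-\delta)\jobLife,\maxTime)$, the claim follows. The main obstacle is the deadline verification above; the feasibility of the processed work amounts and the final energy computation are routine.
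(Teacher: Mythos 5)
Your proof is correct, but it takes a genuinely different route from the paper's. The paper derives Claim~\ref{cla:shrink} as a special case of Lemma~\ref{lms:shrink} in Appendix~\ref{sec:a_shrinking_lemma}, which is proved for arbitrary release times and deadlines: there one records only, for each job, its total processing time $p_j$ in an optimal (isolated, uniform-speed) schedule, scales it to $(1-\delta)p_j$, and verifies feasibility for the shrunk deadlines through the Hall-type EDF criterion of Lemma~\ref{folklore_lemma}; no contiguity or ordering of the optimal schedule is needed. You instead exploit the uniform-deadline (agreeable) structure: a YDS-normalized optimum with disjoint, index-ordered intervals, sped up by $1/(1-\delta)$ and re-packed greedily to the left, with the invariant $\hat a_i\le(1-\delta)a_i+\delta i$ doing the work that the Hall condition does in the paper. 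Your induction and the energy computation are correct, and the argument is more elementary and self-contained for the uniform case; the price is that it is tied to the single-interval, index-ordered normalization, so it does not extend to general deadlines, which the paper needs elsewhere (Theorem~\ref{thm:gen_robustness} and the noise-tolerance analysis both reuse Lemma~\ref{lms:shrink}). One small caveat: you justify the index-order normalization by a one-line exchange argument, but a naive swap of two out-of-order jobs preserves neither the cost nor the one-interval-per-job structure, so as written this step is thinner than the rest. The claim is nevertheless true in the uniform case---for instance because a critical interval of the YDS algorithm has length at least $\jobLife$ and hence can never lie strictly inside another job's window, so the jobs released before, inside, and after a critical interval stay in index order---and alternatively you could sidestep it entirely by arguing, as the paper does, with total processing times and EDF rather than contiguous intervals.
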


Hence, running a consistent algorithm with $(1 - \delta)\jobLife$
will not increase the cost significantly.
Alternatively, we can run the online
algorithm with $\jobLife$, but increase the
generated speed function by $1/(1 - \delta)$ and reschedule all jobs using EDF. This also results
in a schedule where all jobs are completed
in $(1 - \delta)\jobLife$ time.

For a schedule $s$ of
$(\wReal, (1 - \delta)\jobLife, \maxTime)$
we define the $\delta$-convolution operator which returns the schedule $s^{(\delta)}$ of the original instance $(\wReal, \jobLife, \maxTime)$  by 
\begin{equation*}
s_i^{(\delta)}(t)=\frac{1}{\delta \jobLife} \int_{t-\delta \jobLife}^{t} s_i(r) \ dr
\end{equation*}
for each $i\in T$ (letting $s_i(r) = 0$ if $r<0$). 
See Figure~\ref{fig:convolution} for
an illustration.
The name comes from the fact that this operator is the convolution of $s_i(t)$ with the function $f(t)$ that takes value $1/(\delta \jobLife)$ if $0 \leq t \leq \delta \jobLife$ and value $0$ otherwise.

Next we state three key properties
of the convolution operator, all of
which follow from easy observations or
standard arguments that are deferred to
\iffullversion{Appendix \ref{sec:appendix_convolution}}{the full version of the paper}.
\begin{restatable}{claim}{CONVcla}
\label{cla:convolution1}
If $s$ is a feasible schedule for $(\wReal, (1 - \delta)\jobLife, \maxTime)$ then $s^{(\delta)}$ is a feasible schedule for $(\wReal, \jobLife, \maxTime)$.
\end{restatable}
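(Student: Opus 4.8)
The plan is to verify, one at a time, the defining conditions of a feasible schedule for $(\wReal, \jobLife, \maxTime)$ from Definition~\ref{convex program formulation} for the candidate $s^{(\delta)}$, taking the aggregate speed to be $s^{(\delta)}(t) := \sum_i s_i^{(\delta)}(t)$ so that the constraint relating the $s_i^{(\delta)}$ to $s^{(\delta)}$ holds by definition. Nonnegativity of each $s_i^{(\delta)}(t)$ is immediate, since it is a positive multiple of an integral of the nonnegative function $s_i$.

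Next I would check that $s^{(\delta)}$ respects the processing window, i.e.\ that job $i$ is processed only within $[i, i+\jobLife]$. Feasibility of $s$ for $(\wReal, (1-\delta)\jobLife, \maxTime)$ means that $s_i$ is supported on $[i, i+(1-\delta)\jobLife]$ (with $s_i(r)=0$ for $r<0$, as stipulated in the definition of the operator). Then $s_i^{(\delta)}(t)=\tfrac{1}{\delta\jobLife}\int_{t-\delta\jobLife}^{t}s_i(r)\,dr$ can be nonzero only when the window $[t-\delta\jobLife,t]$ meets $[i,i+(1-\delta)\jobLife]$, which forces $t\in[i,\,i+(1-\delta)\jobLife+\delta\jobLife]=[i,i+\jobLife]$, as required.

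The heart of the argument is the work-conservation constraint $\int_i^{i+\jobLife}s_i^{(\delta)}(t)\,dt=\wReal_i$. Here I would invoke Tonelli's theorem (all integrands are nonnegative) to swap the order of integration:
\begin{align*}
  \int_i^{i+\jobLife} s_i^{(\delta)}(t)\,dt
  &= \frac{1}{\delta\jobLife}\int_i^{i+\jobLife}\!\!\int_{t-\delta\jobLife}^{t} s_i(r)\,dr\,dt \\
  &= \frac{1}{\delta\jobLife}\int_{-\infty}^{\infty} s_i(r)\,\bigl|\{t\in[i,i+\jobLife]:\, r\le t\le r+\delta\jobLife\}\bigr|\,dr .
\end{align*}
Since $s_i(r)=0$ unless $r\in[i,i+(1-\delta)\jobLife]$, for every $r$ in the support we have $i\le r$ and $r+\delta\jobLife\le i+\jobLife$, so the inner set of admissible $t$ is exactly $[r,r+\delta\jobLife]$ and has length $\delta\jobLife$. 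The normalizing factor cancels and the expression reduces to $\int_i^{i+(1-\delta)\jobLife}s_i(r)\,dr=\wReal_i$ by feasibility of $s$.

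I expect the only delicate point to be that last length computation: it is precisely where one needs both that $s$ honours the \emph{shortened} deadline $(1-\delta)\jobLife$ and that $s_i$ vanishes before the release time $i$; if either failed, the inner $t$-interval could be strictly shorter than $\delta\jobLife$ for some $r$ in the support of $s_i$ and work would be lost. Everything else is routine bookkeeping with the integration limits.
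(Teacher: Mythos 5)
Your proposal is correct and follows essentially the same route as the paper's proof: an exchange of the order of integration (Fubini/Tonelli), with the key observation that for every $r$ in the support $[i,\,i+(1-\delta)\jobLife]$ of $s_i$ the window $[r,\,r+\delta\jobLife]$ lies entirely inside $[i,\,i+\jobLife]$, so the inner integral contributes exactly $\delta\jobLife$ and the work $\wReal_i$ is conserved. You merely spell out the nonnegativity and support checks that the paper leaves implicit.
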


\begin{restatable}{claim}{CONVclai}
\label{cla:convolution_energy}
The cost of schedule
$s^{(\delta)}$ is not higher
than that of $s$, that is,
\begin{equation*}
    \int_0^\maxTime (s^{(\delta)}(t))^\alpha dt
    \le 
    \int_0^\maxTime (s(t))^\alpha dt .
\end{equation*}
\end{restatable}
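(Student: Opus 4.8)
The plan is to observe that the $\delta$-convolution is nothing but a sliding average of the \emph{aggregate} speed function, and then to apply Jensen's inequality together with Tonelli's theorem. Summing the definition of $s_i^{(\delta)}$ over all jobs $i$ and exchanging the (finite) sum with the integral gives
\begin{equation*}
  s^{(\delta)}(t) \;=\; \sum_i s_i^{(\delta)}(t) \;=\; \frac{1}{\delta\jobLife}\int_{t-\delta\jobLife}^{t} s(r)\,dr \;=\; \int_0^{\delta\jobLife} s(t-u)\,\mu(du),
\end{equation*}
where $\mu$ denotes the uniform probability measure on $[0,\delta\jobLife]$ (density $1/(\delta\jobLife)$) and we use the convention $s(r)=0$ for $r<0$.

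Since $\alpha>1$, the map $x\mapsto x^\alpha$ is convex on $[0,\infty)$ and $s\ge 0$, so Jensen's inequality with respect to the probability measure $\mu$ yields $\bigl(s^{(\delta)}(t)\bigr)^\alpha \le \int_0^{\delta\jobLife} s(t-u)^\alpha\,\mu(du)$ pointwise in $t$. Integrating this over $t$ and interchanging the two integrals (legitimate by Tonelli, the integrand being nonnegative) gives
\begin{equation*}
  \int_0^{\maxTime}\bigl(s^{(\delta)}(t)\bigr)^\alpha\,dt \;\le\; \int_0^{\delta\jobLife}\Bigl(\int_{\R} s(t-u)^\alpha\,dt\Bigr)\,\mu(du),
\end{equation*}
where I have enlarged the outer integration domain from $[0,\maxTime]$ to $\R$ using nonnegativity. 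By translation invariance of the Lebesgue integral the inner integral equals $\int_{\R} s(r)^\alpha\,dr$ for every $u$, and this equals $\int_0^{\maxTime} s(r)^\alpha\,dr$ because any feasible schedule vanishes outside $[0,\maxTime]$ (every job released at time $i\le\maxTime-\jobLife$ is completed by time $i+\jobLife\le\maxTime$). Since $\mu$ is a probability measure, the right-hand side collapses to $\int_0^{\maxTime} s(r)^\alpha\,dr$, which is exactly the claimed inequality.

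There is essentially no obstacle here; the only points meriting a line of care are (i) justifying the interchange of the finite sum with the integral and of the double integral, both immediate from nonnegativity, and (ii) checking that the convolution neither imports speed from negative times nor exports energy past time $\maxTime$, which is handled by the convention $s(r)=0$ for $r<0$ together with the support of a feasible schedule lying in $[0,\maxTime]$. If one prefers to avoid measure-theoretic notation, the identical computation goes through verbatim with the explicit kernel $f$ that equals $1/(\delta\jobLife)$ on $[0,\delta\jobLife]$ and $0$ elsewhere, using the ordinary Fubini theorem for the interchange.
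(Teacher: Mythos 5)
Your proof is correct and is essentially the same argument as the paper's: write $s^{(\delta)}$ as the sliding average of the aggregate speed, apply Jensen's inequality pointwise to the convex map $x\mapsto x^\alpha$, and interchange the order of integration so the averaging kernel integrates out to at most one. Your explicit treatment of the boundary (extending to $\R$ and noting a feasible schedule vanishes outside $[0,\maxTime]$) is a slightly more careful rendering of the paper's final step, but not a different approach.
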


Let $s_i^{\mathrm{AVR}}(t)$ denote
the speed of workload $\wReal_i$ of the \textsc{Average Rate}
heuristic, that is,
$s_i^{\mathrm{AVR}}(t) = \wReal_i / \jobLife$
if $i \le t \le i + \jobLife$ and
$s_i^{\mathrm{AVR}}(t) = 0$ otherwise.
We relate $s^{(\delta)}_i(t)$ to
$s_i^{\mathrm{AVR}}(t)$.

\begin{restatable}{claim}{CONVclaim}
\label{cla:convolution_robust}
  Let $s$ be a feasible schedule for
  $(\wReal, (1 - \delta)\jobLife, \maxTime)$. Then
  $
      s^{(\delta)}_i(t) \le \frac{1}{\delta} s^{\mathrm{AVR}}_i(t).
  $
\end{restatable}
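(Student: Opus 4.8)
The plan is to bound the convolution $s^{(\delta)}_i(t)$ pointwise by exploiting two facts about the underlying schedule $s$ for $(\wReal, (1-\delta)\jobLife, \maxTime)$: first, that all of the workload $\wReal_i$ is processed within a window of length $(1-\delta)\jobLife$ starting at time $i$; and second, that the convolution averages $s_i$ over a window of length $\delta\jobLife$ ending at $t$. Fix a time $i$ and a time $t$. By definition,
\begin{equation*}
  s^{(\delta)}_i(t) = \frac{1}{\delta\jobLife}\int_{t-\delta\jobLife}^{t} s_i(r)\,dr
  \le \frac{1}{\delta\jobLife}\int_{i}^{i+(1-\delta)\jobLife} s_i(r)\,dr
  = \frac{\wReal_i}{\delta\jobLife},
\end{equation*}
where the inequality uses $s_i(r)\ge 0$ everywhere together with the fact that $s_i(r)=0$ outside $[i, i+(1-\delta)\jobLife]$ (feasibility of $s$ for the shortened-deadline instance), so integrating over the possibly larger set $[i, i+(1-\delta)\jobLife]$ only adds nonnegative mass, and the last equality is the feasibility constraint $\int s_i(r)\,dr = \wReal_i$. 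Since $s^{\mathrm{AVR}}_i(t) = \wReal_i/\jobLife$ on $[i, i+\jobLife]$, the bound $\wReal_i/(\delta\jobLife) = \tfrac1\delta s^{\mathrm{AVR}}_i(t)$ is exactly what we want, provided $t\in[i, i+\jobLife]$.

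The remaining step is to check that $s^{(\delta)}_i(t)$ can only be nonzero when $t\in[i, i+\jobLife]$, so that the comparison is against the nonzero part of $s^{\mathrm{AVR}}_i$. Indeed, $s^{(\delta)}_i(t)>0$ requires $s_i$ to be positive somewhere in $[t-\delta\jobLife, t]$, hence this interval must intersect $[i, i+(1-\delta)\jobLife]$, which forces $t\ge i$ and $t-\delta\jobLife \le i+(1-\delta)\jobLife$, i.e.\ $t\le i+\jobLife$. On this range $s^{\mathrm{AVR}}_i(t)=\wReal_i/\jobLife$ and the displayed chain of inequalities gives the claim; outside this range both sides are zero (or the left side is zero and the right side nonnegative).

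I do not expect a real obstacle here: the statement is essentially a one-line consequence of moving the mass $\wReal_i$, spread over a window of length $(1-\delta)\jobLife$, through an averaging kernel of width $\delta\jobLife$, and the only thing to be slightly careful about is the bookkeeping of which time intervals overlap so that we compare against the correct (nonzero) portion of $s^{\mathrm{AVR}}_i$. One could alternatively phrase the argument via the convolution identity $s^{(\delta)} = s * f$ with $\|f\|_\infty = 1/(\delta\jobLife)$ and $\int s_i = \wReal_i$, but the direct pointwise estimate above is cleaner and makes the support condition transparent.
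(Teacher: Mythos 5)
Your proposal is correct and follows essentially the same route as the paper: bound the averaged speed $s^{(\delta)}_i(t)$ by integrating $s_i$ over its entire support, giving $\wReal_i/(\delta\jobLife)=\tfrac1\delta s^{\mathrm{AVR}}_i(t)$. The only difference is that you make the support bookkeeping (that $s^{(\delta)}_i$ vanishes outside $[i,i+\jobLife]$, where $s^{\mathrm{AVR}}_i$ is also zero) explicit, which the paper leaves implicit.
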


By using that the competitive ratio
of \textsc{Average Rate} is at most $2^\alpha$ (see \iffullversion{Appendix \ref{sec:purely_online}}{the full version of this paper}), we get
\begin{equation*}
      \int_0^\maxTime (s^{(\delta)}(t))^\alpha dt \le \left(\frac{1}{\delta}\right)^\alpha
      \int_0^\maxTime 
      (s^{\mathrm{AVR}}(t))^\alpha dt\le \left(\frac{2}{\delta}\right)^\alpha
      \OPT .
\end{equation*}

\begin{figure}
    \centering
    \includegraphics[scale=0.45]{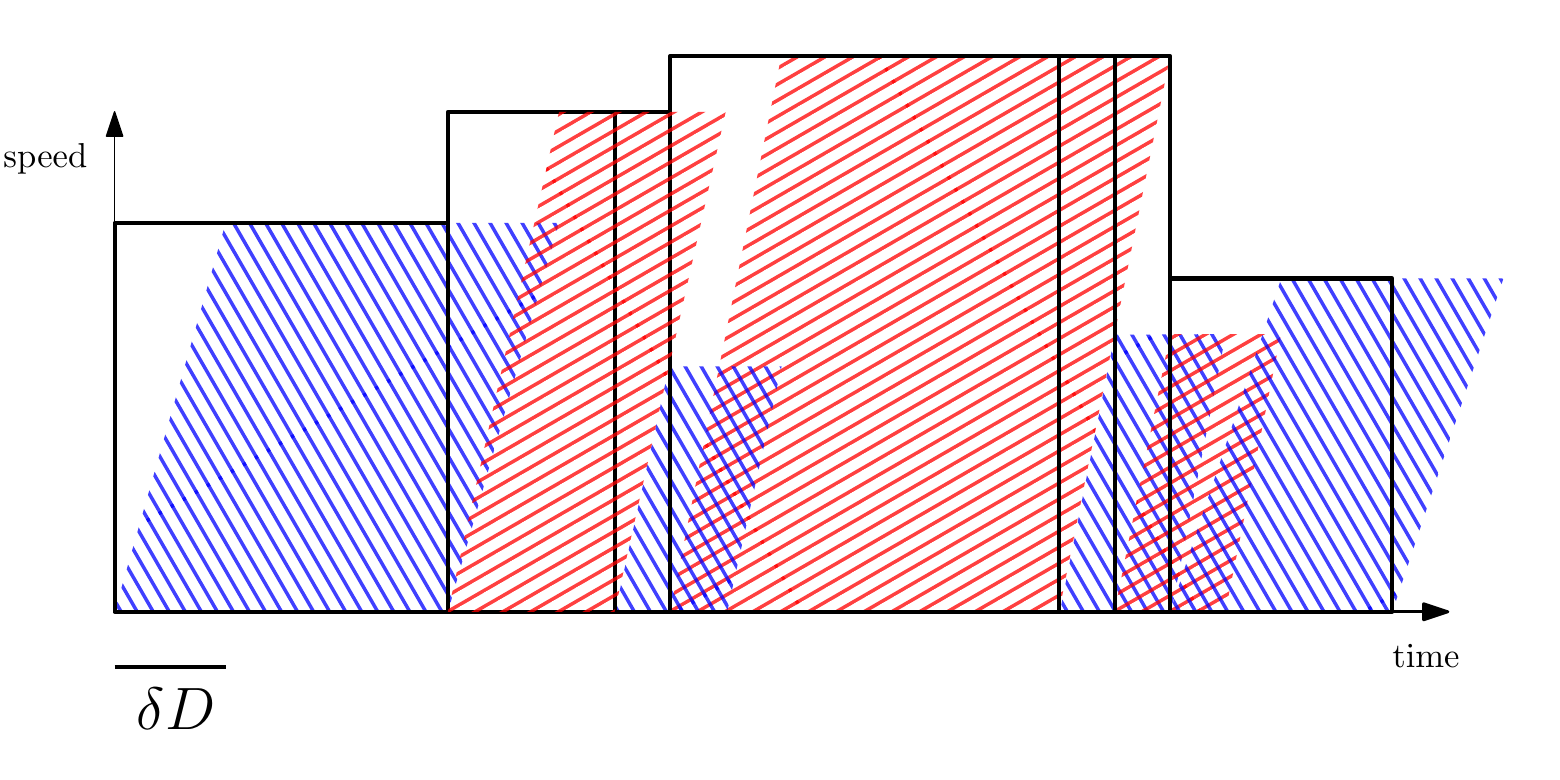}
    \caption{A schedule and its convolution.}
    \label{fig:convolution}
\end{figure}
We conclude with the following theorem,
which follows immediately from the
previous claims.
\begin{thm}\label{thm:cost-robustify}
  Given an online algorithm that
  produces a schedule $s$ for
  $(\wReal, (1 - \delta)\jobLife, \maxTime)$, we can compute online
  a schedule $s^{(\delta)}$ with
  \begin{equation*}
    \int_0^{\maxTime} (s^{(\delta)}(t))^\alpha dt
    \le 
    \min \left\lbrace 
    \int_0^{\maxTime} (s(t))^\alpha dt,\ 
    \left(\frac{2}{\delta}\right)^\alpha  \OPT  \right\rbrace .
  \end{equation*}
\end{thm}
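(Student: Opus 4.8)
The plan is to assemble the statement directly from the three properties of the $\delta$-convolution operator proved above, together with the $2^\alpha$-competitiveness of \textsc{Average Rate}. The first thing I would check is that $s^{(\delta)}$ is a legitimate output: a feasible schedule for the original instance $(\wReal, \jobLife, \maxTime)$ that is moreover computable online. Feasibility is exactly Claim~\ref{cla:convolution1}. For online computability, observe that $s^{(\delta)}_i(t) = \frac{1}{\delta\jobLife}\int_{t-\delta\jobLife}^{t} s_i(r)\,dr$ depends only on the values $s_i(r)$ for $r\in[t-\delta\jobLife, t]$, all of which lie weakly in the past of time $t$; since the underlying online algorithm has already committed to these values by time $t$, the convolved schedule is produced on the fly. (Consistently, the workload of job $i$ in $s^{(\delta)}$ is supported on $[i,\, i+(1-\delta)\jobLife+\delta\jobLife] = [i, i+\jobLife]$, so the original deadline is respected.)

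With $s^{(\delta)}$ identified as a valid online schedule, the two bounds in the minimum follow from the remaining claims. The first, $\int_0^{\maxTime}(s^{(\delta)}(t))^\alpha\,dt \le \int_0^{\maxTime}(s(t))^\alpha\,dt$, is precisely Claim~\ref{cla:convolution_energy}. For the second, I would combine Claim~\ref{cla:convolution_robust}, which gives $s^{(\delta)}_i(t)\le\frac1\delta s^{\mathrm{AVR}}_i(t)$ for every job $i$ and time $t$, summed over $i$, to obtain the pointwise bound $s^{(\delta)}(t)\le\frac1\delta s^{\mathrm{AVR}}(t)$, and hence
\[
  \int_0^{\maxTime}(s^{(\delta)}(t))^\alpha\,dt \;\le\; \frac{1}{\delta^\alpha}\int_0^{\maxTime}(s^{\mathrm{AVR}}(t))^\alpha\,dt \;\le\; \left(\frac{2}{\delta}\right)^\alpha\OPT,
\]
where the last inequality is the $2^\alpha$-competitiveness of \textsc{Average Rate} on $(\wReal,\jobLife,\maxTime)$. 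Taking the minimum of the two upper bounds yields the theorem.

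Since all three convolution claims and the competitiveness of \textsc{Average Rate} are already available, there is no substantial obstacle in this proof — it is genuinely a corollary. The only points that deserve a sentence of care are the ones mentioned above: verifying that the convolution window is strictly backward-looking so that $s^{(\delta)}$ is indeed online, and the small piece of bookkeeping that the support of each convolved job lands inside $[i,i+\jobLife]$ rather than $[i,i+(1-\delta)\jobLife]$, so that feasibility for the original (unshrunk) instance is what we conclude.
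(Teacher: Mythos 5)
Your proposal is correct and follows essentially the same route as the paper: the theorem is obtained as an immediate consequence of Claims~\ref{cla:convolution1}, \ref{cla:convolution_energy}, and \ref{cla:convolution_robust} together with the $2^\alpha$-competitiveness of \textsc{Average Rate}, exactly as you argue. Your extra remarks on online computability (the convolution window being backward-looking) and on the support of each convolved job landing in $[i, i+\jobLife]$ are sensible bookkeeping that the paper leaves implicit.
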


\subsection{Summary of the Algorithm}

\begin{algorithm}[tb]
   \caption{\textsc{Learning Augmented Scheduling (LAS)}}
   \label{alg:LAS}
\begin{algorithmic}
   \STATE {\bfseries Input:} $\maxTime$, $\jobLife$, and $\wPred$ initially and $\wReal$ in an online fashion
   \STATE {\bfseries Output:} A feasible schedule $(s_i)^{\maxTime-\jobLife}_{i=0}$
   \STATE Let $\delta > 0$ with $\big(\frac{1 + \delta}{1 - \delta}\big)^\alpha = 1 + \epsilon$.
   \STATE Compute optimal offline schedule for $(\wPred, T, (1 - \delta)D)$
     where the jobs $\wPred_i$ are run at uniform speeds $c_i$ an disjoint intervals $[a_i, b_i]$ using~\cite{DBLP:conf/focs/YaoDS95}.
   \\[1em]
   \WHILE{$\wReal_i$}
        \STATE Let $s'_i(t) = \begin{cases}
      \min\left\{\frac{\wReal_i}{b_i - a_i}, c_i\right\} &\text{if } t\in [a_i, b_i], \\
      0 &\text{otherwise.}
    \end{cases}$
        \STATE Let $s''_i(t) = \begin{cases}
      \frac{1}{\jobLife} \max\{0, \wReal_i - \wPred_i\} &\text{if } t\in [i, i + \jobLife], \\
      0 &\text{otherwise.}
    \end{cases}$
        \STATE Let $s_i(t) = \frac{1}{\delta\jobLife}\int_{t-\delta\jobLife}^t s'_i(r) + s''_i(r) \ dr$
   \ENDWHILE
\end{algorithmic}
\end{algorithm}

By combining \textsc{LAS-Trust} and \textsc{Robustify}, we obtain
an algorithm \textsc{LAS} (see Algorithm~\ref{alg:LAS}) which has the following properties. See \iffullversion{Appendix~\ref{omitted_proof_secthree}}{the full version of this paper} for a formal argument.
\begin{restatable}{thm}{LAS} 
 \label{thm:main_algorithm}
  For any given $\epsilon > 0$,
  algorithm
  \textsc{LAS} constructs
  a schedule of cost at most 
  $
      \min \left\{(1 + \epsilon)\OPT + O\left(\frac{\alpha}{\epsilon}\right)^\alpha
      \predErr, \ 
      O\left(\frac{\alpha}{\epsilon}\right)^\alpha \OPT \right\} .
  $
\end{restatable}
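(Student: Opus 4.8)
The plan is to combine the two modular building blocks, Theorem~\ref{thm:cost-trust} (for \textsc{LAS-Trust}) and Theorem~\ref{thm:cost-robustify} (for \textsc{Robustify}), instantiated with the slack parameter $\delta$ chosen so that $\big(\tfrac{1+\delta}{1-\delta}\big)^\alpha = 1 + \epsilon$. First I would observe that running \textsc{LAS-Trust} with deadlines shrunk to $(1-\delta)\jobLife$ produces, by Theorem~\ref{thm:cost-trust} applied to the instance $(\wReal, (1-\delta)\jobLife, \maxTime)$ together with Claim~\ref{cla:shrink}, a schedule $s$ of cost at most
\begin{equation*}
  (1+\delta)^\alpha \OPT(\wReal,(1-\delta)\jobLife,\maxTime) + (12/\delta)^\alpha \predErr
  \le \Big(\tfrac{1+\delta}{1-\delta}\Big)^\alpha \OPT + (12/\delta)^\alpha \predErr
  = (1+\epsilon)\OPT + (12/\delta)^\alpha \predErr,
\end{equation*}
where $\OPT = \OPT(\wReal,\jobLife,\maxTime)$. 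Here I should double-check that the smoothness bound of Theorem~\ref{thm:cost-trust} does in fact only use properties of \textsc{LAS-Trust} that survive the deadline shrink — it does, since the shrink only changes the intervals $[a_i,b_i]$ and the error term is unchanged.

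Next I would feed this schedule $s$ into \textsc{Robustify}. Since $s$ is a feasible schedule for $(\wReal,(1-\delta)\jobLife,\maxTime)$, Theorem~\ref{thm:cost-robustify} gives a schedule $s^{(\delta)}$ for the original instance $(\wReal,\jobLife,\maxTime)$ whose cost is at most
\begin{equation*}
  \min\left\{ \int_0^\maxTime (s(t))^\alpha\,dt,\ (2/\delta)^\alpha \OPT \right\}
  \le \min\left\{ (1+\epsilon)\OPT + (12/\delta)^\alpha \predErr,\ (2/\delta)^\alpha \OPT \right\}.
\end{equation*}
This is exactly the \textsc{LAS} pseudocode in Algorithm~\ref{alg:LAS}, since the convolution there is applied to $s'_i + s''_i$, which is the \textsc{LAS-Trust} schedule for the shrunk instance.

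Finally I would translate the $\delta$-dependence into the $\alpha,\epsilon$-dependence claimed in the statement. From $\big(\tfrac{1+\delta}{1-\delta}\big)^\alpha = 1+\epsilon$ one gets $\tfrac{1+\delta}{1-\delta} = (1+\epsilon)^{1/\alpha}$, hence $\delta = \tfrac{(1+\epsilon)^{1/\alpha}-1}{(1+\epsilon)^{1/\alpha}+1}$. A short estimate using $(1+\epsilon)^{1/\alpha} - 1 = \Theta(\epsilon/\alpha)$ for $\epsilon$ in any bounded range (and $\Theta(1)$ otherwise, in which case $\delta = \Theta(1)$ as well) shows $1/\delta = O(\alpha/\epsilon)$, so both $(12/\delta)^\alpha$ and $(2/\delta)^\alpha$ are $O(\alpha/\epsilon)^\alpha$, yielding the stated bound
\begin{equation*}
  \min\left\{(1+\epsilon)\OPT + O\!\left(\tfrac{\alpha}{\epsilon}\right)^\alpha \predErr,\ O\!\left(\tfrac{\alpha}{\epsilon}\right)^\alpha \OPT\right\}.
\end{equation*}

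The only genuinely delicate point — really the one place a careless argument could go wrong — is the interaction between the deadline-shrink and the robustification: one must be sure that \textsc{Robustify} is applied to a schedule that genuinely respects the $(1-\delta)\jobLife$ deadlines (so that Claim~\ref{cla:convolution_robust} and hence the $(2/\delta)^\alpha\OPT$ robustness bound hold), while simultaneously the consistency/smoothness bound is measured against $\OPT$ for the \emph{original} deadline $\jobLife$; Claim~\ref{cla:shrink} is exactly what bridges these two, at the cost of the $1/(1-\delta)^{\alpha-1}$ factor that is already absorbed into the $\big(\tfrac{1+\delta}{1-\delta}\big)^\alpha$ term above. Everything else is bookkeeping: verifying that the composed algorithm is implementable online (both \textsc{LAS-Trust} and the convolution are online, as noted in the respective subsections) and that the constants compose as claimed.
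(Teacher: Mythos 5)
Your proposal is correct and follows essentially the same route as the paper's own proof: choose $\delta$ with $\big(\tfrac{1+\delta}{1-\delta}\big)^\alpha = 1+\epsilon$, apply Theorem~\ref{thm:cost-trust} on the shrunk instance together with Claim~\ref{cla:shrink}, then compose with Theorem~\ref{thm:cost-robustify} and translate $1/\delta = O(\alpha/\epsilon)$. Your explicit derivation of $\delta = \frac{(1+\epsilon)^{1/\alpha}-1}{(1+\epsilon)^{1/\alpha}+1}$ even makes the $\delta = \Theta(\epsilon/\alpha)$ step slightly more careful than the paper's one-line remark.
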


\subsection{Other Extensions}
\label{sec:other}

In \iffullversion{Appendix~\ref{sec:general_deadlines}}{the full version} we also consider General Speed Scheduling (the problem with general deadlines) and show that a more sophisticated method allows us to robustify
any algorithm even in this more general setting.
Hence, for this case we can also obtain an
algorithm that is almost optimal in the consistency case and always robust.

The careful reader may have noted that one can craft instances so that the used error function $\predErr$ is very sensitive to small shifts in the prediction.  
An illustrative example is as follows. Consider a predicted workload $\wPred$  defined by $\wPred_i = 1$ for those time steps $i$ that are divisible by a large constant, say $1000$, and let $\wPred_i =0$ for all other time steps. 
If the real instance $\wReal$ is a small shift  of $\wPred$ say $\wReal_{i+1}= \wPred_i$ then the prediction error $\predErr(\wReal, \wPred)$ is large although $\wPred$ intuitively forms a good prediction of $\wReal$. 
To overcome this sensitivity, we first generalize  the definition of $\predErr$ to $\predErr_\eta$ which is tolerant to small shifts in the workload. In particular, $\predErr_\eta(\wReal, \wPred) =0$ for the example given above.  We then give a generic method for transforming an algorithm so as to obtain guarantees with respect to $\predErr_\eta$ instead of $\predErr$ at a small loss. Details can be found in 
\iffullversion{Appendix~\ref{sec:noise_tolerance}.}
{the full version of the paper.}

\section{Experimental analysis}
\label{sec:Experimental analysis}
\begin{figure*}[!htb]
\centering
\centerline{\includegraphics[width=0.9\textwidth]{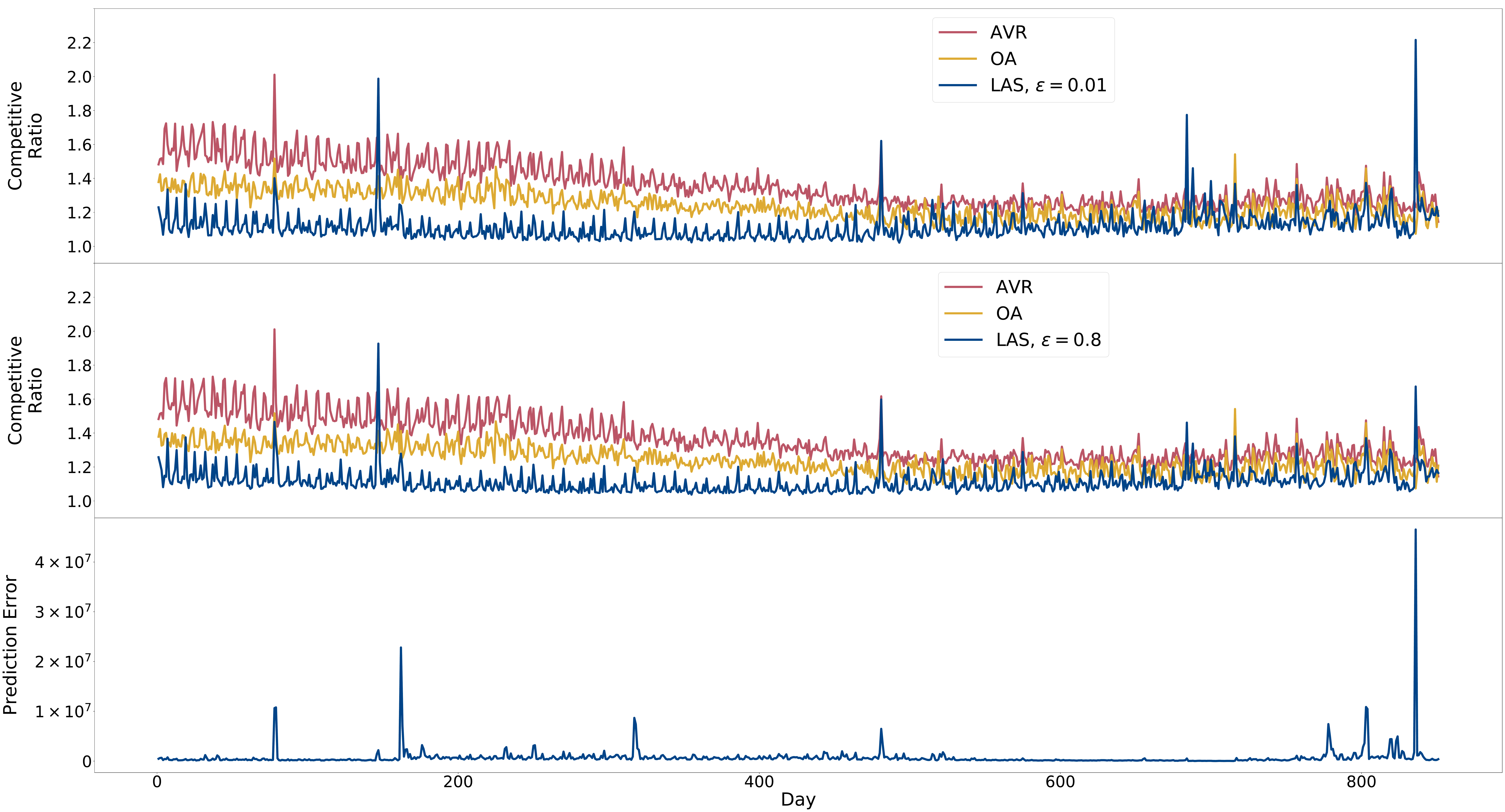}}
    \vskip 0.05in
\caption{From top to bottom: The first two graphs show the performance of LAS when $\epsilon = 0.01$ and $\epsilon = 0.8$ with respect to the online algorithms \textsc{AVR} and \textsc{OA}. The bottom graph presents the prediction error. The timeline was discretized in chunks of ten minutes and $D$ was set to 20.}
    \label{fig:realdataset}
    \vskip 0.05in
\end{figure*}

In this section, we will test the LAS algorithm on both synthetic and real datasets. We will calculate the competitive ratios with respect to the offline optimum.
We fix $\alpha = 3$ in all our experiments as this value models the power consumption of modern processors (see~\citet{DBLP:journals/jacm/BansalKP07}). For each experiment, we compare our LAS algorithm to the three main online algorithms that exist for this problem which are AVR and OA by~\citet{DBLP:conf/focs/YaoDS95} and BKP by~\citet{DBLP:journals/jacm/BansalKP07}. We note that the code is publicly available at \url{https://github.com/andreasr27/LAS}.

\paragraph{Artificial datasets.} In the synthetic data case, we will mimic the request pattern of a typical data center application by simulating a bounded random walk. In the following we write $Z\sim \mathcal{U}\{m,M \}$ when sampling an integer uniformly at random in the range $[m,M]$. Subsequently, we fix three integers $s,m,M$ where $[m,M]$ define the range in which the walk should stay.
For each integral time $i$ we sample $X_i\sim \mathcal{U}\{-s,s\}$. Then we set $w_{0}\sim \mathcal{U}\{m,M\}$ and $w_{i+1}$ to be the median value of the list $\{m,w_i+X_i,M\}$, that is, if the value $w_i+X_i$ remains in the predefined range we do not change it, otherwise we round it to the closest point in the range. For this type of ground truth instance we test our algorithm coupled with three different predictors. The \textbf{accurate} predictor for which we set $\Tilde w_i \sim w_i+\mathcal{U}\{-s,s \}$, the \textbf{random} predictor where we set $\Tilde w_i \sim \mathcal{U}\{m,M \}$ and the \textbf{misleading} predictor for which $\Tilde w_i = (M-w_i) + m$. In each case we perform 20 experiment runs.
The results are summarized in Table~\ref{table:normal regime results}.
In the first two cases (accurate and random predictors) we present the average competitive ratios of every algorithm over all runs.
In contrast, for the last column (misleading predictor)
we present the maximum competitive ratio of each algorithm taken over the 20 runs
to highlight the worst case robustness of LAS.
We note that in the first case, where the predictor is relatively accurate but still noisy, LAS is consistently better than any online algorithm achieving a competitive ratio close to $1$ for small values of $\epsilon$. In the second case, the predictor does not give us useful information about the future since it is completely uncorrelated with the ground truth instance. In such a case, LAS experiences a similar performance to the best online algorithms. In the third case, the predictor tries to mislead our algorithm by creating a prediction which constitutes a symmetric (around $(m+M)/2$) random walk with respect to the true instance. When coupled with such a predictor, as expected, LAS performs worse than the best online algorithm, but it still maintains an acceptable competitive ratio.
Furthermore, augmenting the robustness parameter $\epsilon$, and thereby
trusting less the predictor, improves the competitive ratio in this case.

\begin{table}
\centering
\caption{Artificial dataset results}
\vskip 0.1in
\label{table:normal regime results}
{\small
\begin{tabular}{lcccr}
\toprule
Algorithm & Accurate & Random & Misleading \\
\midrule
AVR     & 1.268 & 1.268 & 1.383 \\
BKP     & 7.880     & 7.880     & 10.380   \\
OA     & 1.199     & 1.199     & 1.361   \\
LAS, $\epsilon = 0.8$    & 1.026& 1.203& 1.750\\
LAS, $\epsilon = 0.6$    & 1.022& 1.207& 1.758\\
LAS, $\epsilon = 0.4$    & 1.018& 1.213& 1.767\\
LAS, $\epsilon = 0.2$    & 1.013& 1.224& 1.769\\
LAS, $\epsilon = 0.01$    & 1.008& 1.239& 1.766\\
\bottomrule
\end{tabular}
}
\caption*{We used $m=20$, $M=80$, $s=5$, $T =220$ and $D = 20$.}
\vskip -0.2in
\end{table}

\paragraph{Real dataset.}
We provide additional evidence that the \textsc{LAS} algorithm outperforms purely online algorithms by conducting experiments on the login requests to \textit{BrightKite} \cite{Brightkite}, a no longer functioning social network. We note that this dataset was previously used in the context of learning augmented algorithms by~\citet{DBLP:conf/icml/LykourisV18}. In order to emphasize the fact that even a very simple predictor can improve the scheduling performance drastically, we will use the arguably most simple predictor possible. We use the access patterns of the previous day as a prediction for the current day. In Figure~\ref{fig:realdataset} we compare the performance of the \textsc{LAS} algorithm for different values of the robustness parameter $\epsilon$ with respect to \textsc{AVR} and \textsc{OA}. We did not include \textsc{BKP}, since its performance is substantially worse than all other algorithms. Note that our algorithm shows a substantial improvement with respect to both \textsc{AVR} and \textsc{OA}, while maintaining a low competitive ratio even when the prediction error is high (for instance in the last days). The first $100$ days, where the prediction error is low, by setting $\epsilon = 0.01$ (and trusting more the prediction) we obtain an average competitive ratio of ${1.134}$, while with $\epsilon = 0.8$ the average competitive ratio slightly deteriorates to $1.146$. However, when the prediction error is high, setting $\epsilon = 0.8$ is better. On average from the first to the last day of the timeline, the competitive ratio of AVR and OA is $1.36$ and $1.24$ respectively, while LAS obtains an average competitive ratio of $1.116$ when $\epsilon = 0.01$ and $1.113$ when $\epsilon=0.8$, thus beating the online algorithms in both cases.

More experiments regarding the influence of the $\alpha$ parameter in the performance of \textsc{LAS} algorithm can be found in \iffullversion{Appendix~\ref{sec:Additional Experiments}.}
{the full version of the paper.}

\section*{Broader impact} As climate change is a severe issue, trying to minimize the environmental impact of modern computer systems has become a priority. High energy consumption and the CO\textsubscript{2} emissions related to it are some of the main factors increasing the environmental impact of computer systems. While our work considers a specific problem related to scheduling, we would like to emphasize that a considerable percentage of real-world systems already have the ability to dynamically scale their computing resources\footnote{CPU Dynamic Voltage and Frequency Scaling (DVFS) in modern processors and autoscaling of cloud applications} to minimize their energy consumption. Thus, studying models (like the one presented in this paper) with the latter capability is a line of work with huge potential societal impact. In addition to that, although the analysis of the guarantees provided by our algorithm is not straightforward, the algorithm itself is relatively simple. The latter fact makes us optimistic that insights from this work can be used in practice contributing to minimizing the environmental impact of computer infrastructures.

\section*{Acknowledgments and Disclosure of Funding}
This research is supported by the Swiss National Science Foundation project 200021-184656 ``Randomness in Problem Instances and Randomized Algorithms''. Andreas Maggiori was supported by the Swiss National Science Fund (SNSF) grant n\textsuperscript{o} $200020\_182517/1$ ``Spatial Coupling of Graphical Models in Communications, Signal Processing, Computer Science and Statistical Physics''.

\bibliographystyle{plainnat}
\bibliography{main}

\iffullversion{
\clearpage

\appendix
\section{Omitted Proofs from Section~\ref{sec:algorithm}}
\label{omitted_proof_secthree}

\LAS*
\begin{proof}
We choose $\delta$ such that
$(\frac{1 + \delta}{1 - \delta})^\alpha = 1 + \epsilon$. Note that $\delta \le \epsilon/(6\alpha)$.
By Claim~\ref{cla:shrink} we know
that
\begin{equation*}
    \OPT(\wReal, (1 - \delta)\jobLife, \maxTime)
    \le \left(\frac{1}{1 - \delta}\right)^\alpha \OPT .
\end{equation*}
Hence, by Theorem~\ref{thm:cost-trust} algorithm \textsc{LAS-Trust} constructs
a schedule with cost at most
\begin{equation*}
    \left(\frac{1 + \delta}{1 - \delta}\right)^\alpha \OPT + O\left(\frac{1}{\delta}\right)^\alpha \predErr
\end{equation*}
Finally, we apply \textsc{Robustify} and
with Theorem~\ref{thm:cost-robustify} obtain a bound of 
\begin{multline*}
    \min\left\{
    \left(\frac{1 + \delta}{1 - \delta}\right)^\alpha \OPT + O\left(\frac{1}{\delta}\right)^\alpha \predErr,\ 
    O\left(\frac{1}{\delta}\right)^\alpha \OPT \right\} \\
    \le
    \min\left\{
    (1 + \epsilon) \OPT + O\left(\frac{\alpha}{\epsilon}\right)^\alpha \predErr,\ 
    O\left(\frac{\alpha}{\epsilon}\right)^\alpha \OPT \right\} . \qedhere
\end{multline*}
\end{proof}

\section{Pure online algorithms for uniform deadlines}
\label{sec:purely_online}
Since most related results concern the general speed scaling problem, we
give some insights about the uniform speed scaling problem in the online setting without predictions. We first give a lower bound on the competitive ratio for any online algorithm for the simplest case where $D=2$ and then provide an almost tight analysis of the competitive ratio of \textsc{AVR}.
\begin{thm}\label{thm:pure-online-lower-bound}
There is no (randomized) online algorithm with an (expected) competitive ratio better
than $\Omega \left(\left(6/5\right)^\alpha\right)$.
\end{thm}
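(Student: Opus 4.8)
\emph{Proof plan.} The plan is to apply Yao's principle to a hard distribution supported on just two very short instances, both with $\jobLife = 2$. Let $I_1$ consist of a single job of workload $1$ released at time $0$ (so it must be processed within $[0,2]$), and let $I_2$ be $I_1$ together with a second job of workload $2$ released at time $1$ (processed within $[1,3]$); thus $\maxTime = 3$. Since all workloads are fixed, convexity of $s\mapsto s^\alpha$ makes the two offline optima easy to pin down: $\OPT(I_1) = 2\cdot(1/2)^\alpha = 2^{1-\alpha}$ (run at speed $1/2$ on all of $[0,2]$), and $\OPT(I_2) = 3$ (run at speed $1$ on each of $[0,1]$, $[1,2]$, $[2,3]$, doing job $0$ entirely in $[0,1]$ and splitting job $1$ evenly over the last two slots — this is feasible and already matches the trivial bound obtained from $\int_0^3 s(t)\,dt = 3$ together with Jensen's inequality).

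Next I would note that, as far as these two instances are concerned, a deterministic online algorithm is determined by a single quantity $y\in[0,1]$: the amount of job $0$ it commits to the interval $[0,1]$. This commitment is irrevocable at time $0$, before the algorithm learns which of $I_1, I_2$ it faces, and by Jensen's inequality its energy on $[0,1]$ is at least $y^\alpha$. On $I_1$ the remaining $1-y$ units of job $0$ must be processed within $[1,2]$, so the algorithm's cost is at least $y^\alpha + (1-y)^\alpha$, giving a ratio $r_1(y) := \big((2y)^\alpha + (2-2y)^\alpha\big)/2 \ge (2y)^\alpha/2$. On $I_2$, from time $1$ on the algorithm must solve the residual instance ``$1-y$ units due by time $2$ and $2$ units due by time $3$''; optimizing (by convexity, using constant speed per slot) the split of the size-$2$ job over $[1,2]$ and $[2,3]$ shows this residual energy is at least $2\big((3-y)/2\big)^\alpha$, so the cost on $I_2$ is at least $y^\alpha + 2\big((3-y)/2\big)^\alpha$, a ratio $r_2(y) \ge \tfrac{2}{3}\big((3-y)/2\big)^\alpha$.

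The crux is the threshold $y = 3/5$, where $2y = (3-y)/2 = 6/5$ and the two threats balance: for every $y\ge 3/5$ we get $r_1(y)\ge \tfrac12(6/5)^\alpha$, and for every $y\le 3/5$ we get $r_2(y)\ge \tfrac23(6/5)^\alpha$. To extend to randomized algorithms I would use that a randomized online algorithm is a mixture over its deterministic realizations, so a $c$-competitive randomized algorithm satisfies $c\ge \mathbb{E}[r_1(y)]$ and $c\ge \mathbb{E}[r_2(y)]$ for the induced distribution of $y$; taking the convex combination with weights $4/7$ and $3/7$ gives $c\ge \mathbb{E}\big[\tfrac47 r_1(y)+\tfrac37 r_2(y)\big]\ge \min_y\big(\tfrac47 r_1(y)+\tfrac37 r_2(y)\big)\ge \tfrac27(6/5)^\alpha$, since on each side of the threshold one of the two terms alone already contributes $\tfrac27(6/5)^\alpha$. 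This is $\Omega((6/5)^\alpha)$.

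The one genuinely load-bearing step is the choice of instance in the first paragraph: the second job must have workload exactly $2$, so that the ``you under-ran job $0$'' penalty on $I_2$ and the ``you over-ran job $0$'' penalty on $I_1$ cross at the common exponential base $6/5$ (with a workload-$1$ second job one only gets base $12/11$, and a much larger second job swamps job $0$ and gives a base tending to $1$). The residual-optimality computation for $I_2$ and the care needed to pass from deterministic to randomized — one cannot simply swap $\max$ and expectation, hence the explicit convex combination above — are the only other points of substance, and both are routine.
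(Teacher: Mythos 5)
Your proof is correct and follows essentially the same route as the paper: the same pair of instances (workload $1$ at time $0$, workload $2$ at time $1$, $D=2$), the same case analysis on how much of the first job is processed before time $1$, and the same base $6/5$. The only differences are cosmetic — you fix the threshold $y=3/5$ where the two bounds coincide instead of solving for the exact balancing value of $\gamma$, and you handle randomization by an explicit convex-combination (averaging) argument rather than invoking Yao's minimax principle with the uniform distribution, both of which yield the same $\Omega((6/5)^\alpha)$ conclusion.
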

\begin{proof}
Consider $D = 2$ and two instances $J_1$ and $J_2$. Instance $J_1$ consists of only one job that is released at time $0$ with workload $1$ and $J_2$ consists of the same first job with a second job which starts at time $1$ with workload $2$.

In both instances, the optimal schedule runs with uniform speed at all time.
In the first instance, it runs the single job for $2$ units of time at speed $1/2$.
The energy-cost is therefore $1/2^{\alpha-1}$.
In the second instance, it first runs the first job at speed $1$ for one unit of time
and then the second job at speed $1$ for $2$ units of time.
Hence, it has an energy-cost of $3$.

Now consider an online algorithm. Before time $1$ both instances are identical and the
algorithm therefore behaves the same. In particular, it has to decide how much work of job $1$
to process between time $0$ and $1$. Let us fix some $\gamma\geq 0$ as a threshold for the amount of work dedicated to job $1$ by the algorithm before time $1$. We have the following two cases depending on the instance.
\begin{enumerate}
    \item If the algorithm processes more that $\gamma$ units of work on job $1$ before time $1$ then for instance $J_1$ the energy cost is at least $\gamma^\alpha$. Hence the competitive ratio is at least
    $\gamma^\alpha \cdot 2^{\alpha-1}$.
    \item On the contrary, if the algorithm works less than $\gamma$ units of work before the release of the second job then in instance $J_2$ the algorithm has to complete at least $3-\gamma$ units of work between time $1$ and $3$. Hence, its competitive ratio is at least $2/3 \cdot ((3-\gamma) / 2)^\alpha$.
\end{enumerate}

Choosing $\gamma$ such that these two competitive ratios are equal gives $\gamma=\frac{3}{3^{1/\alpha}4^{1-1/\alpha}+1}$ and yields a lower bound on the competitive ratio of at least:
\begin{equation*}
    2^{\alpha-1} \left(\frac{3}{3^{1/\alpha} 4^{1-1/\alpha}+1}\right)^\alpha .
\end{equation*}
This term asymptotically approaches $1/2 \cdot (6/5)^\alpha$ and this already proves the theorem for deterministic algorithms. More precisely, it proves that any deterministic algorithm has a competitive ratio of at least $\Omega \left((6/5)^\alpha\right)$ on at least one of the two instances $J_1$ or $J_2$.
Hence, by defining a probability distribution over inputs such that $p(J_1)=p(J_2)=\frac{1}{2}$ and applying Yao's minimax principle we get that the expected competitive ratio of any randomized online algorithm is at least 

\begin{equation*}
    (1/2)\cdot 2^{\alpha-1} \left(\frac{3}{3^{1/\alpha} 4^{1-1/\alpha}+1}\right)^\alpha .
\end{equation*}
which again gives $\Omega\left((6/5)^\alpha \right)$ as lower bound, this time against randomized algorithms.
\end{proof}

We now turn ourselves to the more specific case of the AVR algorithm with the following two results. We recall that the AVR algorithm was shown to be $2^{\alpha-1}\cdot \alpha^\alpha$-competitive by~\citet{DBLP:conf/focs/YaoDS95} in the general deadlines case. In the case of uniform deadlines, the competitive ratio of AVR is actually much better and proofs are much less technical than the original analysis of Yao et al. Recall that for each job $i$ with workload $w_i$, release $r_i$, and deadline $d_i$ ; AVR defines a speed $s_i(t)=w_i/(d_i-r_i)$ if $t\in [r_i,d_i]$ and $0$ otherwise.
\begin{thm}
AVR is $2^\alpha$-competitive for the uniform speed scaling problem.
\label{thm:AVR_ratio}
\end{thm}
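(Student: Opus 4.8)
The plan is to compare \textsc{AVR} directly against an arbitrary optimal offline schedule through a single convexity estimate, exploiting the crucial feature of the uniform‑deadline case: every job's feasible window $[i,i+D]$ has the \emph{same} length $D$. Fix an optimal schedule and write $\sigma_i(t)$ for the rate it devotes to job $i$; thus $\sigma_i$ is supported on $[i,i+D]$, $\int_i^{i+D}\sigma_i(r)\,dr=w_i$, $\sigma(t)=\sum_i\sigma_i(t)$, and $\OPT=\int_0^T\sigma(t)^\alpha\,dt$. The \textsc{AVR} speed is $s^{\mathrm{AVR}}(t)=\frac1D\sum_{i:\,i\le t\le i+D}w_i$. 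The first step is to establish the pointwise domination
\[
  s^{\mathrm{AVR}}(t)\;\le\;\frac1D\int_{t-D}^{t+D}\sigma(r)\,dr\qquad\text{for all }t\in[0,T].
\]

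To see this, consider a job $i$ that is active at time $t$, i.e.\ $i\le t\le i+D$. Then $i\ge t-D$ and $i+D\le t+D$, so the whole support $[i,i+D]$ of $\sigma_i$ is contained in $[t-D,t+D]$, whence $\int_{t-D}^{t+D}\sigma_i(r)\,dr=w_i$. Summing this identity over all jobs active at $t$ and discarding the nonnegative contributions of the inactive jobs gives $\frac1D\int_{t-D}^{t+D}\sigma(r)\,dr\ge\frac1D\sum_{i:\,i\le t\le i+D}w_i=s^{\mathrm{AVR}}(t)$. This is exactly the place where uniform deadlines enter: with general deadlines a job active at $t$ may have an arbitrarily long window that is not contained in $[t-D,t+D]$, and the estimate breaks.

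The second step is a Jensen‑plus‑Fubini computation. Writing $\frac1D\int_{t-D}^{t+D}\sigma(r)\,dr=2\cdot\big(\tfrac{1}{2D}\int_{t-D}^{t+D}\sigma(r)\,dr\big)$ and applying Jensen's inequality to the convex map $x\mapsto x^\alpha$ on the average of $\sigma$ over the length‑$2D$ interval $[t-D,t+D]$ yields $\big(\tfrac1D\int_{t-D}^{t+D}\sigma(r)\,dr\big)^\alpha\le 2^\alpha\cdot\tfrac{1}{2D}\int_{t-D}^{t+D}\sigma(r)^\alpha\,dr$. Integrating over $t\in[0,T]$, combining with the domination above, and swapping the order of integration, each value $r$ is counted for a set of $t$ of measure at most $2D$ (those with $|t-r|\le D$), so
\[
  \int_0^T s^{\mathrm{AVR}}(t)^\alpha\,dt\;\le\;2^\alpha\cdot\frac{1}{2D}\int_0^T\sigma(r)^\alpha\cdot(2D)\,dr\;=\;2^\alpha\,\OPT,
\]
which is the claimed bound.

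The whole argument is short, and the step I would be most careful about is the pointwise domination: one must check that it survives boundary effects — jobs or averaging windows that spill outside $[0,T]$ — but since the right‑hand side only grows when the window or the support is enlarged, and valid job windows already lie inside $[0,T]$, these boundary terms never hurt. Everything else is routine, so I do not expect a genuine obstacle; the main point is simply spotting that the uniform‑length windows make the single convolution‑style inequality available, which replaces the much more delicate amortized analysis needed in the general‑deadline case.
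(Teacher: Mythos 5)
Your proof is correct, but it takes a genuinely different route from the paper's. The paper establishes the stronger \emph{pointwise} bound $s_{\mathrm{AVR}}(t)\le 2\,s_{\OPT}(t)$: it first normalizes the optimal schedule so that each job $j$ runs isolated at uniform speed for a total time $p^*_j$, then argues that any job active at time $t$ is run by the optimum at speed at most $s_{\OPT}(t)$ (a monotonicity property of the normalized optimum), and finally uses that all jobs active at $t$ are processed inside $[t-D,t+D]$, so $\sum p^*_j\le 2D$ and hence $\sum w_j\le 2D\,s_{\OPT}(t)$. You instead compare $s^{\mathrm{AVR}}(t)$ to the sliding average $\frac1D\int_{t-D}^{t+D}\sigma(r)\,dr$ of an \emph{arbitrary} feasible (in particular optimal) schedule, using only that each active job's window $[i,i+D]$ sits inside $[t-D,t+D]$, and then conclude by Jensen plus Fubini. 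Your argument needs no structural facts about the optimum (no isolation/uniform-speed normalization, no speed-monotonicity claim) and is in the same convolution-and-convexity spirit as the paper's \textsc{Robustify} claims, which makes it arguably more elementary and more robust; what it gives up is the pointwise speed domination $s_{\mathrm{AVR}}(t)\le 2\,s_{\OPT}(t)$, which is stronger than the integrated bound $\int s_{\mathrm{AVR}}^\alpha \le 2^\alpha\,\OPT$ — though for the competitive ratio (and for every place the paper invokes it) the integrated bound suffices. Your handling of boundary effects is also fine, since $\sigma$ vanishes outside $[0,T]$ and enlarging the averaging window only increases the right-hand side.
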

\begin{proof}
Let $(w,D,T)$ be a job instance and $s_{\OPT}$ be the speed function of the optimal schedule for this instance. Let $s_{\mathrm{AVR}}$ be the speed function produced by the \textsc{Average Rate} heuristic on the same instance.
It suffices to show that for any time $t$ we have
\begin{equation*}
s_{\mathrm{AVR}}(t)\leq 2 \cdot s_{\OPT}(t) .
\end{equation*}
Fix some $t$.
We assume w.l.o.g.\ that the optimal schedule runs each job $j$ isolated for a total time of $p^*_j$.
By optimality of the schedule, the speed during this time is uniform, i.e.,
exactly $w_j / p^*_j$.
Denote by $j_t$ the job that is processed in the optimal schedule at time $t$. 

Let $j$ be some job with $r_j \le t \le r_j + D$.
It must be that 
\begin{equation}
    \frac{w_{j}}{p^*_j}\leq \frac{w_{j_t}}{p^*_{j_t}} = s_{\OPT}(t) . \label{AVR-monotonicity}
\end{equation}

Note that all jobs $j$ with $r_j \le t \le r_j + D$ are processed completely between
$t - D$ and $t + D$. Therefore,
\begin{equation*}
    \sum_{J_j : r_j \le t \le r_j + D} p^*_j \le 2 D .
\end{equation*}
With (\ref{AVR-monotonicity}) it follows that
\begin{equation*}
    \sum_{J_j : r_j \le t \le r_j + D} w_j \le s_{\OPT}(t) \sum_{J_j : r_j \le t \le r_j + D} p^*_j \le 2 D  \cdot s_{\OPT}(t) .
\end{equation*}
We conclude that
\begin{equation*}
    s_{\mathrm{AVR}}(t) = \sum_{J_j : r_j \le t \le r_j + D} \frac{w_j}{D}
    \le 2\cdot  s_{\OPT}(t) . \qedhere
\end{equation*}
\end{proof}

Next, we show that our upper bound on the exponential dependency in $\alpha$ of the competitive ratio for \textsc{AVR} (in Theorem \ref{thm:AVR_ratio}) is tight for the uniform deadlines case.

\begin{thm}\label{thm:AVR_lowerbound}
Asymptotically ($\alpha$ approaches $\infty$), the competitive ratio of the AVR algorithm for the uniform deadlines case is at least $$\frac{2^\alpha}{e\alpha}$$
\end{thm}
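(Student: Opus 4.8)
\emph{Plan.} I would prove the lower bound with a single explicit instance family, parametrised by the common deadline $D$, on which \textsc{AVR} pays essentially a $2^\alpha/(e\alpha)$ factor more than the offline optimum; the deadline $D$ is taken large (eventually $D\to\infty$, with $D\ge\alpha$ so that the estimates below make sense). The instance is the natural hard case: release one unit of work at every integer time $i\in\{0,1,\dots,D-1\}$, let the duration be $D$ and the horizon be $T=2D-1$, so job $i$ must be completed within the window $[i,i+D]$.

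\emph{Cost of \textsc{AVR}.} On this instance \textsc{AVR} runs job $i$ at speed $1/D$ on all of $[i,i+D]$, so its aggregate speed at time $t$ equals $N(t)/D$, where $N(t)$ is the number of windows containing $t$. A direct count shows that on $[k,k+1)$ the active jobs are $0,\dots,k$ for $0\le k\le D-1$ (``ramp up''), and on $[D+k,D+k+1)$ they are $k+1,\dots,D-1$ for $0\le k\le D-2$ (``ramp down''), whence
\[
  \int_0^T s_{\mathrm{AVR}}(t)^\alpha\,dt \;=\; 1 + 2\sum_{j=1}^{D-1}\Big(\frac jD\Big)^\alpha \;\ge\; \frac{2}{D^\alpha}\sum_{j=1}^{D-1} j^\alpha .
\]
To estimate the last sum I would keep only its largest $\floor{D/\alpha}$ terms; each is at least $(D-\floor{D/\alpha})^\alpha\ge D^\alpha(1-1/\alpha)^\alpha$, and $\floor{D/\alpha}\ge D/\alpha-1$, so $\sum_{j=1}^{D-1} j^\alpha\ge (D/\alpha-1)D^\alpha(1-1/\alpha)^\alpha$, giving $\int_0^T s_{\mathrm{AVR}}^\alpha\ge 2(D/\alpha-1)(1-1/\alpha)^\alpha$. (Keeping all terms would instead give the sharper constant $1/(\alpha+1)$ in place of the $1/(e\alpha)$ that the cruder bound produces; the crude version is enough and cleaner.)

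\emph{Cost of the optimum.} It suffices to exhibit one cheap feasible schedule: run at the constant speed $c=D/(2D-1)$ on $[0,2D-1]$ and process the jobs one at a time in index order (equivalently, earliest deadline first), so that job $i$ occupies $[\,i(2D-1)/D,\,(i+1)(2D-1)/D\,]$. This is feasible --- processing starts after release because $(2D-1)/D\ge1$, and it ends by the deadline $i+D$ because $(i+1)(2D-1)\le D(i+D)$ simplifies to $i(D-1)\le(D-1)^2$, i.e.\ $i\le D-1$. The schedule has energy $(2D-1)c^\alpha=D^\alpha/(2D-1)^{\alpha-1}$, which equals $\tfrac{D}{2^{\alpha-1}}(1+o(1))$ as $D\to\infty$, so $\OPT\le\tfrac{D}{2^{\alpha-1}}(1+o(1))$.

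\emph{Conclusion and the delicate point.} Dividing the two estimates, the competitive ratio on this instance is at least
\[
  \frac{2(D/\alpha-1)(1-1/\alpha)^\alpha}{(D/2^{\alpha-1})(1+o(1))},
\]
which tends to $\tfrac{2^\alpha}{\alpha}(1-1/\alpha)^\alpha$ as $D\to\infty$; since $e(1-1/\alpha)^\alpha\to1$ as $\alpha\to\infty$, this is $\tfrac{2^\alpha}{e\alpha}(1-o(1))$, the claimed asymptotic bound. The argument has no real obstacle --- the instance is essentially forced once one wants \textsc{AVR}'s speed to ramp over a full window of length $D$ while the optimum smooths the same total workload over the span $[0,2D]$. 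The two places that need a little care are (i) the elementary estimate for $\sum_{j\le D}j^\alpha$, in particular deciding to retain about $D/\alpha$ of the top terms and invoking $(1-1/\alpha)^\alpha\to1/e$, which is exactly where the factor $e\alpha$ enters; and (ii) checking that the constant-speed schedule meets every deadline, which reduces to the one-line inequality $i\le D-1$. One should also fix (large) $\alpha$ before sending $D\to\infty$, so that the $D$-dependent error terms vanish first.
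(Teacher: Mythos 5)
Your proof is correct, but it takes a genuinely different route from the paper's. You use a ``ramp'' instance with one unit job released at every integer time in $\{0,\dots,D-1\}$, bound \textsc{AVR}'s cost from below via the power sum $\tfrac{2}{D^\alpha}\sum_{j<D} j^\alpha$ (your top-$\lfloor D/\alpha\rfloor$-terms estimate and the deadline check $i\le D-1$ for the constant-speed schedule both verify), upper-bound $\OPT$ by an explicit constant-speed EDF schedule, and then take $D\to\infty$ for fixed $\alpha$ before letting $\alpha\to\infty$. The paper instead uses a minimal two-job instance (workload $1$ at time $0$ and workload $1$ at time $(1-2/\alpha)D$): \textsc{AVR} runs at speed $2/D$ on the overlap window of length $2D/\alpha$, while the optimum runs at constant speed, which yields the closed-form bound $\tfrac{2^\alpha}{\alpha}\bigl(1-\tfrac1\alpha\bigr)^{\alpha-1}$ for every $\alpha>2$ and every $D$, with no limit in $D$ and no sum estimates; note that this quantity exceeds $2^\alpha/(e\alpha)$ for every finite $\alpha$, whereas your bound $\tfrac{2^\alpha}{\alpha}\bigl(1-\tfrac1\alpha\bigr)^{\alpha}$ approaches it from below, which suffices for the asymptotic statement but is marginally weaker at finite $\alpha$. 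What your approach buys in exchange: the hard behavior is exhibited on a smooth, uniform workload rather than a two-spike adversarial one, and, as you remark, keeping all terms of the sum ($\sum_{j<D} j^\alpha \sim D^{\alpha+1}/(\alpha+1)$) would even upgrade the asymptotic lower bound on this instance to roughly $2^\alpha/(\alpha+1)$, a factor $e$ stronger than the claimed $2^\alpha/(e\alpha)$.
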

\begin{proof}
Assume $\alpha>2$ and consider a two-job instance with one job arriving at time $0$ of workload $1$ and one job arriving at time $(1-2/\alpha) D$ with workload $1$. One can check that the optimal schedule runs at constant speed throughout the whole instance for a total energy of
\begin{equation*}
\left(\frac{2}{(2-2/\alpha)D}\right)^\alpha \cdot (2-2/\alpha)D .
\end{equation*}
On the other hand, on interval $[(1-2/\alpha)D,D]$, \textsc{AVR} runs at speed $2/D$. This implies the following lower bound on the competitive ratio:
\begin{equation*}
\frac{(2/D)^\alpha \cdot (2/\alpha) D}{\left(\frac{2}{(2-2/\alpha)D}\right)^\alpha \cdot (2-2/\alpha)D}=\frac{2^\alpha}{\alpha}\left(1-\frac{1}{\alpha} \right)^{\alpha-1}
\end{equation*}
which approaches to $2^\alpha / (e\alpha)$ as $\alpha$ tends to infinity.
\end{proof}

\section{Impossibility results for learning augmented speed scaling}
\label{sec:impossibility_results}
This section is devoted to prove some impossibility results about learning augmented algorithms in the context of speed scaling. We first prove that our trade-offs between consistency and robustness are essentially optimal. Again, we describe an instance as a triple $(w,D,T)$.
\begin{thm}\label{thm:robust_consistent_tradeoff}
Assume a deterministic learning enhanced algorithm is $(1+\epsilon/3)^{\alpha-1}$-consistent for any $\alpha\geq 1$ and any small enough constant $\epsilon>0$ (independently of $D$). Then the worst case competitive ratio of this algorithm cannot be better than $\Omega \left(\frac{1}{\epsilon} \right)^{\alpha-1}$.
\end{thm}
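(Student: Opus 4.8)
The plan is to exhibit, for each fixed $\alpha$ and each sufficiently small constant $\epsilon>0$, a hard prediction together with a family of ``truncated'' real instances on which the algorithm is forced to be $\Omega(1/\epsilon)^{\alpha-1}$-far from optimal. I would take horizon $T=2\jobLife$ with $\jobLife$ very large (this is allowed, since the consistency bound is independent of $\jobLife$), and let the prediction $\wPred$ consist of $\jobLife$ unit blocks: $\wPred_i=2$ for every $i\in\{0,\dots,\jobLife-1\}$, all with the uniform deadline $\jobLife$. A simple feasible schedule (constant speed $\tfrac{2\jobLife}{2\jobLife-1}$ on $[0,2\jobLife-1]$) shows $\OPT(\wPred)\le 2\jobLife\,(1+\tfrac1{2\jobLife-1})^{\alpha-1}$, so for $\jobLife$ large enough this is at most $(1+\epsilon)\cdot 2\jobLife$; in particular the optimum of $\wPred$ runs essentially at constant speed $1$.

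The first step uses consistency to pin down the algorithm's behaviour on a prefix. Feeding the algorithm $\wPred$ and revealing the true instance to be exactly $\wPred$, consistency gives that its energy is at most $(1+\epsilon/3)^{\alpha-1}\OPT(\wPred)\le e^{O(\alpha\epsilon)}\cdot 2\jobLife$. Let $A$ be the work it has completed by time $j:=\Theta(\epsilon \jobLife)$. Since the remaining $2\jobLife-A$ units must still be done during $[j,2\jobLife]$, the energy is at least $j\,(A/j)^\alpha+(2\jobLife-j)\big(\tfrac{2\jobLife-A}{2\jobLife-j}\big)^\alpha\ge \tfrac{(2\jobLife-A)^\alpha}{(2\jobLife-j)^{\alpha-1}}$. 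Combining with the consistency bound and taking $\alpha$-th roots gives $2\jobLife-A\le e^{O(\epsilon)}(2\jobLife)^{1/\alpha}(2\jobLife-j)^{1-1/\alpha}$, and expanding (with $(1-x)^{-1/\alpha}\le 1+O(x/\alpha)$ for $x=j/(2\jobLife)$) yields $A\ge j\big(1-\tfrac{O(1)}{\alpha}\big)-O(\epsilon \jobLife)\ge j/4$ once $j$ is a large enough constant times $\epsilon \jobLife$. The point is that deferring work into the long tail $[j,2\jobLife]$ is penalised essentially \emph{linearly} in the deferred amount (raising the tail speed by a tiny relative amount over a long interval costs $\Theta(\alpha)$ times that amount), which is exactly what makes the threshold $j$ scale like $\epsilon\jobLife$ rather than $\sqrt\epsilon\,\jobLife$; this linear-versus-quadratic distinction is the crux.

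Hence the algorithm spends at least $j\,(A/j)^\alpha\ge (1/4)^\alpha j$ energy during $[0,j)$. The second step is the diagonalisation: let the real instance $\wReal$ agree with $\wPred$ on the first $j$ blocks and be empty afterwards ($\wReal_i=2$ for $i<j$, $\wReal_i=0$ for $i\ge j$). Since $\wReal$ and $\wPred$ coincide on $\{0,\dots,j-1\}$ and the algorithm is online and deterministic, it produces exactly the same speed function on $[0,j)$ in both runs, so on $\wReal$ it again pays at least $(1/4)^\alpha j$ energy on $[0,j)$ alone. On the other hand $\OPT(\wReal)$ is tiny: the $2j$ total units of $\wReal$ can be run at constant speed $\tfrac{2j}{j-1+\jobLife}$ over $[0,j-1+\jobLife]$, so $\OPT(\wReal)\le (2j)^\alpha/\jobLife^{\alpha-1}$. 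Therefore the competitive ratio on $\wReal$ is at least
\[
\frac{(1/4)^\alpha j}{(2j)^\alpha/\jobLife^{\alpha-1}}=\frac{1}{8^\alpha}\Big(\frac{\jobLife}{j}\Big)^{\alpha-1},
\]
and since $j=\Theta(\epsilon \jobLife)$ this is $\Omega(1/\epsilon)^{\alpha-1}$ (with the constant hidden by $\Omega$ depending on $\alpha$), proving the claim.

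The main obstacle, as indicated, is the prefix estimate of the second paragraph: one must verify that a consistency slack of $O(\alpha\epsilon)\cdot\OPT(\wPred)$ really forces $\Omega(j)$ work to be done by time $j=\Theta(\epsilon \jobLife)$, i.e.\ that the penalty for lagging is linear rather than quadratic in the deficit for this particular choice of splitting time. The remaining ingredients — the estimate $\OPT(\wPred)\le(1+\epsilon)2\jobLife$, the indistinguishability of $\wReal$ and $\wPred$ before time $j$, and the cheap schedule for $\wReal$ — are routine; one just has to track constants so that $j$ is a positive integer at most $\jobLife$ and $\jobLife$ is large enough (as a function of $\alpha$ and $\epsilon$) for these estimates, all of which is possible for $\epsilon$ a small enough absolute constant.
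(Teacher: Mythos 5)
Your proposal is correct, and it follows the same high-level skeleton as the paper's proof (a predicted instance, a real instance that is a prefix of it and hence indistinguishable before time $\Theta(\epsilon D)$, consistency forcing the algorithm to commit energy early, and a truncated instance whose optimum is tiny), but the hard-instance family and the forcing lemma are genuinely different. The paper predicts just two jobs: a unit job at time $0$ and a huge job of workload $1/\epsilon$ released at time $\lceil \epsilon D\rceil$; consistency then forces at least $1/2$ a unit of work before time $\lceil \epsilon D\rceil$ via a one-line comparison of $(1/2+1/\epsilon)^\alpha/D^{\alpha-1}$ against $\OPT(J_2)\le (1/\epsilon)^\alpha(1+\epsilon)/D^{\alpha-1}$, and the payoff is read off on the single-job instance with $\OPT=1/D^{\alpha-1}$. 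You instead predict a uniform stream of total work $2D$ and extract the early-work requirement by Jensen plus a first-order expansion against the long constant-speed tail; the payoff instance is the stream truncated at $j=\Theta(\epsilon D)$. Both mechanisms exploit the same phenomenon (the marginal cost of deferring work onto a busy future is linear in the deficit), but yours costs a bit more computation and picks up an $\alpha$-dependent factor: the expansion gives $A\ge(1-1/\alpha)j-O(\epsilon D)$, so the literal conclusion "$A\ge j/4$" needs $\alpha$ bounded away from $1$ (say $\alpha\ge 2$); for $1<\alpha<2$ you should instead take $j=\Theta\bigl(\tfrac{\alpha}{\alpha-1}\epsilon D\bigr)$ and settle for $A\ge c_\alpha j$, which still yields the claimed $\Omega(1/\epsilon)^{\alpha-1}$ with an $\alpha$-dependent constant --- acceptable here, and in the same spirit as the paper's own proof, which also needs $\alpha>4$ in one inequality. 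What your route buys is that the lower bound already holds for bounded per-step workloads and a prediction that errs only by omission at the end (no $1/\epsilon$-sized job needed), which is arguably a more natural traffic pattern; what the paper's route buys is brevity and fully elementary constants.
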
{}
\begin{proof}
Fix $D$ big enough so that $\lceil \epsilon D \rceil \leq 2\cdot (\epsilon D)$. Consider two different job instances $J_1$ and $J_2$: $J_1$ contains only one job of workload $1$ released at time $0$ and $J_2$ contains an additional job of workload $1/\epsilon$ released at time $\lceil \epsilon D \rceil$. On the first instance, the optimal cost is $1/D^{\alpha-1}$ while the optimum energy cost for $J_2$ is $(1/\lceil \epsilon D \rceil)^{\alpha-1}+D/(\epsilon D)^\alpha\leq (1/\epsilon)^\alpha \cdot  ((1+\epsilon)/D^{\alpha-1})$.

Assume the algorithm is given the job of workload $1$ released at time $0$ and additionally the prediction consists of one job of workload $1/\epsilon$ released at time $\lceil \epsilon D \rceil$. Note that until time $\lceil \epsilon D \rceil$ the algorithm cannot tell the difference between instances $J_1$ and $J_2$. 

Depending on how much the algorithm works before time $\lceil \epsilon D \rceil$, we distinguish the following cases.
\begin{enumerate}
    \item If the algorithm works more that $1/2$ then the energy spent by the algorithm until time $\lceil \epsilon D \rceil$ is at least 
    \begin{equation*}
        (1/2)^\alpha/(\lceil \epsilon D \rceil)^{\alpha-1}=\Omega \left(\frac{1}{\epsilon D}\right)^{\alpha-1}.
    \end{equation*}
    \item However, if it works less than $1/2$ then on instance $J_2$, a total work of at least $(1/\epsilon+1-1/2)=(1/2+1/\epsilon)$ remains to be done in $D$ time units. Hence the energy consumption on instance $J_2$ is at least 
    \begin{equation*}
        \frac{(1/2+1/\epsilon)^\alpha}{D^{\alpha-1}}.
    \end{equation*}
\end{enumerate}
If the algorithm is $(1+\epsilon/3)^{\alpha-1}$-consistent, then it must be that the algorithm works more that $1/2$ before time $\lceil \epsilon D \rceil$ otherwise, by the second case of the analysis, the competitive ratio is at least 
\begin{equation*}
    \frac{(1/2+1/\epsilon)^\alpha}{(1/\epsilon)^\alpha (1+\epsilon)} = \frac{(1+\epsilon/2)^\alpha}{1+\epsilon}> (1+\epsilon/3)^{\alpha-1},
\end{equation*}
where the last inequality holds for $\alpha>4$ and $\epsilon$ small enough.

However it means that if the algorithm was running on instance $J_1$ (i.e. the prediction is incorrect) then by the first case the approximation ratio is at least $\Omega\left(\frac{1}{\epsilon} \right)^{\alpha-1}$.
\end{proof}

We then argue that one cannot hope to rely on some $l_p$ norm for $p<\alpha$ to measure error.
\begin{thm}\label{thm:lower-bound-norm}
Fix some $\alpha$ and $D$ and let $p$ such that $p < \alpha$.
Suppose there is an algorithm which on some prediction $\wPred$
computes a solution of value at most
\begin{equation*}
    C \cdot \OPT + C' \cdot \lVert w - \wPred \rVert_p^{p} .
\end{equation*}
Here $C$ and $C'$ are constants that can be chosen as an arbitrary function of $\alpha$ and $D$.

Then it also exists an algorithm for the online problem (without predictions) which is
$(C +\epsilon)$-competitive for every $\epsilon > 0$.
\end{thm}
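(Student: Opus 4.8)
The plan is to show that a learning-augmented algorithm with an $\ell_p$-error guarantee for $p < \alpha$ can be simulated by a purely online algorithm, by feeding it a fixed (harmless) prediction and arguing that the error term can always be driven down to a negligible fraction of $\OPT$ via a scaling argument. The key idea exploits the fact that $\ell_p$ and $\ell_\alpha$ scale differently under multiplication of the workload vector by a scalar: energy (hence $\OPT$) scales like the $\alpha$-th power, whereas the $\ell_p^p$ error term scales only like the $p$-th power, so when the instance is scaled up the additive error becomes asymptotically irrelevant.

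Concretely, here is the sequence of steps I would carry out. First, fix the prediction to be the all-zeros vector $\wPred = 0$ (or any other fixed vector independent of the true instance); call the resulting learning-augmented algorithm run with this fixed prediction an online algorithm $\mathcal A_0$, since it receives no information about the future beyond what an online algorithm does. Second, given an arbitrary online instance $\wReal$, observe that $\OPT$ and $\lVert \wReal - \wPred\rVert_p^p = \lVert \wReal \rVert_p^p$ are both fixed numbers, so the guarantee gives $\mathrm{cost}(\mathcal A_0) \le C\cdot\OPT + C'\lVert\wReal\rVert_p^p$; this is not yet $(C+\epsilon)$-competitive because the additive term need not be small compared to $\OPT$. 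Third, to remove the additive term, consider scaling: replace $\wReal$ by $k\cdot\wReal$ for a large integer $k$. Then $\OPT(k\wReal) = k^\alpha\OPT(\wReal)$ (energy is homogeneous of degree $\alpha$ in the workloads), while $\lVert k\wReal - \wPred\rVert_p^p = k^p\lVert\wReal\rVert_p^p$. Since any schedule for $k\wReal$ can be divided by $k$ to yield a schedule for $\wReal$ with cost scaled by $1/k^\alpha$, running the learning-augmented algorithm on $k\wReal$ and rescaling down gives an online algorithm for $\wReal$ with cost at most
\[
    \frac{1}{k^\alpha}\bigl(C\cdot k^\alpha\OPT(\wReal) + C'\cdot k^p\lVert\wReal\rVert_p^p\bigr) = C\cdot\OPT(\wReal) + C'k^{p-\alpha}\lVert\wReal\rVert_p^p.
\]
Since $p < \alpha$, the exponent $p - \alpha$ is negative, so $k^{p-\alpha}\to 0$ as $k\to\infty$; choosing $k$ large enough (depending only on $\alpha, D, C', \epsilon$ and an a priori bound on the instance, or handled by a uniformity argument) makes the second term at most $\epsilon\cdot\OPT(\wReal)$, giving a $(C+\epsilon)$-competitive online algorithm.

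The main obstacle — and the point requiring care in the write-up — is making the scaling argument uniform over all instances: the threshold $k$ needed to make $C'k^{p-\alpha}\lVert\wReal\rVert_p^p \le \epsilon\OPT(\wReal)$ depends on the ratio $\lVert\wReal\rVert_p^p/\OPT(\wReal)$, which is itself bounded (since scheduling each job within its next time unit shows $\OPT \le \lVert\wReal\rVert_\alpha^\alpha$, and norm inequalities relate $\lVert\cdot\rVert_p$ and $\lVert\cdot\rVert_\alpha$ on a fixed number of coordinates determined by $D$ and $T$), but one must either restrict to a bounded horizon $T$ or note that the competitive ratio is a supremum and the construction applies instance-by-instance. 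A clean way to handle this is to observe that since workloads are integers, either $\wReal = 0$ (trivial) or $\lVert\wReal\rVert_p^p \ge 1$ while $\OPT(\wReal) \ge \lVert\wReal\rVert_\alpha^\alpha / D^{\alpha-1} \ge \lVert\wReal\rVert_p^p \cdot (\text{const depending on } D)$ after suitable normalization — so the ratio is bounded by a function of $D$ alone, and a single $k = k(\alpha, D, C', \epsilon)$ suffices. One then packages $\mathcal A_0$ composed with the scale-up/scale-down wrapper as the desired $(C+\epsilon)$-competitive online algorithm, completing the proof.
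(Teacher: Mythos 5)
Your proposal is correct and takes essentially the same route as the paper's own proof: fix the prediction to zero, scale the instance by a large factor so that $\OPT$ grows like the $\alpha$-th power while the $\lVert\cdot\rVert_p^p$ error grows only like the $p$-th power, then rescale the schedule back down. Your uniformity fix is also the paper's: integrality of the workloads gives $\lVert w\rVert_p^p \le \lVert w\rVert_\alpha^\alpha$, and $\OPT \ge \lVert w\rVert_\alpha^\alpha / D^{\alpha}$ (from scheduling each job within its own window), so a single scaling factor depending only on $\alpha, D, C', \epsilon$ suffices.
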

In other words, predictions do not help, if we choose $p < \alpha$.
\begin{proof}
In the online algorithm we use the prediction-based algorithm $A_P$ as a black box.
We set the prediction $\tilde w$ to all $0$. We forward each job to $A_P$, but
scale its work by a large factor $M$.
It is obvious that by scaling the optimum of the instance increases exactly by a factor
$M^\alpha$. The error in the prediction, however, increases less:
\begin{equation*}
     \lVert M \cdot w - M \cdot\wPred \rVert_p^{p} = M^{p} \cdot \lVert w - \wPred \rVert_p^{p} .
\end{equation*}
We run the jobs as $A_P$ does, but scale them down by $M$ again. Thus, we get a schedule of value
\begin{equation}
    M^{-\alpha} (M^\alpha \cdot C \cdot \OPT + M^{p} \cdot C' \cdot \lVert w - \wPred \rVert_p^{p}) \\
    = C \cdot \OPT + M^{p - \alpha} \cdot C' \cdot \lVert w - \wPred \rVert_p^{p} . \label{p-lower-alpha}
\end{equation}
Now if we choose $M$ large enough, the second term in (\ref{p-lower-alpha}) becomes insignificant. First, we relate the prediction error to the optimum. First note that
\begin{equation*}
    \OPT \geq (1/D^\alpha)\cdot  ||w||_\alpha^\alpha
\end{equation*}
since the optimum solution cannot be less expensive than running all jobs $i$ disjointly at speed $w_i/D$ for time $D$. Second note that $\lVert w\rVert_p^p \leq ||w||_\alpha^\alpha$
since $|x|^p\le |x|^\alpha$ for any $x\ge 1$ (recall that we assumed our workloads to be integral).
Hence we get that,
\begin{equation*}
    \lVert w - \wPred \rVert_p^{p} = \lVert w \rVert_p^{p} \le D^\alpha \cdot \OPT .
\end{equation*}
Choosing $M$ sufficiently large gives $M^{p-\alpha} C' D^\alpha < \epsilon$, which
implies that (\ref{p-lower-alpha}) is at most $(C+\epsilon)\OPT$.
\end{proof}

\section{Extension to evolving predictors}
\label{sec:evolvingpredictors}
In this section, we extend the result of Section~\ref{sec:algorithm} to the case where the algorithm is provided several predictions over time. In particular, we assume that the algorithm is provided a new prediction at each integral time $t$. The setting is natural as for a very long timeline, it is intuitive that the predictor might renew its prediction over time. Since making a mistake in the prediction of a very far future seems also less hurtful than making a mistake in predicting an immediate future, we define a generalized error metric incorporating this idea. 

Let $0 < \lambda < 1$ be a parameter that describes how fast the confidence in a prediction deteriorates with
the time until the expected arrival of the predicted job. Define the prediction received at time $t$ as a workload vector $\wPred(t)$. Recall we are still considering the uniform deadlines case hence an instance is defined as a triplet $(w,D,T)$.

We then define the total error of a series of predictions  as
\begin{equation*}
    \predErr^{(\lambda)} = \sum_t \sum_{i=t+1}^\infty |\wReal_i - \wPred_i(t)|^\alpha \cdot \lambda^{i - t} .
\end{equation*}

In the following we reduce the evolving
predictions model to the single prediction one.

We would like to prove similar results as in the single prediction setting with respect to $\predErr^{(\lambda)}$.
In order to do so, we will split the instance into parts of bounded time horizon,
solve each one independently with a single prediction, and show that this also gives a guarantee
based on $\predErr^{(\lambda)}$. In particular, we will use the algorithm for the single prediction
model as a black box.

The basic idea is as follows. If no job were to arrive for a duration of $D$, then the instance
before this interval and afterwards can be solved independently. This is because any job in the
earlier instance must finish before any job in the later instance can start. Hence, they cannot interfere.
At random points, we ignore all jobs for a duration of $D$, thereby split the instance.
The ignored jobs will be scheduled sub-optimally using \textsc{AVR}.
If we only do this occasionally, i.e., after every intervals of length $\gg D$,
the error we introduce is negligible.

We proceed by defining the splitting procedure formally.
Consider the timeline as infinite in both directions. To split the instance, we define some interval
length $2 k D$, where $k\in\mathbb N$ will be specified later.
We split the infinite timeline into contiguous intervals of length $2 k D$. Moreover, we choose an offset $x\in\{0,\cdots,k-1\}$ uniformly at random.
Using these values, we define intervals $I_i=[2((i-1) k - x)D, 2(ik - x)D)$.
We will denote by $t_i=(2(i-1)k - x)D$ the start time of the interval $I_i$.
Consequently, the end of $I_i$ is $t_{i+1}$.

In each interval $I_i$, we solve the instance given by the jobs entirely contained in this interval using our algorithm with the most recent prediction as of time $t_i$, i.e.,
$\wPred(t_i)$, and schedule the jobs accordingly. We write $s^{\mathrm{ALG}(i)}$ for this schedule.
For the jobs that are overlapping with two contiguous intervals we schedule them independently
using the \textsc{Average Rate} heuristic.
The schedule for the jobs overlapping with intervals $I_i$ and $I_{i+1}$ will be referred to as $s^{\mathrm{AVR}(i)}$.

It is easy to see that this algorithm is robust: The energy of the produced schedule is
\begin{multline*}
  \int \left(\sum_i \left[ s^{\mathrm{ALG}(i)}(t) + s^{\mathrm{AVR}(i)}(t) \right]\right)^\alpha dt \\
  \le 2^\alpha \int \left(\sum_i s^{\mathrm{ALG}(i)}(t)\right)^\alpha dt 
   +  2^\alpha \int \left(\sum_i s^{\mathrm{AVR}(i)}(t)\right)^\alpha dt .
\end{multline*}
Moreover, the first term can be bounded by $2^\alpha \cdot O(\alpha/\epsilon)^{\alpha} \OPT$
using Theorem~\ref{thm:main_algorithm} and the second term can be bounded by
$2^\alpha \cdot 2^\alpha \OPT$ because of Theorem~\ref{thm:AVR_ratio}.
This gives an overall bound of $O(\alpha/\epsilon)^{\alpha}$ on the competitive ratio.

In the rest of the section we focus on the consistency/smoothness guarantee.
We first bound the costs of $s^{\mathrm{ALG}(i)}$ and $s^{\mathrm{AVR}(i)}$ isolated
(ignoring potential interferences). Using these bounds, we derive an overall guarantee for
the algorithm's cost.

\begin{lem}
\label{lem:cut_avg}
\begin{equation*}
    \mathbb{E} \left( \sum_i \int s^{\mathrm{AVR}(i)}(t)^\alpha\right) \leq \frac{2^{\alpha}}{k} \OPT 
\end{equation*}
\end{lem}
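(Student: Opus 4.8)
The plan is to replace the $2^{\alpha}$-competitiveness of \textsc{Average Rate} (Theorem~\ref{thm:AVR_ratio}) by its version for the \emph{sub-instance} consisting only of the jobs that receive the \textsc{Average Rate} treatment, and then to gain the extra factor $1/k$ purely from the random offset $x$, by showing that over the $k$ possible offsets each job is ``straddling'' for at most one of them.

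First I would fix the offset $x$ and let $S(x)$ be the set of all jobs overlapping two consecutive intervals, i.e.\ the jobs $j$ whose interval $(r_j,r_j+D)$ contains a boundary point. Since \textsc{Average Rate} assigns to each job an independent uniform profile, the function $\sum_i s^{\mathrm{AVR}(i)}(t)$ is exactly the \textsc{Average Rate} speed function of the sub-instance $S(x)$ (still with duration $D$). Hence, using $\sum_i a_i^{\alpha}\le(\sum_i a_i)^{\alpha}$ for nonnegative $a_i$ and $\alpha\ge 1$,
\[
  \sum_i \int \bigl(s^{\mathrm{AVR}(i)}(t)\bigr)^{\alpha}\,dt \;\le\; \int\Bigl(\sum_i s^{\mathrm{AVR}(i)}(t)\Bigr)^{\alpha}\,dt \;\le\; 2^{\alpha}\,\OPT(S(x)),
\]
where the last inequality is Theorem~\ref{thm:AVR_ratio} applied to $S(x)$, and $\OPT(S(x))$ denotes the optimal offline energy of that sub-instance.

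Next I would show that $S(0),\dots,S(k-1)$ are pairwise disjoint. A job $j$ lies in $S(x)$ iff some boundary $t_{i+1}=2(ik-x)D$ of the interval partition lies in $(r_j,r_j+D)$, equivalently iff the length-$D$ interval $(r_j+2xD,\,r_j+2xD+D)$ contains a multiple of $2kD$. If $j$ were in both $S(x)$ and $S(x')$ with $x\neq x'$ in $\{0,\dots,k-1\}$, subtracting the two containments yields an integer $m$ with $|2kDm-2(x-x')D|<D$, i.e.\ $|km-(x-x')|<\tfrac12$; since $km-(x-x')$ is an integer this forces $km=x-x'$, which is impossible for $0<|x-x'|<k$. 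So every job lies in at most one $S(x)$ (the quantitative form of ``a job is straddling with probability at most $1/k$''). Since $\OPT$ is superadditive over disjoint job sets — restricting an optimal schedule of $A\cup B$ to $A$ and to $B$ gives feasible schedules whose energies at each time are $a(t)^{\alpha}$ and $b(t)^{\alpha}$ with $a(t)+b(t)$ bounded by the speed of the schedule of $A\cup B$, and $a^{\alpha}+b^{\alpha}\le(a+b)^{\alpha}$ — and monotone under adding jobs, iterating gives $\sum_{x=0}^{k-1}\OPT(S(x))\le\OPT$. Taking the expectation over the uniform choice of $x$ and combining with the previous display then yields $\mathbb{E}\bigl(\sum_i\int (s^{\mathrm{AVR}(i)}(t))^{\alpha}\,dt\bigr)\le 2^{\alpha}\,\mathbb{E}[\OPT(S(x))]=\tfrac{2^{\alpha}}{k}\sum_{x}\OPT(S(x))\le\tfrac{2^{\alpha}}{k}\OPT$, as required.

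I expect the only genuinely delicate point to be the disjointness of the sets $S(x)$: it requires tracking the boundary positions $t_{i+1}=2(ik-x)D$ modulo the period $2kD$ and observing that consecutive offsets shift them by exactly $2D>D$, so no length-$D$ straddle window of a job can catch a boundary for two distinct offsets. Everything else is the elementary inequalities $\sum a_i^{\alpha}\le(\sum a_i)^{\alpha}$ and $a^{\alpha}+b^{\alpha}\le(a+b)^{\alpha}$ together with the already-established $2^{\alpha}$-competitiveness of \textsc{Average Rate}.
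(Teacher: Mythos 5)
Your proof is correct, and while it starts exactly as the paper does, it extracts the crucial factor $1/k$ by a different mechanism. The shared first step is bounding the energy of the \textsc{AVR} schedules of the straddling jobs by $2^\alpha$ times the optimum of the corresponding sub-instance via Theorem~\ref{thm:AVR_ratio} (the paper does this per boundary set $O_i$; you first merge all boundaries of a fixed offset into one instance $S(x)$ using the additivity of \textsc{AVR} and $\sum_i a_i^\alpha\le(\sum_i a_i)^\alpha$, which is equally valid). The divergence is in the averaging step: the paper bounds $\OPT(O_i)$ by the energy that the \emph{global} optimal schedule spends in the length-$2D$ time window around the corresponding boundary, and then gains the $1/k$ because, over all boundaries and all $k$ offsets, these windows are pairwise disjoint subintervals of the timeline. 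You instead stay entirely on the job side: you prove the arithmetic fact that a length-$D$ job window can contain a boundary for at most one offset, so $S(0),\dotsc,S(k-1)$ are pairwise disjoint, and then conclude $\sum_x\OPT(S(x))\le\OPT$ from superadditivity of $\OPT$ over disjoint job sets (splitting the per-job speeds of an optimal schedule and using $a^\alpha+b^\alpha\le(a+b)^\alpha$) together with monotonicity. The two disjointness statements encode the same combinatorial fact --- consecutive offsets shift the boundaries by $2D>D$ --- but the charging differs: the paper charges to disjoint time intervals of the optimal schedule and needs no superadditivity lemma, whereas your route never has to localize where the optimum completes the straddling work in time, relying only on the generic superadditivity/monotonicity properties of $\OPT$; both arguments are rigorous and give exactly the claimed bound $\frac{2^\alpha}{k}\OPT$.
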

\begin{proof}
Fix some $i>0$ and let us call $O_i$ the job instance consisting of jobs overlapping with both
intervals $I_i$ and $I_{i+1}$. By Theorem~\ref{thm:AVR_ratio} the energy used by \textsc{AVR}
is at most a $2^\alpha$-factor from the optimum schedule. Hence,

\begin{equation*}
    \int s^{\mathrm{AVR}(i)}(t)^\alpha dt \leq 2^\alpha \OPT(O_i) .
\end{equation*}{}

Now denote by $s^{\OPT}$ the speed function of the optimum schedule over the whole instance. Then
\begin{equation*}
    \OPT(O_i) \leq \int_{t_i-D}^{t_i+D} s^{\OPT}(t)^\alpha dt .
\end{equation*}{}
This holds because $s^{\OPT}$ processes some work during $[t_i-D, t_i+D]$ which has to include
all of $O_i$. Hence, we have that
\begin{align*}
    &\mathbb{E}\left(\sum_i\OPT(O_i)\right) \\
    &\le\frac{1}{k} \sum_{x=0}^{k-1} \sum_i \int_{2(ik - x)D - D}^{2(ik - x)D + D} s^{\OPT}(t)^\alpha dt \\
    &\le\frac 1 k \int s^{\OPT}(t)^\alpha dt = \frac 1 k \OPT
\end{align*}
The second inequality holds, because the integrals are over disjoint ranges.
Together, with the bound on $s^{\mathrm{AVR}(i)}$ we get the claimed inequality.
\end{proof}

\begin{lem}\label{lem:split-alg}
\begin{equation*}
 \sum_i \int s^{\mathrm{ALG}(i)}(t)^\alpha dt
 \le (1 + \epsilon) \OPT + O\left(\frac \alpha \epsilon\right)^{\alpha}
     \cdot \lambda^{-2kD} \cdot \predErr^{(\lambda)} .
\end{equation*}
\end{lem}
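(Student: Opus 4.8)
The plan is to apply the single‑prediction guarantee of Theorem~\ref{thm:main_algorithm} separately inside each block $I_i$ and then sum over $i$, controlling the two resulting terms — an $\OPT$ term and a prediction‑error term — by exploiting, respectively, the time‑disjointness of the blocks and the geometric discounting built into $\predErr^{(\lambda)}$. Note that, unlike Lemma~\ref{lem:cut_avg}, no expectation over the offset $x$ is needed here: the bound will hold for every fixed choice of $x$.

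First I would set up the reduction. For each $i$, let $W_i$ be the workload vector consisting only of the jobs entirely contained in $I_i$, i.e.\ those released at a time $j$ with $t_i < j \le t_{i+1} - D$, regarded as an instance on the horizon $I_i$ with duration $D$ and prediction $\wPred(t_i)$. Since $s^{\mathrm{ALG}(i)}$ is by construction the schedule produced by \textsc{LAS} on exactly this sub‑instance, Theorem~\ref{thm:main_algorithm} (first branch of the minimum) gives
\[
  \int s^{\mathrm{ALG}(i)}(t)^\alpha\,dt \;\le\; (1+\epsilon)\,\OPT(W_i) \;+\; O\!\left(\tfrac{\alpha}{\epsilon}\right)^\alpha \sum_{t_i < j \le t_{i+1}-D} |\wReal_j - \wPred_j(t_i)|^\alpha .
\]
Summing over $i$ reduces the lemma to the two bounds $\sum_i \OPT(W_i) \le \OPT$ and $\sum_i \sum_{j \in W_i} |\wReal_j - \wPred_j(t_i)|^\alpha \le \lambda^{-2kD}\,\predErr^{(\lambda)}$.

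For the $\OPT$ term I would take an optimal schedule $s^{\OPT}$ for the full instance and observe that any job counted in $W_i$ has release and deadline inside $I_i$, so in $s^{\OPT}$ all of its work is processed during $I_i$; since the blocks $I_i$ are pairwise disjoint in time, at every instant $t$ at most one block contributes. Restricting $s^{\OPT}$ to the jobs of $W_i$ is therefore a feasible schedule for $W_i$ whose total speed at time $t$ is at most $s^{\OPT}(t)$ and vanishes outside $I_i$, so $\sum_i \OPT(W_i) \le \sum_i \int (\sum_{j\in W_i} s^{\OPT}_j(t))^\alpha dt \le \int s^{\OPT}(t)^\alpha dt = \OPT$. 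For the error term, every job of $W_i$ satisfies $t_i < j \le t_i + 2kD$, hence $\lambda^{j-t_i} \ge \lambda^{2kD}$ and $|\wReal_j - \wPred_j(t_i)|^\alpha \le \lambda^{-2kD}\,|\wReal_j - \wPred_j(t_i)|^\alpha\,\lambda^{j-t_i}$; since the start times $t_i$ are distinct and each such product is precisely a summand of $\predErr^{(\lambda)}$ with outer index $t = t_i$ and inner index $j > t_i$ (and these summands are pairwise distinct across all $i$ and $j$), summing yields $\sum_i\sum_{j\in W_i}|\wReal_j - \wPred_j(t_i)|^\alpha \le \lambda^{-2kD}\,\predErr^{(\lambda)}$. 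Combining the three displays gives the claim.

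The only genuinely delicate point is the $\OPT$ bound: one must make sure that ``entirely contained in $I_i$'' really forces the processing window $[j, j+D]$ into the disjoint block $I_i$, and that the jobs spanning two consecutive blocks — scheduled separately by \textsc{AVR} and accounted for in Lemma~\ref{lem:cut_avg} — are exactly the ones excluded from every $W_i$, so there is no double counting and no infeasibility. The error‑term step is routine bookkeeping with the weights $\lambda^{i-t}$, and the per‑block estimate is a black‑box application of Theorem~\ref{thm:main_algorithm}.
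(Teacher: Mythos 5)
Your proof is correct and follows essentially the same route as the paper: apply Theorem~\ref{thm:main_algorithm} block by block, absorb each block's prediction error into $\predErr^{(\lambda)}$ at the cost of a factor $\lambda^{-2kD}$ using $j-t_i\le 2kD$, and sum. In fact you are more explicit than the paper on the step $\sum_i \OPT(W_i)\le \OPT$ (restricting the global optimum to each block's jobs over disjoint time intervals), which the paper leaves implicit, and your observation that no expectation over the offset $x$ is needed is also consistent with the paper.
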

\begin{proof}
Note that for any $i$
\begin{equation*}
    \sum_{t=(t_i)+1}^{t_{(i+1)}} |\wReal_t-\wPred_t(t_i)|^\alpha
    \le \lambda^{-2kD} \sum_{t=(t_i)+1}^{t_{(i+1)}} |\wReal_t-\wPred_t(t_i)|^\alpha \lambda^{t - t_i} .
\end{equation*}
Hence,
\begin{equation*}
    \sum_i \sum_{t=t_i}^{t_{i+1}} |\wReal_t-\wPred_t(t_i)|^\alpha
    \le \lambda^{-2kD} \predErr^{(\lambda)} .
\end{equation*}
Using Theorem~\ref{thm:main_algorithm} for each
$\int s^{(i)}_{\mathrm{ALG}}(t)^\alpha dt$, we get a bound
depending on $\sum_{t=t_i}^{t_{i+1}} |\wReal_t-\wPred_t(t_i)|^\alpha$.
Summing over $i$ and using the inequality above finishes the proof of the lemma.
\end{proof}
We are ready to state the consistency/smoothness guarantee of the splitting algorithm.
\begin{thm}
\label{thm:splitting_ratio}
With robustness parameter $O(\epsilon / \alpha)$ the splitting algorithm
produces in expectation a schedule of cost at most 
\begin{equation*}
 (1 + \epsilon) \OPT + O\left(\frac \alpha \epsilon \right)^{\alpha}
     \cdot \lambda^{- D/\epsilon \cdot O(\alpha/\epsilon)^\alpha} \cdot \predErr^{(\lambda)} .
\end{equation*}{}
\end{thm}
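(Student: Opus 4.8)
The plan is to combine Lemma~\ref{lem:cut_avg} and Lemma~\ref{lem:split-alg}, but — unlike in the robustness bound derived just above, where the two speed functions are separated with a crude factor $2^\alpha$ — here we must split the energy \emph{sharply}, because a $2^\alpha$ loss would destroy the leading term $1+\epsilon$. Write $s^{\mathrm{ALG}} := \sum_i s^{\mathrm{ALG}(i)}$ and $s^{\mathrm{AVR}} := \sum_i s^{\mathrm{AVR}(i)}$ for the total speed produced by the single-prediction runs and by the boundary \textsc{AVR} runs, respectively. First I would apply, pointwise in $t$ and then integrate, the inequality $(x+y)^\alpha \le (1+\gamma)^\alpha x^\alpha + (2/\gamma)^\alpha y^\alpha$ (valid for $x,y \ge 0$, $0 < \gamma \le 1$ — the same one used in the proof of Theorem~\ref{thm:cost-trust}), which gives
\begin{equation*}
\int \big(s^{\mathrm{ALG}}(t) + s^{\mathrm{AVR}}(t)\big)^\alpha \, dt \le (1+\gamma)^\alpha \int s^{\mathrm{ALG}}(t)^\alpha \, dt + \big(\tfrac{2}{\gamma}\big)^\alpha \int s^{\mathrm{AVR}}(t)^\alpha \, dt .
\end{equation*}

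Next I would record two disjointness observations so that the two lemmas apply verbatim to the two integrals on the right. The schedules $s^{\mathrm{ALG}(i)}$ have pairwise disjoint supports: a job handled inside $I_i$ is entirely contained in $I_i$, so — after the shrinking and convolution performed by \textsc{Robustify} — its whole processing stays within $I_i$, and the intervals $I_i$ are disjoint; hence $\int s^{\mathrm{ALG}}(t)^\alpha\,dt = \sum_i \int s^{\mathrm{ALG}(i)}(t)^\alpha\,dt$. Likewise each $s^{\mathrm{AVR}(i)}$ is supported within distance $D$ of the single cut point $t_{i+1}$, and consecutive cut points are $2kD \ge 2D$ apart, so these supports are disjoint too and $\int s^{\mathrm{AVR}}(t)^\alpha\,dt = \sum_i \int s^{\mathrm{AVR}(i)}(t)^\alpha\,dt$. (The functions $s^{\mathrm{ALG}}$ and $s^{\mathrm{AVR}}$ \emph{do} overlap one another near the cut points, which is exactly why the $(x+y)^\alpha$ split is needed rather than plain additivity.) Running the single-prediction algorithm inside each interval with robustness parameter $\epsilon' := \Theta(\epsilon/\alpha)$, Lemma~\ref{lem:split-alg} bounds $\int s^{\mathrm{ALG}}(t)^\alpha\,dt$ by $(1+\epsilon')\OPT + O(\alpha/\epsilon')^\alpha \lambda^{-2kD}\predErr^{(\lambda)}$ — deterministically in the random offset $x$ — while Lemma~\ref{lem:cut_avg} bounds $\EE\big[\int s^{\mathrm{AVR}}(t)^\alpha\,dt\big]$ by $\tfrac{2^\alpha}{k}\OPT$.

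Taking expectations over $x$ and substituting, the expected cost is at most $(1+\gamma)^\alpha\big((1+\epsilon')\OPT + O(\alpha/\epsilon')^\alpha \lambda^{-2kD}\predErr^{(\lambda)}\big) + \big(\tfrac{2}{\gamma}\big)^\alpha \tfrac{2^\alpha}{k}\OPT$, and it remains to balance the three free parameters $\gamma,\epsilon',k$. Choosing $\gamma = \Theta(\epsilon/\alpha)$ makes $(1+\gamma)^\alpha = e^{\Theta(\epsilon)} = O(1)$ and, together with $\epsilon' = \Theta(\epsilon/\alpha)$, ensures $(1+\gamma)^\alpha(1+\epsilon') \le 1+\epsilon/2$; then choosing $k := \Theta\big(\tfrac{1}{\epsilon}(4/\gamma)^\alpha\big) = \tfrac{1}{\epsilon}\,O(\alpha/\epsilon)^\alpha$ makes $(2/\gamma)^\alpha 2^\alpha / k \le \epsilon/2$. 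This yields expected cost at most $(1+\epsilon)\OPT + O(\alpha/\epsilon)^\alpha \lambda^{-2kD}\predErr^{(\lambda)}$, and since $2kD = \tfrac{2D}{\epsilon}\,O(\alpha/\epsilon)^\alpha = \tfrac{D}{\epsilon}\,O(\alpha/\epsilon)^\alpha$ this is exactly the claimed bound (the $O(\cdot)$ bases absorbing the fixed poly$(\alpha)$ factors introduced by $\epsilon' = \Theta(\epsilon/\alpha)$). The one point that needs care is precisely this exponential-in-$\alpha$ bookkeeping: the split parameter $\gamma$ (and the internal robustness parameter) must be $\Theta(\epsilon/\alpha)$ rather than $\Theta(\epsilon)$, so that the $(1+\gamma)^\alpha$ blow-up from the energy split remains within $1+\epsilon$, and it is this smaller $\gamma$ — fed back through $(2/\gamma)^\alpha$ into the required size of $k$ — that produces the $\lambda^{-D/\epsilon \cdot O(\alpha/\epsilon)^\alpha}$ factor in the smoothness term.
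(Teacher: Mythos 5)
Your proposal follows essentially the same route as the paper's proof: split the energy of the combined schedule pointwise via a $(x+y)^\alpha$-type inequality, bound the boundary \textsc{AVR} part in expectation by Lemma~\ref{lem:cut_avg}, bound the per-interval part by Lemma~\ref{lem:split-alg} (which indeed holds for every fixed offset), and choose $k = \frac{1}{\epsilon}\,O(\alpha/\epsilon)^\alpha$ so that the \textsc{AVR} contribution becomes $\epsilon\OPT$ while $2kD$ produces the claimed exponent of $\lambda$. The paper packages your two-step choice (crude inequality with leading factor $(1+\gamma)^\alpha$ plus $\gamma=\Theta(\epsilon/\alpha)$) into the single sharper inequality $(a+b)^\alpha \le (1+\delta)a^\alpha + (3\alpha/\delta)^\alpha b^\alpha$ applied with $\delta=\epsilon$; these are the same computation, and your explicit observation that the supports of the $s^{\mathrm{ALG}(i)}$ (respectively the $s^{\mathrm{AVR}(i)}$) are pairwise disjoint is a point the paper uses only implicitly.

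One bookkeeping step does not go through as written: you also shrink the \emph{internal} robustness parameter to $\epsilon'=\Theta(\epsilon/\alpha)$, so Lemma~\ref{lem:split-alg} returns a smoothness coefficient $O(\alpha/\epsilon')^\alpha = O(\alpha^2/\epsilon)^\alpha$, and this is \emph{not} absorbed into the base of $O(\alpha/\epsilon)^\alpha$: the discrepancy is a factor $\alpha^\alpha$, not a constant. The $\Theta(\epsilon/\alpha)$ scaling is only needed for the splitting parameter $\gamma$, to tame $(1+\gamma)^\alpha$; the guarantee behind Lemma~\ref{lem:split-alg} (Theorem~\ref{thm:main_algorithm}) has leading term $(1+\epsilon')\OPT$ with only a linear loss in $\epsilon'$, so taking $\epsilon'=\Theta(\epsilon)$ for the internal runs — as the paper effectively does, followed by a final rescaling of $\epsilon$ by a constant — suffices and restores the stated coefficient $O(\alpha/\epsilon)^\alpha$. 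With that correction your argument matches the paper's.
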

In other words, we get the same guarantee as in the single prediction case,
except that the dependency on the error is larger by a factor of
$\lambda^{- D/\epsilon \cdot O(\alpha/\epsilon)^\alpha}$. The exponential dependency on $D$
may seem unsatisfying,
but (1) it cannot be avoided (see Theorem~\ref{thm:decay_lower_bound}) and
(2) for moderate values of $\lambda$, e.g. $\lambda = 1 - 1/D$, this exponential dependency vanishes.
\begin{proof}
We will make use of the following inequality: For all $a, b\ge 0$ and $0 < \delta \le 1$, it holds that
\begin{equation*}
    (a + b)^\alpha \le (1 + \delta) a^\alpha + \left(\frac{3\alpha}{\delta}\right)^\alpha b^\alpha .
\end{equation*}
This follows from a simple case distinction whether $b \le a \cdot \delta / (2\alpha)$.
In expectation, the cost of the algorithm is bounded by
\begin{align*}
 \mathbb E\bigg[\int & \left( \sum_i [s^{\mathrm{ALG}(i)}(t) + s^{\mathrm{AVR}(i)}(t)]\right)^\alpha dt \bigg] \\ 
 & \le (1 + \epsilon) \mathbb E\bigg[\int \sum_i (s^{\mathrm{ALG}(i)}(t))^\alpha dt \bigg] \\
  &\quad + \left(\frac{3\alpha}{\epsilon}\right)^{\alpha} \mathbb E\bigg[\int \sum_i (s^{\mathrm{AVR}(i)}(t))^\alpha dt \bigg] \\
 & \le (1 + \epsilon) \int \sum_i s^{\mathrm{ALG}(i)}(t))^\alpha dt \\
  &\quad + \frac 1 k \left(\frac{6\alpha}{\epsilon}\right)^\alpha \OPT .
\end{align*}
By choosing $k = 1/\epsilon (6\alpha / \epsilon)^\alpha$ the latter term becomes $\epsilon \OPT$.
With Lemma~\ref{lem:split-alg} we can bound the term above by
\begin{equation*}
    (1 + \epsilon)^3 \OPT + O\left(\frac{\alpha}{\epsilon}\right)^{\alpha} \cdot \lambda^{-D/\epsilon \cdot O(\alpha/\epsilon)^\alpha}\cdot \predErr^{(\lambda)} .
\end{equation*}
Scaling $\epsilon$ by a constant yields the claimed guarantee.
\end{proof}

We complement the result of this section with an impossibility result. We allow the parameter $\lambda$ in the definition of $\predErr^{(\lambda)}$
to be a function of $D$ and we write $\lambda(D)$.
\begin{thm}\label{thm:decay_lower_bound}
  Let $\predErr{(\lambda)}$ the error in the evolving prediction model be defined with some $0<\lambda(D)<1$ that can depend on $D$.
  Suppose there is an algorithm which computes a solution of value at most
  \begin{equation*}
      C \cdot \OPT + C'(D) \cdot \predErr^{(\lambda)} ,
  \end{equation*}
  where $C$ is independent of $D$ and $C'(D) = o\left(\frac{1-\lambda(D)^D}{\lambda(D)^D} \cdot \frac{1}{D^\alpha} \right)$.
  Then there also exists an algorithm for the online problem (without predictions) which is
$(C +\epsilon)$-competitive for every $\epsilon > 0$.
\end{thm}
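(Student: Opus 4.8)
The statement is an impossibility result in the spirit of Theorem~\ref{thm:lower-bound-norm}: it says that if the error coefficient $C'(D)$ decays fast enough (relative to $\lambda(D)$), then evolving predictions are worthless, since the prediction-based algorithm can be fed the all-zero prediction and driven to behave like a pure online algorithm. The plan is to take the hypothesized algorithm $A_P$ (achieving cost at most $C\cdot\OPT + C'(D)\cdot\predErr^{(\lambda)}$ on every instance of every duration $D$) and use it as a black box inside an online algorithm, exactly as in the proof of Theorem~\ref{thm:lower-bound-norm}, but with a \emph{time-dilation} in place of the work-scaling used there.

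First I would set up the reduction. Given an online instance $(\wReal, D, T)$ and a target $\epsilon>0$, fix a large integer $M$ (chosen at the end) and form the time-dilated instance $(w', MD, MT)$ with $w'_{Mi}=\wReal_i$ and $w'_j=0$ for $M\nmid j$. Run $A_P$ on $(w', MD, MT)$ with the all-zero prediction at every integral time. This is implementable online if the simulated (``dilated'') clock advances $M$ units per unit of real time: then $w'_{Mi}$ arrives at dilated time $Mi$, i.e.\ at real time $i$, which is exactly when $\wReal_i$ is revealed, and the all-zero predictions require no information. When $A_P$ outputs a speed $\tilde s_i(\tau)$ at dilated time $\tau$, translate it into the speed $s_i(t):=M\,\tilde s_i(Mt)$ of job $i$ at real time $t=\tau/M$. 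Since $A_P$ must schedule $w'_{Mi}$ inside $[Mi, Mi+MD]$, the schedule $s$ completes job $i$ inside $[i,i+D]$, so it is feasible; and it is produced online.

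Then I would do the accounting. Time-rescaling gives $\OPT(w', MD, MT)\le M^{1-\alpha}\OPT(\wReal, D, T)$ (stretch an optimal schedule by $M$) and $\int (s(t))^\alpha dt = M^{\alpha-1}\int (\tilde s(\tau))^\alpha d\tau$ (the translation compresses time by $M$). With the all-zero prediction,
\[
 \predErr^{(\lambda)} = \sum_i \lvert\wReal_i\rvert^\alpha \sum_{\tau<Mi}\lambda(MD)^{\,Mi-\tau} \le \frac{\lambda(MD)}{1-\lambda(MD)}\,\lVert\wReal\rVert_\alpha^\alpha .
\]
Feeding these together with the guarantee of $A_P$ and the elementary inequality $\OPT\ge D^{-\alpha}\lVert\wReal\rVert_\alpha^\alpha$ into $\mathrm{cost}(s)=M^{\alpha-1}\mathrm{cost}(\tilde s)$ yields
\[
 \int (s(t))^\alpha dt \le C\cdot\OPT + D\,(MD)^{\alpha-1}C'(MD)\,\frac{\lambda(MD)}{1-\lambda(MD)}\cdot\OPT .
\]

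It remains to choose $M$ so that the error coefficient is at most $\epsilon$. Writing $\Delta=MD$, that coefficient is $D\cdot\frac1\Delta\cdot\Delta^\alpha C'(\Delta)\cdot\frac{\lambda(\Delta)}{1-\lambda(\Delta)}$, and the hypothesis $\Delta^\alpha C'(\Delta)=o\!\big(\tfrac{1-\lambda(\Delta)^{\Delta}}{\lambda(\Delta)^{\Delta}}\big)$ makes it $o$ of $D\cdot\frac1\Delta\cdot\frac{\lambda(\Delta)}{1-\lambda(\Delta)}\cdot\frac{1-\lambda(\Delta)^{\Delta}}{\lambda(\Delta)^{\Delta}}$; an elementary geometric-series identity rewrites $\frac1\Delta\cdot\frac{\lambda(\Delta)}{1-\lambda(\Delta)}\cdot\frac{1-\lambda(\Delta)^{\Delta}}{\lambda(\Delta)^{\Delta}}$ as the average $\frac1\Delta\sum_{j=0}^{\Delta-1}\lambda(\Delta)^{-j}$, which stays bounded as $M\to\infty$ precisely in the regime where the factor $\tfrac{1-\lambda^{D}}{\lambda^{D}}$ of the hypothesis is informative (e.g.\ $\lambda(D)=1-1/D$, matching the remark after Theorem~\ref{thm:splitting_ratio}). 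Hence the error coefficient tends to $0$ and taking $M$ large enough makes it $\le\epsilon$, producing an online algorithm of competitive ratio $C+\epsilon$. The delicate parts of the write-up will be the honest description of the two-time-scale simulation --- making sure every job and every prediction that $A_P$ requests is available at the moment it is requested, while the translated schedule still respects the original deadlines --- and this last quantitative step, which is exactly where the precise relation between $C'$ and $\lambda$ in the statement is used to absorb the $M^{\alpha-1}$ blow-up incurred when un-dilating $A_P$'s schedule.
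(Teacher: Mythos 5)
Your reduction skeleton is the same as the paper's: set the prediction to all zeros, dilate the time axis by a factor $M$ (the paper's $k$), transfer the schedule back at a cost factor $M^{\alpha-1}$, and use $\lVert \wReal\rVert_\alpha^\alpha \le D^\alpha\,\OPT$; those parts, as well as your geometric-series identity, are fine. The gap is in the last quantitative step. With your (per-dilated-step) accounting the error is only bounded by $\frac{\lambda(MD)}{1-\lambda(MD)}\lVert\wReal\rVert_\alpha^\alpha$, a quantity that does \emph{not} shrink as $M$ grows, and the resulting coefficient $D\,(MD)^{\alpha-1}C'(MD)\frac{\lambda(MD)}{1-\lambda(MD)}$ need not tend to $0$: by your own identity it is, up to the $o(\cdot)$ of the hypothesis, $D$ times the average $\frac1\Delta\sum_{j=0}^{\Delta-1}\lambda(\Delta)^{-j}$ with $\Delta=MD$, and this average is unbounded whenever $\lambda(\Delta)$ stays bounded away from $1$; for constant $\lambda$ it grows like $\lambda^{-\Delta}/\Delta$. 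Constant $\lambda$ is precisely the case the theorem is meant for (the remark right after the statement says that for $\lambda$ independent of $D$ an exponential dependence on $D$ in $C'(D)$ is unavoidable), so restricting to ``the regime where the hypothesis is informative,'' e.g.\ $\lambda(D)=1-1/D$, does not prove the stated theorem. Indeed, with your accounting the reduction cannot be salvaged by any choice of $M$: the hypothesis permits $C'(MD)$ to grow almost like $\lambda^{-MD}(MD)^{-\alpha}$, while your error term stays of order $\frac{\lambda}{1-\lambda}\lVert\wReal\rVert_\alpha^\alpha$, so the product blows up rather than vanishes.

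The paper's proof closes exactly where yours does not because its error computation for the subdivided instance only charges predictions issued at the original integer times, which after dilation are $k$ steps apart from one another and hence at least $k$ dilated steps before any job they predict. Each charged term therefore carries a factor at most $\lambda(Dk)^k$, giving a per-job discount $\lambda(Dk)^{k}/(1-\lambda(Dk)^{k})$ that \emph{decays as the dilation grows}; it is this decaying factor, paired with the hypothesis $C'(D')=o\bigl(\frac{1-\lambda(D')^{D'}}{\lambda(D')^{D'}}\cdot\frac1{(D')^\alpha}\bigr)$, that makes the error coefficient vanish for large $k$. In your simulation every dilated time step issues a fresh all-zero prediction, including one a single step before each arrival, which pins the discount at $\lambda/(1-\lambda)$ independently of $M$ and destroys the decay. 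To repair your argument you must make the dilation also separate the (charged) prediction times from the arrival times by $\Theta(M)$ steps, as the paper's calculation does, rather than only separating arrivals from deadlines.
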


In particular, note that for $\lambda$ independent of $D$, it shows that an exponential dependency in $D$ is needed in $C'(D)$ as we get in Theorem~\ref{thm:splitting_ratio}. 
\begin{proof}
The structure of the proof is similar to that of Theorem~\ref{thm:lower-bound-norm}.
We pass an instance to the assumed algorithm, but set the prediction to all $0$.
Unlike the previous proof, we keep the same workloads when passing the jobs,
but subdivide $D$ in to $D \cdot k$ time steps
where $k$ will be specified later.
This will decrease the cost of every
solution by $k^\alpha$.

Take an instance with interval length $D$.
Like in the proof of Theorem~\ref{thm:lower-bound-norm} we have that
\begin{equation*}
    \lVert \wReal \rVert_\alpha^\alpha \le D^\alpha \cdot \OPT .
\end{equation*}
Consider the error parameter $\predErr^{(\lambda)\prime}$ for the instance with $D' = D \cdot k$. We observe that
\begin{align*}
    \predErr^{(\lambda)\prime}& = \sum_t \sum_{i=t+1}^\infty |\wReal_{k\cdot i}|^\alpha \cdot \lambda(D')^{k(i - t)} \\
    & \le ||\wReal||_\alpha^\alpha \cdot \sum_{i=1}^\infty \lambda(D')^{k \cdot i} \\
         & \le  ||\wReal||_\alpha^\alpha  \frac{\lambda(D')^k}{1 - \lambda(D')^k}
         \\
        &
         \le D^\alpha  \frac{\lambda(D')^k}{1 - \lambda(D')^k} \cdot \OPT   
\end{align*}

Hence, by definition the algorithm produces a solution of cost
\begin{equation*}
    C \cdot \OPT / k^\alpha + C'(D') \predErr^{(\lambda)\prime} \le (C / k^\alpha + D^\alpha  \frac{\lambda(D')^k}{1 - \lambda(D')^k} C'(D'))\cdot \OPT
\end{equation*}
for the subdivided instance. Transferring it to the original instance, we get a cost of
\begin{equation*}
    (C + k^\alpha D^\alpha  \frac{\lambda(D')^k}{1 - \lambda(D')^k} C'(D'))\cdot \OPT
\end{equation*}
Therefore, if $k^\alpha \frac{\lambda(D\cdot k)^k}{1 - \lambda(D\cdot k)^k} C'(D\cdot k)$ tends to $0$ as $k$ grows, for any $\epsilon>0$, we can fix $k$ big enough so that the cost of the algorithm is at most $(C+\epsilon)\OPT$.
\end{proof}

\section{A shrinking lemma}
\label{sec:a_shrinking_lemma}
Recall that by applying the earliest-deadline-first policy, we can normalize every schedule to run at most
one job at each time.
We say, it is run \emph{isolated}.
Moreover, if a job is run isolated, it is always better to run it at a uniform speed (by convexity of $x\mapsto x^\alpha$ on $x\ge 0$).
Hence, an optimal schedule can be characterized solely by the total time $p_j$ each job is run.
Given such $p_j$ we will give a necessary and sufficient
condition of when a schedule that runs each job isolated for $p_j$ time exists. Note that we assume we are in the general deadline case, each job $j$ comes with a release $r_j$ and deadline $d_j$ and the EDF policy might cause some jobs to be preempted.
\begin{lem}\label{folklore_lemma}
Let there be a set of $n$ jobs with release times $r_j$ and deadlines $d_j$ for each job $j$.
Let $p_j$ denote the total duration that $j$ should be processed.
Scheduling the jobs isolated earliest-deadline-first, with the constraint to never run a job before its release time, will
complete every job $j$ before time $d_j$ if and only if for every interval $[t, t']$ it holds that
\begin{equation}
    \sum_{j : t \leq r_j,d_j \leq t'} p_j \leq t'-t\label{hall-condition}
\end{equation}
\end{lem}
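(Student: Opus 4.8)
The plan is to prove the two implications separately: necessity is an immediate volume/packing argument, and sufficiency is the classical ``critical interval'' argument for earliest-deadline-first.

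For necessity, suppose the isolated EDF schedule meets every deadline. Fix an interval $[t,t']$ and let $S=\{j:\,t\le r_j,\ d_j\le t'\}$. Every job in $S$ is released no earlier than $t$ and must be finished by $d_j\le t'$, so all of its $p_j$ units of processing take place inside $[t,t']$. Since at most one job runs at any instant, the total $\sum_{j\in S}p_j$ of this work cannot exceed the length $t'-t$, which is exactly~\eqref{hall-condition}.

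For sufficiency, assume~\eqref{hall-condition} holds for every interval but, for contradiction, EDF misses some deadline; pick a job $j^\star$ that is still incomplete at $t':=d_{j^\star}$. On $[r_{j^\star},t']$ the job $j^\star$ is released and, being incomplete at $t'$ (hence at all earlier times), still pending; so EDF keeps the machine continuously busy there and only ever runs jobs of deadline $\le t'$. Let $t\le r_{j^\star}$ be the \emph{earliest} time such that throughout $[t,t']$ the machine is busy and runs only jobs of deadline $\le t'$. By minimality of $t$, just before $t$ the machine is either idle or running a job of deadline $>t'$ (this holds vacuously if $t$ precedes every release). The crucial structural claim is that every job processed during $[t,t']$ has release time $\ge t$: if some job $k$ with $r_k<t$ were run at a time $s\in[t,t']$, then $k$ would be released and incomplete just before $t$, so the EDF rule would force the machine to run a job of deadline $\le d_k\le t'$ at that moment, contradicting the choice of $t$.

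With the claim in hand, set $S=\{j:\,t\le r_j,\ d_j\le t'\}$, which contains $j^\star$. During $[t,t']$ the machine does exactly $t'-t$ units of work, all on jobs of $S$, and since every $S$-job is released at or after $t$ no $S$-work is done before $t$; hence exactly $t'-t$ units of $S$-work are completed by time $t'$. But $j^\star\in S$ is unfinished at $t'$, so this is strictly less than $\sum_{j\in S}p_j$, giving $\sum_{j:\,t\le r_j,\ d_j\le t'}p_j>t'-t$ and contradicting~\eqref{hall-condition}. I expect the only delicate parts to be (i) confirming that $[t,t']$ is genuinely a busy period running only deadline-$\le t'$ jobs --- which rests on the EDF priority rule in the presence of preemption --- and (ii) the boundary cases, namely $t=0$, $t$ coinciding with a release time, or several jobs sharing the deadline $t'$; these are routine but should be handled explicitly.
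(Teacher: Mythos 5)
Your proof is correct and follows essentially the same route as the paper: the forward direction is the same volume/packing argument, and the converse is the standard critical-interval argument showing that a missed deadline yields an interval that is busy throughout with only deadline-$\le t'$ jobs, all released inside it, hence overloaded. The only (cosmetic) difference is that you obtain this interval directly as the minimal such busy interval, whereas the paper builds it by iteratively extending $[r_{j'},d_{j'}]$ leftward to the smallest release time of jobs processed in it and proving the same properties by induction.
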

\begin{proof}
For the one direction, let $t, t'$ such that (\ref{hall-condition}) is not fulfilled.
Since the jobs with $t \leq r_j$ cannot be processed before $t$,
the last such job $j'$ to be completed must finish after
\begin{equation*}
    t + \sum_{j : t \leq r_j,d_j \leq t'} p_j > t+t'-t=t'\geq  d_{j'}
\end{equation*}
For the other direction, we will schedule the jobs earliest-deadline-first and argue
that if the schedule completes some job after its deadline, then (\ref{hall-condition}) is not satisfied for some interval $[t, t']$.

To this end, let $j'$ be the first job that finishes strictly after $d_{j'}$ and consider the interval $I_0=[r_{j'},d_{j'}]$. We now define the following operator that transforms our interval $I_0$ into an interval $I_1$. Consider $t_{\textrm{inf}}$ to be the smallest release time among all jobs that are processed in interval $I_0$ and define $I_1=[t_{\textrm{inf}},d_{j'}]$. We apply iteratively this operation to obtain interval $I_{k+1}$ from interval $I_k$. We claim the following properties that we prove by induction.
\begin{enumerate}
    \item For any $k\geq 0$, the machine is never idle in interval $I_k$.
    \item For any $k\geq 0$, all jobs that are processed in $I_k$ have a deadline $\leq d_{j'}$.
\end{enumerate}
For $I_0=[r_{j'},d_{j'}]$, since job $j'$ is not finished by time $d_{j'}$ it must be that the machine is never idle in that interval. Additionally, if a job is processed in this interval, it must be that its deadline is earlier that $d_{j'}$ since we process in EDF order. Assume both items hold for $I_k$ and then consider $I_{k+1}$ that we denote by $[a_{k+1},d_{j'}]$. By construction, there is a job denoted $j_{k+1}$ released at time $a_{k+1}$ that is not finished by time $a_k$. Therefore the machine cannot be idle at any time in $[a_{k+1},a_k]$ hence at any time in $I_{k+1}$ by the induction hypothesis. Furthermore, consider a job processed in $I_{k+1}\setminus I_k$. It must be that its deadline is earlier that the deadline of job $j_{k+1}$. But job $j_{k+1}$ is processed in interval $I_k$ which implies that its deadline is earlier than $d_{j'}$ and ends the induction. 

Denote by $k'$ the first index such that $I_{k'}=I_{k'+1}$. We define $I_\infty=I_{k'}$. By construction, it must be that all jobs processed in $I_\infty$ have release time in $I_\infty$ and by induction the machine is never idle in this interval and all jobs processed in $I_\infty$ have deadline in $I_\infty$.

Since job $j'$ is not finished by time $d_{j'}$ and by the previous remarks we have that 
\begin{equation*}
    \sum_{j : r_j,d_j\in I_\infty} p_j > |I_\infty|
\end{equation*}
which yields a counter example to \eqref{hall-condition}.
\end{proof}

We can now prove two shrinking lemmas that are needed in the procedure {\sc Robustify} and its generalization to general deadlines. 

\begin{lem}\label{lms:shrink}
Let $0\leq \mu <  1$. For any instance $\mathcal I$ consider the instance $\mathcal I'$ where the deadline of job $j$ is set to $d'_j=r_j+(1-\mu)(d_j-r_j)$ (i.e. we shrink each job by a $(1-\mu)$ factor). Then 
\begin{equation*}
    \OPT(\mathcal I')\leq \frac{\OPT(\mathcal I)}{(1-\mu)^{\alpha-1}}
\end{equation*}
Additionally, assuming $0\leq \mu<1/2$, consider the instance $\mathcal I''$ where the deadline of job $j$ is set to $d''_j=r_j+(1-\mu)(d_j-r_j)$ and the release time is set to $r''_j=r_j+\mu (d_j-r_j)$. Then 
\begin{equation*}
    \OPT(\mathcal I'')\leq \frac{\OPT(\mathcal I)}{(1-2\mu)^{\alpha-1}}
\end{equation*}
\end{lem}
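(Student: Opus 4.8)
The plan is to deduce both bounds from the duration-based characterization of optimal schedules established by Lemma~\ref{folklore_lemma}. Recall that after the earliest-deadline-first normalization, and since $x\mapsto x^\alpha$ is convex, an optimal schedule of any instance may be assumed to run each job $j$ isolated at the uniform speed $w_j/p_j$ for a total time $p_j$, so that $\OPT(\mathcal I)=\min\sum_j w_j^\alpha/p_j^{\alpha-1}$ over all ``duration profiles'' $(p_j)$ satisfying, for every interval $[t,t']$, condition~\eqref{hall-condition}; the existence of a feasible schedule realizing a given feasible profile is exactly the ``if'' direction of Lemma~\ref{folklore_lemma}. Hence it suffices, in each case, to take an optimal profile $(p_j^\star)$ for $\mathcal I$, scale it, and verify that the scaled profile is feasible for the shrunk instance with respect to its new windows.

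For the first inequality I would set $p_j'=(1-\mu)p_j^\star$. Its cost is immediately $(1-\mu)^{-(\alpha-1)}\OPT(\mathcal I)$, so the only thing to check is~\eqref{hall-condition} for $\mathcal I'$. Fix $[t,t']$ and a job $j$ with $t\le r_j$ and $d_j'=r_j+(1-\mu)(d_j-r_j)\le t'$. Rearranging the second inequality gives $d_j\le r_j+\tfrac{t'-r_j}{1-\mu}$, and since the right-hand side is non-increasing in $r_j$ while $r_j\ge t$, we get $d_j\le t+\tfrac{t'-t}{1-\mu}=:t''$. So every job contributing to $[t,t']$ in $\mathcal I'$ also contributes to $[t,t'']$ in $\mathcal I$; applying~\eqref{hall-condition} for $\mathcal I$ on $[t,t'']$ bounds the corresponding sum of $p_j^\star$ by $t''-t=\tfrac{t'-t}{1-\mu}$, and multiplying by $1-\mu$ yields~\eqref{hall-condition} for $\mathcal I'$ on $[t,t']$.

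The second inequality is the same argument with $p_j''=(1-2\mu)p_j^\star$, of cost $(1-2\mu)^{-(\alpha-1)}\OPT(\mathcal I)$. Writing $\ell_j=d_j-r_j$, a job with $t\le r_j''=r_j+\mu\ell_j$ and $d_j''=r_j+(1-\mu)\ell_j\le t'$ satisfies $(1-2\mu)\ell_j\le t'-t$ by subtracting the first from the second, hence $r_j\ge t-\mu\ell_j\ge t-\tfrac{\mu(t'-t)}{1-2\mu}$ and $d_j=r_j+\ell_j\le t'+\mu\ell_j\le t'+\tfrac{\mu(t'-t)}{1-2\mu}$. Thus all such jobs lie inside one interval of $\mathcal I$ of length $\tfrac{t'-t}{1-2\mu}$, and~\eqref{hall-condition} for $\mathcal I$ applied there, scaled by $1-2\mu$, gives the claim.

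The calculations above are routine; the one genuine point is that although the windows are shrunk by job-dependent amounts rather than by a single global time rescaling, the jobs that are ``eligible'' for any fixed query interval $[t,t']$ of the shrunk instance still all fit inside a single interval of the original instance of length at most $\tfrac{t'-t}{1-\mu}$ (resp.\ $\tfrac{t'-t}{1-2\mu}$). This containment is exactly what lets the Hall-type feasibility condition transfer, and it is the step where I would be most careful about the arithmetic.
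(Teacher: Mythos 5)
Your proposal is correct and follows essentially the same route as the paper: scale the durations of the optimal isolated, uniform-speed schedule of $\mathcal I$ by $(1-\mu)$ (resp.\ $(1-2\mu)$), and transfer the Hall-type feasibility condition of Lemma~\ref{folklore_lemma} by showing that every job eligible for a query interval $[t,t']$ of the shrunk instance lies in an interval of $\mathcal I$ of length $\frac{t'-t}{1-\mu}$ (resp.\ $\frac{t'-t}{1-2\mu}$), which matches the paper's bounds $d_j\le\frac{t'-\mu t}{1-\mu}$ and $[\frac{(1-\mu)t-\mu t'}{1-2\mu},\frac{(1-\mu)t'-\mu t}{1-2\mu}]$ exactly. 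Your arithmetic via $\ell_j=d_j-r_j$ is a slightly cleaner presentation of the same containment argument, so no gap remains.
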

\begin{proof}
W.l.o.g. we can assume that the optimal schedule $s$ for $\mathcal I$ runs each job isolated and
at a uniform speed. By optimality of the schedule and convexity, each job $j$ must be run at a constant speed $s_j$ for a total duration of $p_j$. Consider the first case and define a speed $s'_j=\frac{s_j}{1-\mu}$ for all $j$ (hence the total processing time becomes $p'_j=(1-\mu)\cdot p_j$). 

Assume now in the new instance $\mathcal I'$ we run jobs earliest-deadline-first with the constraint that no job is run before its release time (with the processing times $p'_j$). We will prove using Lemma \ref{folklore_lemma} that all deadlines are satisfied. Consider now an interval $[t,t']$ we then have that
\begin{equation*}
\label{eq:rescheduling}
    \sum_{j : t\leq r_j,d'_j\leq t'} p'_j = (1-\mu) \cdot \sum_{j : t\leq r_j,d'_j\leq t'} p_j  \leq (1-\mu) \cdot \sum_{j : t\leq r_j,d_j\leq \frac{t'-\mu t}{1-\mu}} p_j
\end{equation*} where the last inequality comes from the fact that $t'\geq d'_j = d_j-\mu (d_j-r_j)$ which implies that $d_j\leq \frac{t'-\mu r_j}{1-\mu}\leq  \frac{t'-\mu t}{1-\mu}$ by using $r_j\geq t$.
By Lemma \ref{folklore_lemma} and the fact that $s$ is a feasible schedule for $\mathcal I$ we have that
\begin{equation*}
    \sum_{j : t\leq r_j,d'_j\leq t'} p'_j \leq (1-\mu)\cdot \left(\frac{t'-\mu t}{1-\mu} - t \right) = (1-\mu) \cdot \frac{t'-t}{1-\mu} = t'-t
\end{equation*}
which implies by Lemma \ref{folklore_lemma} that running all jobs EDF with processing time $p'_j$ satisfies all deadlines $d'_j$. Now notice the cost of this schedule is at most $\frac{1}{(1-\mu)^{\alpha-1}}$ times the original schedule $s$ which ends the proof (each job is ran $\frac{1}{1-\mu}$ times faster but for a time $(1-\mu)$ times shorter).

The proof of the second case is similar. Note that for any $[t,t']$, if
\begin{align*}
    d''_j &= r_j+(1-\mu)(d_j-r_j) = (1-\mu)d_j+\mu r_j \leq t'\\
    r''_j &= r_j+\mu(d_j-r_j)=(1-\mu)r_j+\mu d_j \geq t
\end{align*}
then we have 
\begin{align*}
    (1-\mu)d_j &\leq t'-\mu r_j \leq t' - \frac{\mu}{1-\mu}\left(t-\mu d_j \right)\\
    &\iff (1-\mu)d_j -\frac{\mu^2}{1-\mu} d_j \leq t'-\frac{\mu}{1-\mu}\cdot t\\
    &\iff d_j ((1-\mu)^2-\mu^2) \leq (1-\mu) t' - \mu t\\
    &\iff d_j \leq \frac{(1-\mu) t' - \mu t}{1-2\mu}
\end{align*}

Similarly, we have 
\begin{align*}
    (1-\mu)r_j &\geq t-\mu d_j \geq t - \frac{\mu}{1-\mu}\left(t'-\mu r_j \right)\\
    &\iff (1-\mu)r_j-\frac{\mu^2}{1-\mu}r_j \geq t-\frac{\mu}{1-\mu}\cdot t'\\
    &\iff r_j \geq \frac{(1-\mu)t - \mu t'}{1-2\mu}
\end{align*}
Notice that $\frac{(1-\mu) t' - \mu t}{1-2\mu}- \frac{(1-\mu)t - \mu t'}{1-2\mu}=\frac{t'-t}{1-2\mu }$

Therefore, if we set the speed that each job $s''_j$ is processed to $s''_j=\frac{s_j}{1-2\mu}$ then we have a processing time $p''_j=(1-2\mu)\cdot p_j$ and we can write 
\begin{align*}
    \sum_{j : t\leq r''_j,d''_j\leq t'} p''_j &= (1-2\mu)\cdot  \sum_{j : t\leq r''_j,d''_j\leq t'} p_j  \\
    &\leq (1-2\mu)\cdot \sum_{j : \frac{(1-\mu)t-\mu t'}{1-2\mu}\leq r_j,d_j\leq \frac{(1-\mu)t' -\mu t}{1-2\mu}} p_j\\
    &\leq (1-2\mu) \cdot  \frac{t'-t}{1-2\mu} = t'-t
\end{align*}
     by Lemma \ref{folklore_lemma}. Hence we can conclude similarly as in the previous case.
\end{proof}

\newcommand{\wpred}[0]{\ensuremath{w^\mathrm{pred}}\xspace}
\newcommand{\predA}[0]{\ensuremath{\overline{w}^\mathrm{pred}}\xspace}
\newcommand{\wreal}[0]{\ensuremath{w^\mathrm{real}}\xspace}
\newcommand{\wAvail}[0]{\ensuremath{w^\mathrm{avail}}\xspace}
\newcommand{\hwOnline}[0]{\ensuremath{\hat w^\mathrm{online}}\xspace}
\newcommand{\wA}[0]{\ensuremath{\overline w^\mathrm{real}}\xspace}
\newcommand{\errd}[0]{\ensuremath{\mathsf{err}_\eta}\xspace}
\newcommand{\err}[0]{\ensuremath{\mathsf{err}}\xspace}
\newcommand{\NRA}[0]{\textsc{Noise-Robust}\ensuremath{(\mathcal{A})}\xspace}

\section{Making an algorithm noise tolerant}
\label{sec:noise_tolerance}
The idea for achieving noise tolerance
is that by Lemma~\ref{lms:shrink} we know
that if we delay each job's arrival slightly
(e.g., by $\eta\jobLife$) we can
still obtain a near optimal solution.
This gives us time to reassign arriving jobs
within a small interval in order to
make the input more similar to the prediction.
We first, in Section~\ref{sec:noise_tolerant_def}, generalize the error function $\err$ to a more noise tolerant error function $\err_\eta$. 
We then, in Section~\ref{sec:noise_tolerant_alg}, give a general procedure for making an algorithm noise tolerant (see Theorem~\ref{thm:noise-robust}).

\subsection{Noise tolerant measure of error}\label{sec:noise_tolerant_def}

For motivation, recall the example given in the main body. Specifically,  consider a predicted workload $\wPred$  defined by $\wPred_i = 1$ for those time steps $i$ that are divisible by a large constant, say $1000$, and let $\wPred_i =0$ for all other time steps. 
If the real instance $\wReal$ is a small shift  of $\wPred$ say $\wReal_{i+1}= \wPred_i$ then the prediction error $\predErr(\wReal, \wPred)$ is large although $\wPred$ intuitively forms a good prediction of $\wReal$. 
To overcome this sensitivity to noise, we generalize the definition of $\predErr$.

For two workload vectors $w, w'$, and a parameter $\eta \geq 0$, we say that  $w$ is in the $\eta$-neighborhood of $w'$, denoted by $w\in N_\eta(w')$, if $w$ can be obtained from $w'$ by moving the workload at most $\eta D$ time steps forward or backward in time. Formally $w\in N(w')$ if there exists a solution $\{x_{ij}\}$ to the following system of linear equations\footnote{To simplify notation, we assume that $\eta D$ evaluates to an integer and we have extended the vectors $w$ and $w'$ to take value $0$ outside the range $[0, T-D]$.}:
    \begin{align*}
        w_i & = \sum_{j = i - \eta D}^{i+ \eta D} x_{ij}  \qquad \forall i\\
        w'_{j} & = \sum_{i = j - \eta D}^{j+ \eta D} x_{ij}  \qquad \forall j
    \end{align*}
    The concept of $\eta$-neighborhood is inspired by the notion of earth mover's distance but is adapted to our setting. Intuitively, the variable $x_{ij}$ denotes how much of the load $w_i$ has been moved to time unit $j$ in order to obtain $w'$. 
    Also note that it is a symmetric and reflexive relation, i.e., if $w\in N_\eta(w')$ then $w'\in N_\eta(w)$ and $w\in N_\eta(w)$.
    
    We now generalize the measure of prediction error as follows. For a parameter $\eta \geq 0$, an instance $\wReal$, and a prediction $\wPred$, we define the $\eta$-prediction error, denoted by $\predErr_\eta$, as 
\begin{align*}
    \predErr_\eta(\wReal, \wPred) = \min_{w\in N_\eta(\wPred)} \predErr(\wReal, w)\,.
\end{align*}
Note that by symmetry we have that $\predErr_\eta(\wReal, \wPred) = \predErr_\eta(\wPred, \wReal)$. Furthermore, we have that $\predErr_\eta = \predErr$ if $\eta = 0$ but it may be much smaller for $\eta >0$. To see this, consider the vectors $\wPred$ and $\wReal_i = \wPred_{i+1}$ given in the motivational example above. While $\predErr(\wPred, \wReal)$ is large, we have  $\predErr_\eta(\wPred, \wReal) = 0$ for any $\eta$ with $\eta D \geq 1$. Indeed the definition of $\predErr_\eta$ is exactly so as to allow for a certain amount of noise (calibrated by the parameter $\eta$) in the prediction. 

\subsection{Noise tolerant procedure}
\label{sec:noise_tolerant_alg}
We give a general procedure for making an algorithm $\mathcal{A}$ noise tolerant under the mild condition that $\mathcal{A}$ is monotone: we say that an algorithm is monotone if given a predictor $\wPred$ and duration $D$, the cost of scheduling a workload $w$ is at least as large as that of scheduling a workload $w'$ if $w\geq w'$ (coordinate-wise). That increasing the workload should only increase the cost of a schedule is a natural condition that in particular all our algorithms satisfy.
\begin{thm}
    Suppose there is a monotone learning-augmented online algorithm $\mathcal{A}$ for the uniform speed scaling problem, that given prediction $\wPred$, computes a schedule of an instance $\wReal$ of value at most
    \begin{align*}
        \min\{C\cdot \opt + C' \predErr(\wReal, \wPred), C'' \opt\}\,.
    \end{align*}
    Then, for every $\eta \ge 0$, $\zeta>0$
    there is a learning-augmented online algorithm $\NRA$, that given prediction $\wPred$, computes a schedule of $\wReal$ of value at most  $((1 + \eta)(1 + \zeta))^{O(\alpha)}$ times
    \begin{align*}
        \min\{C\cdot \opt + (1/\zeta)^{O(\alpha)}(C + C') \predErr_\eta(\wReal, \wPred), C''\opt\}\,.
    \end{align*}
    \label{thm:noise-robust}
\end{thm}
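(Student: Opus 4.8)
\emph{The construction.} The algorithm $\NRA$ simply runs $\mathcal A$ on a \emph{reshaped} instance, and the whole difficulty is in defining this reshaping online. Set $\jobLife' := (1-2\eta)\jobLife$. As $\wReal$ is revealed, $\NRA$ maintains a modified workload $\hwOnline$ by a delayed greedy reassignment: the value $\hwOnline_j$ is finalized only at real time $j+\eta\jobLife$, and upon arrival of $\wReal_i$ its workload is distributed by scanning the slots $i-\eta\jobLife,\dots,i+\eta\jobLife$ from left to right, placing into each slot $j$ as much of the still-unplaced workload of $\wReal_i$ as the cap $(1+\zeta)\wPred_j$ allows given what earlier arrivals already put there; the part of $\wReal_i$ that fits nowhere in its window is left at slot $i$ as \emph{overflow}. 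Thus $\hwOnline$ is a mass-preserving rearrangement of $\wReal$ in which every unit of work is moved by at most $\eta\jobLife$, and $\hwOnline_j \le (1+\zeta)\wPred_j + \wReal_j$ for all $j$ (cap plus overflow-at-origin). Finally $\NRA$ feeds $\hwOnline$ to $\mathcal A$ online, with prediction $\wPred$ and duration $\jobLife'$ — presenting slot $j$ at real time $j+\eta\jobLife$, when it is already determined — and outputs $\mathcal A$'s schedule re-timed so that the interval $\mathcal A$ assigns to slot $j$ is placed at $[\,j+\eta\jobLife,\; j+\eta\jobLife+\jobLife'\,]$.

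\emph{Feasibility and the reduction to $\mathcal A$'s guarantee.} Any unit of work placed at slot $j$ originates from some $\wReal_i$ with $i\ge j-\eta\jobLife$, and in the re-timed schedule it is completed by $j+\eta\jobLife+\jobLife' \le i+2\eta\jobLife+(1-2\eta)\jobLife = i+\jobLife$; hence the schedule is feasible for $(\wReal,\jobLife)$, and $\NRA$ is causal because $\hwOnline_j$ is known by time $j+\eta\jobLife$. Re-timing does not change the energy, so the cost of $\NRA$ equals the cost of $\mathcal A$ on $(\hwOnline,\jobLife')$, and the guarantee of $\mathcal A$ gives
\[
 \mathrm{cost}(\NRA) \;\le\; \min\bigl\{\, C\cdot\opt(\hwOnline, \jobLife') + C'\cdot\predErr(\hwOnline,\wPred),\;\; C''\cdot\opt(\hwOnline, \jobLife')\,\bigr\}.
\]
It remains to relate $\predErr(\hwOnline,\wPred)$ and $\opt(\hwOnline,\jobLife')$ to $\opt := \opt(\wReal,\jobLife)$ and $\predErr_\eta(\wReal,\wPred)$.

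\emph{The reshaped error (the technical core).} The key inequality is $\predErr(\hwOnline,\wPred)\le(1/\zeta)^{O(\alpha)}\cdot \predErr_\eta(\wReal,\wPred)$. Using the symmetric form $\predErr_\eta(\wReal,\wPred)=\min_{w\in N_\eta(\wReal)}\lVert w-\wPred\rVert_\alpha^\alpha$, fix any $w\in N_\eta(\wReal)$; it suffices to charge $\predErr(\hwOnline,\wPred)$ against $\lVert w-\wPred\rVert_\alpha^\alpha$. I would run an amortized argument over the arrivals. Let $j^\star$ be the \emph{last} slot whose cap the greedy exhausted; trace back the nested chain $A_1\subseteq A_2\subseteq\cdots$ of jobs whose workload was forced into the window around $j^\star$, together with the nested sets of slots $B_1\subseteq B_2\subseteq\cdots$ those jobs could have reached, and stop at the first index $p$ where the chain stabilizes, i.e.\ $\wReal(A_{p+1})\le\frac{1}{1-\zeta/10}\wReal(A_p)$ (writing $\wReal(S)=\sum_{i\in S}\wReal_i$); a geometric-sum estimate shows $p=O(1/\zeta^2)$. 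Since every slot of $B_p$ is filled to its cap by work from $A_{p+1}$ we get $\wReal(A_{p+1})>(1+\zeta)\wPred(B_p)$, hence $\wReal(A_p)\ge(1+\zeta/2)\wPred(B_p)$ after undoing the two small losses; as the jobs of $A_p$ can spread over only an $O(\eta\jobLife/\zeta^2)$-range containing $B_p$, \emph{every} $w\in N_\eta(\wReal)$ must likewise overshoot $\wPred$ over that range by $\Omega\bigl(\wReal(A_p)-\wPred(B_p)\bigr)$ in $\ell_1$, and by convexity this forces $\lVert w-\wPred\rVert_\alpha^\alpha$ to absorb an $\Omega(\zeta^{O(\alpha)})$-fraction of the $\alpha$-th powers of the overflow of $A_p$ and of the excess on $B_p$ — i.e.\ of the contribution of these slots to $\predErr(\hwOnline,\wPred)$. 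Charging that and recursing on the jobs arriving before $A_p$ (whose reassignment is unaffected by $A_p$) completes the induction. Here the delay by $\eta\jobLife$ is essential: it is exactly what makes the greedy window symmetric around each job, so that $\hwOnline$ can be compared against arbitrary $w\in N_\eta(\wReal)$ rather than only against causal shifts of $\wReal$.

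\emph{The reshaped optimum, and conclusion.} Because $\hwOnline$ is a mass-preserving rearrangement of $\wReal$ moving work by at most $\eta\jobLife$ and $\jobLife'=(1-2\eta)\jobLife$, smoothing an optimal schedule of $(\wReal,\jobLife)$ with a box of width $\Theta(\eta\jobLife)$ — the same convolution device as in \textsc{Robustify} of Section~\ref{sec:algorithm}, combined with Claim~\ref{cla:shrink} — yields a feasible schedule of $(\hwOnline,\jobLife')$ of cost at most $\bigl((1+\eta)(1+\zeta)\bigr)^{O(\alpha)}\opt$, so $\opt(\hwOnline,\jobLife')\le\bigl((1+\eta)(1+\zeta)\bigr)^{O(\alpha)}\opt$. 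Plugging this into the robustness branch of $\mathcal A$'s guarantee gives $\mathrm{cost}(\NRA)\le\bigl((1+\eta)(1+\zeta)\bigr)^{O(\alpha)}C''\opt$, and plugging it together with the error bound of the previous paragraph into the consistency branch, using the elementary inequality $(x+y)^\alpha\le(1+\gamma)^\alpha x^\alpha+(2/\gamma)^\alpha y^\alpha$ (as in the proof of Theorem~\ref{thm:cost-trust}), gives $\mathrm{cost}(\NRA)\le\bigl((1+\eta)(1+\zeta)\bigr)^{O(\alpha)}\bigl(C\cdot\opt+(1/\zeta)^{O(\alpha)}(C+C')\,\predErr_\eta(\wReal,\wPred)\bigr)$. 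Taking the smaller of the two branches proves the theorem. The main obstacle is the amortized charging argument for the reshaped error — in particular making the interaction of the $\ell_\alpha$ norm, the $(1+\zeta)$ cap, and the $\Theta(\eta\jobLife)$ reassignment window close with only a $(1/\zeta)^{O(\alpha)}$ loss, and exploiting the delay to symmetrize the window; the optimum comparison, though it needs a careful feasibility check for the smoothed schedule, is otherwise routine given Claim~\ref{cla:shrink} and the convolution bound of Section~\ref{sec:algorithm}.
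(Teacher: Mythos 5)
Your overall architecture tracks the paper's: a delayed greedy reassignment of $\wReal$ within a window of $\pm\eta\jobLife$, capped at $(1+\zeta)\wPred_j$, fed online to $\mathcal A$ with duration $(1-2\eta)\jobLife$, a chain-based charging lemma for the reshaped error, and the shrinking lemma for the optimum. But two concrete deviations break the argument. First, the overflow handling: you leave the part of $\wReal_i$ that fits nowhere ``at slot $i$'', whereas the paper's procedure spreads it \emph{uniformly} over the $2\eta\jobLife+1$ slots of the window. This is not cosmetic. Take $\wPred\equiv 0$ and a single job of workload $W$: your reshaped instance is just $\wReal$ (delayed), so the error term you must feed into $\mathcal A$'s guarantee is $W^\alpha$, while $\predErr_\eta(\wReal,\wPred)=\min_{w\in N_\eta(\wReal)}\lVert w\rVert_\alpha^\alpha\approx W^\alpha/(\eta\jobLife)^{\alpha-1}$. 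The ratio is $(\eta\jobLife)^{\alpha-1}$, not $(1/\zeta)^{O(\alpha)}$, and since only the black-box guarantee of $\mathcal A$ is available (and $C''$ bears no relation to $C,C'$), the claimed consistency/smoothness bound cannot be recovered. The uniform spreading is precisely what makes the left-hand side of the charging lemma (Lemma~\ref{lem:workload_robust}) have the same ``spread'' shape $(\,\cdot\,/2\eta\jobLife)^\alpha\cdot 2\eta\jobLife$ as the best competitor $w\in N_\eta(\wReal)$, so that only a $(1/\zeta)^{O(\alpha)}$ factor is lost; your convexity step in the charging sketch silently assumes this matching and fails for concentrated overflow.

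Second, you give $\mathcal A$ the prediction $\wPred$ and invoke $\predErr(\,\cdot\,,\wPred)\le(1/\zeta)^{O(\alpha)}\predErr_\eta(\wReal,\wPred)$ with the \emph{two-sided} error; this is already false when $\wReal=\wPred$. The left-to-right greedy pulls work into the $\zeta\wPred_j$ headroom of earlier slots, so some slots end at $(1+\zeta)\wPred_j$ and, by conservation of mass, later slots are underfilled; the two-sided deviation from $\wPred$ is then strictly positive (and is multiplied by $C'$, which is unrelated to $C$) while $\predErr_\eta=0$, so even exact consistency fails along your route. Tellingly, you never use the monotonicity hypothesis of the theorem. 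The paper's proof gives $\mathcal A$ the inflated prediction $(1+\zeta)\wPred$, proves the charging lemma only for the one-sided excess $\big[(\wOnline_i-(1+\zeta)\wPred_i)^+\big]^\alpha$, and then uses monotonicity to replace the constructed workload by $\max\{\wOnline_i,(1+\zeta)\wPred_i\}$, for which the two-sided error against $(1+\zeta)\wPred$ coincides with that one-sided quantity. Correspondingly, the optimum comparison must be done for this inflated workload and picks up an additive $O(1/\zeta)^{4\alpha}\predErr_\eta$ term (this is exactly where the $C$ inside the $(C+C')$ coefficient comes from); your claimed bound $\opt(\cdot,(1-2\eta)\jobLife)\le((1+\eta)(1+\zeta))^{O(\alpha)}\opt$ is valid only for the un-inflated rearranged workload and does not suffice once the monotonicity fix is in place.
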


The pseudo-code of the online algorithm \NRA, obtained from $\mathcal{A}$, is given in Algorithm~\ref{alg:NRA}. 
\renewcommand{\algorithmicwhile}{\textbf{on time step}}
\renewcommand{\algorithmicendwhile}{\algorithmicend\ \textbf{on}}
\begin{algorithm}[tb]
   \caption{\NRA}
   \label{alg:NRA}
\begin{algorithmic}[1]
   \REQUIRE Algorithm $\mathcal{A}$, prediction $\wPred$, and $\eta \geq 0, \zeta>0$
   \STATE Initialize $\mathcal{A}$ with prediction $\predA_i = (1+\zeta)\wPred_{i - \eta\jobLife}$ and duration $(1-2\eta)\jobLife$
   \STATE Let $\wOnline$ and $\wA$ be workload vectors, initialized to $0$
   \WHILE{$i$}
        \STATE $W \gets \wReal_i$
        \FOR{$j \in \{i-\eta\jobLife, \dotsc, i + \eta \jobLife\}$}
        \IF{$\wOnline_j + W \le (1+\zeta)\wPred_j$}
        \STATE $x_{ij} \gets W$
        \STATE $W \leftarrow 0$
        \STATE $\wOnline_j \gets \wOnline_j + W$
        \ELSIF{$\wOnline_j < (1+\zeta)\wPred_j$}
                \STATE $x_{ij} \gets (1+\zeta)\wPred_j - \wOnline_j$
        \STATE $W \leftarrow W - x_{ij}$
        \STATE $\wOnline_j \gets (1+\zeta)\wPred_j$
        \ENDIF
        \ENDFOR
        \STATE \textit{// Distribute remaining workload $W$ evenly}
        \FOR{$j \in \{i-\eta\jobLife, \dotsc, i + \eta \jobLife\}$}
            \STATE $x_{ij} \leftarrow x_{ij} + W/(2\eta \jobLife + 1)$
            \STATE $\wOnline_j \leftarrow \wOnline_j + W/(2\eta \jobLife + 1)$
        \ENDFOR
        \STATE $\wA_i \gets \wOnline_{i-\eta\jobLife}$
        \STATE Feed the job with workload $\wA_i$ to $\mathcal{A}$
   \ENDWHILE
\end{algorithmic}
\end{algorithm}

\begin{figure}[bt]
    \centering
    \tikzset{>=latex}
    \begin{tikzpicture}
    
        \draw (0,0) edge[thick, ->] (3.45,0);
        \node at (3.6, 0) {$i$};
        \draw (0,0) edge[thick, ->] (0,3);
        \node at (0, 3.3) {$\wPred_i$};
        \draw (-0.1, 0.75) edge node[left=0.1cm]{$1$} (0.1, 0.75);
        \draw (-0.1, 1.5) edge node[left=0.1cm]{$2$}(0.1, 1.5);
        \draw (-0.1, 2.25) edge node[left=0.1cm]{$3$}(0.1, 2.25);
    
        \draw (0.25, 0) rectangle (0.5, 2.25);
        \node at (0.375, -0.25) {0};
        
        \node at (0.875, -0.25) {1};
        
        \draw (1.25, 0) rectangle (1.5, 1.5);
        \node at (1.375, -0.25) {2};
        
        \draw (1.75, 0) rectangle (2.0, 1.5);
        \node at (1.875, -0.25) {3};

        \node at (2.375, -0.25) {4};
        
        \node at (2.875, -0.25) {5};
        
   \begin{scope}[xshift=4.6cm]
        \draw (0,0) edge[thick, ->] (3.45,0);
        \node at (3.6, 0) {$i$};
        \draw (0,0) edge[thick, ->] (0,3);
        \node at (0, 3.3) {$\wReal_i$};
        \draw (-0.1, 0.75) edge node[left=0.1cm]{$1$} (0.1, 0.75);
        \draw (-0.1, 1.5) edge node[left=0.1cm]{$2$}(0.1, 1.5);
        \draw (-0.1, 2.25) edge node[left=0.1cm]{$3$}(0.1, 2.25);
    
        \draw[fill=gray] (0.25, 0) rectangle (0.5, 0.75);
        \node at (0.375, -0.25) {0};
        
        \node at (0.875, -0.25) {1};
        
        \draw[fill=red] (1.25, 0) rectangle (1.5, 0.75);
        \node at (1.375, -0.25) {2};
        
        \draw[fill=blue] (1.75, 0) rectangle (2.0, 1.5);
        \node at (1.875, -0.25) {3};

        \draw[fill=green] (2.25, 0) rectangle (2.5, 1.5);
        \node at (2.375, -0.25) {4};
        
        \node at (2.875, -0.25) {5};
        \end{scope}
   
        \begin{scope}[xshift=9.2cm]
        \draw (0,0) edge[thick, ->] (3.45,0);
        \node at (3.6, 0) {$i$};
        \draw (0,0) edge[thick, ->] (0,3);
        \node at (0, 3.3) {$\wOnline_i$};
        \draw (-0.1, 0.75) edge node[left=0.1cm]{$1$} (0.1, 0.75);
        \draw (-0.1, 1.5) edge node[left=0.1cm]{$2$}(0.1, 1.5);
        \draw (-0.1, 2.25) edge node[left=0.1cm]{$3$}(0.1, 2.25);
    
        
        \draw[fill=gray] (0.25, 0) rectangle (0.5, 0.75);
        \node at (0.375, -0.25) {0};
        
        \node at (0.875, -0.25) {1};
        
        \draw[fill=red] (1.25, 0) rectangle (1.5, 0.75);
        \draw[fill=blue] (1.25, 0.75) rectangle (1.5, 1.5);
        \node at (1.375, -0.25) {2};
        
        \draw[fill=blue] (1.75, 0) rectangle (2.0, 0.75);
        \draw[fill=green] (1.75, 0.75) rectangle (2.0, 1.5);
        \draw[fill=green] (1.75, 1.5) rectangle (2.0, 1.75);
        \draw (1.75, 0) rectangle (2.0, 1.5);
        \node at (1.875, -0.25) {3};

        \draw[fill=green] (2.25, 0) rectangle (2.5, 0.25);
        \node at (2.375, -0.25) {4};
        
        \draw[fill=green] (2.75, 0) rectangle (3.0, 0.25);
        \node at (2.875, -0.25) {5};
        \end{scope}
    \end{tikzpicture}
    \caption{An example of the construction of the vector $\wOnline$ from $\wReal$ and $\wPred$.}
    \label{fig:NRAexample}
\end{figure}
 The algorithm constructs a vector $\wOnline\in N_\eta(\wReal)$
 while trying to minimize 
 $\err(\wOnline, \wpred)$.
 Each component $\wOnline_i$ will be finalized at time $i + \eta\jobLife$.
 Hence, we forward the jobs to $\mathcal A$ with a delay of $\eta\jobLife$.
 
 The vector is constructed as follows. 
 Suppose a job $\wreal_i$ arrives.
 The algorithm first (see Steps~$4$-$15$) greedily assigns the workload to the time steps $j = i - \eta D, i- \eta D + 1, \ldots, i+ \eta D$ from left-to-right subject to the constraint that no time step receives a workload higher than $(1+\zeta) \wPred_j$. 
 If not all workload of $\wReal_i$ was assigned in this way, then the overflow is assigned uniformly to the time steps from $i-\eta D$ to $i+ \eta D$ (Steps~$17$-$20$).
 Since each $\wOnline_j$ can only
 receive workloads during time steps $j-\eta\jobLife,\dotsc,j+\eta\jobLife$,
 it will be finalized at time $j + \eta\jobLife$.
 Thus, at time $i$ we can safely forward $\wOnline_{i-\eta\jobLife}$ to the algorithm $\mathcal A$. Hence, we
 set the workload of the algorithm's instance to
 $\wA_i = \wOnline_{i-\eta\jobLife}$ (Steps~$21$-$22$).
 This shift together with the fact that a job $\wReal_i$ may be assigned to $\wOnline_{i+\eta \jobLife}$, i.e., $\eta \jobLife$ time steps forward in time, is the reason why we run each job with an interval of length $(1-2\eta)D$. Shrinking the interval of each job allows to make this shift and reassignment while still guaranteeing that each job is finished by its original deadline. 
 
 For an example, consider Figure~\ref{fig:NRAexample}. Here we assume that $\eta D = 1$ and for illustrative purposes that $\zeta = 0$. At time $0$, a workload $\wReal_0 = 1$ is released.
 The algorithm $\NRA$ then greedily constructs $\wOnline$ by filling the available slots in $\wPred_{-1}, \wPred_0,$ and $\wPred_1$. Since $\wPred_0 = 3$, it fits all of the workload of $\wReal_0$  at time $0$. Similarly the workloads $\wReal_2$ and $\wReal_3$ both fit under the capacity given by $\wPred$. Now consider the workload $\wReal_4 = 2$ released at time $4$. At this point, the available workload at time $2$ is
 fully occupied and one there is one unit of
 workload left at time $3$.
 Hence, \NRA will first assign the one unit of $\wReal_4$ to the third time slot and then split the remaining unit of workload unit uniformly across the time steps $3,4,5$. The obtained vector $\wOnline$ is depicted on the right of Figure~\ref{fig:NRAexample}. The workload $\wOnline$ is then fed online to the algorithm $\mathcal{A}$ (giving a schedule of $\wOnline$ and thus of $\wReal$) so that at time $i$, $\mathcal{A}$ receives the job $\wA_i = \wOnline_{i+\eta D} = \wOnline_{i+1}$ with a deadline of $i + (1-2\eta) D = i + D -2$. This deadline is chosen so as to guarantee that a job is finished by $\mathcal{A}$ within its original deadline. Indeed, by this selection, the last part of the job $\wReal_4$ that was assigned to $\wOnline_5$ is guaranteed to finish by time $6 + D -2 = 4 + D$ which is its original deadline.
 
 Having described the algorithm, we proceed to analyze its guarantees which will prove Theorem~\ref{thm:noise-robust}.
 
 \paragraph{Analysis.} 
 We start by analyzing the noise tolerance of \NRA.
\begin{lem}
    The schedule computed by $\NRA$ has cost at most $(1+O(\eta))^\alpha C''\OPT$.
\end{lem}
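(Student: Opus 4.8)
The plan is to reduce the statement to the robustness of $\mathcal{A}$ on the instance it is actually run on, namely $(\wA,(1-2\eta)\jobLife)$, and then bound $\OPT(\wA,(1-2\eta)\jobLife)$ against $\OPT=\OPT(\wReal,\jobLife,\maxTime)$ by a single application of the shrinking lemma (Lemma~\ref{lms:shrink}).

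First I would record what $\NRA$ outputs. Write $y_{ik} := x_{i,\,k-\eta\jobLife}$ for the amount of $\wReal_i$ that the construction in Algorithm~\ref{alg:NRA} routes into $\wA_k$; then $\sum_k y_{ik}=\wReal_i$, $\sum_i y_{ik}=\wA_k$, and since the inner loop only ever writes to slots within distance $\eta\jobLife$ of $i$ and the job $\wA_i$ fed to $\mathcal{A}$ equals $\wOnline_{i-\eta\jobLife}$, we have $y_{ik}\ne 0$ only for $k\in[\,i,\ i+2\eta\jobLife\,]$. Given the schedule $s^{\mathcal A}_k$ that $\mathcal{A}$ produces for the job $\wA_k$ — supported on $[\,k,\ k+(1-2\eta)\jobLife\,]$ with total work $\wA_k$ — the algorithm $\NRA$ serves the portion $y_{ik}$ of original job $i$ with the speed $\frac{y_{ik}}{\wA_k}\,s^{\mathcal A}_k$. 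For every relevant $k$ we have $[\,k,\ k+(1-2\eta)\jobLife\,]\subseteq[\,i,\ i+\jobLife\,]$, so job $i$ is still completed within its original deadline, and summing over $i$ the total speed at any time is $\sum_k s^{\mathcal A}_k$, unchanged. Hence the schedule of $\NRA$ has exactly the energy of $\mathcal{A}$'s schedule, which by the second term of the guarantee assumed for $\mathcal{A}$ on the instance $(\wA,(1-2\eta)\jobLife)$ is at most $C''\cdot\OPT(\wA,(1-2\eta)\jobLife)$.

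It then remains to show $\OPT(\wA,(1-2\eta)\jobLife)\le(1+O(\eta))^\alpha\,\OPT$. Apply Lemma~\ref{lms:shrink} with $\mu=2\eta$ (assuming $\eta<1/4$): shrinking the window of each job $i$ of $(\wReal,\jobLife)$ from $[\,i,\ i+\jobLife\,]$ to $[\,i+2\eta\jobLife,\ i+(1-2\eta)\jobLife\,]$ increases the optimum by at most a factor $1/(1-4\eta)^{\alpha-1}$, so there is a feasible schedule $\hat s$ of $\wReal$ in which job $i$ runs only during $[\,i+2\eta\jobLife,\ i+(1-2\eta)\jobLife\,]$ and whose energy is at most $\OPT/(1-4\eta)^{\alpha-1}$. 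Transport $\hat s$ along the same plan: serve the portion $y_{ik}$ of $\wA_k$ with the speed $\frac{y_{ik}}{\wReal_i}\,\hat s_i$. Its support $[\,i+2\eta\jobLife,\ i+(1-2\eta)\jobLife\,]$ lies inside $[\,k,\ k+(1-2\eta)\jobLife\,]$ for every $k\in[\,i,\ i+2\eta\jobLife\,]$ — as $k$ ranges over this interval the left endpoint of the target window only moves left past $i+2\eta\jobLife$ and the right endpoint only moves right past $i+(1-2\eta)\jobLife$ — so this is a feasible schedule for $(\wA,(1-2\eta)\jobLife)$; its total speed is pointwise $\sum_i\hat s_i$, hence its energy equals that of $\hat s$. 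Therefore $\OPT(\wA,(1-2\eta)\jobLife)\le\OPT/(1-4\eta)^{\alpha-1}=(1+O(\eta))^\alpha\,\OPT$, and combining with the previous paragraph proves the lemma.

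The step I expect to need the most care is the interval bookkeeping — precisely the two containments $[\,k,\ k+(1-2\eta)\jobLife\,]\subseteq[\,i,\ i+\jobLife\,]$ and $[\,i+2\eta\jobLife,\ i+(1-2\eta)\jobLife\,]\subseteq[\,k,\ k+(1-2\eta)\jobLife\,]$ for all $k$ with $y_{ik}\ne 0$, both of which rely on the fact that a unit of $\wReal_i$ migrates only to indices $k\in[\,i,\ i+2\eta\jobLife\,]$ (at most $\eta\jobLife$ from the construction of $\wOnline$ and a further $\eta\jobLife$ from the shift when feeding $\mathcal{A}$) — together with the mild technicalities that one works on a time horizon extended by $\eta\jobLife$ on each side and that $\eta$ must be small enough for Lemma~\ref{lms:shrink} to apply and for $1/(1-4\eta)^{\alpha-1}$ to be absorbed into $(1+O(\eta))^\alpha$.
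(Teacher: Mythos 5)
Your proposal is correct and follows essentially the same route as the paper's proof: bound the cost of \NRA by the robustness guarantee $C''\cdot\OPT(\wA,(1-2\eta)\jobLife)$ of $\mathcal{A}$ on the instance it is actually fed, then show $\OPT(\wA,(1-2\eta)\jobLife)\le(1+O(\eta))^\alpha\OPT$ via the two-sided shrinking in Lemma~\ref{lms:shrink} and the ``reverse-mapping'' of the shrunken optimal schedule of $\wReal$ into a feasible schedule for $\wA$. Your explicit bookkeeping with the transport plan $y_{ik}$ (supported on $k\in[i,i+2\eta\jobLife]$) just makes precise the feasibility and cost-preservation facts that the paper establishes in its description of \NRA rather than inside the proof of the lemma.
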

\begin{proof}
    Let $\opt$ and $\opt'$ denote the cost of an optimum schedule of the original instance $\wReal$ with duration $D$ and the instance $\wA$ with duration $(1-2\eta)D$ fed to $\mathcal{A}$, respectively. The lemma then follows by showing that 
    \begin{align*}
        \opt' \leq (1+O(\eta))^{\alpha} \opt\,.
    \end{align*}
    To show this inequality, consider an optimal schedule $s$ of $\wReal$ subject to the constraint that every job $\wReal_i$ is scheduled within the time interval $[i + 2\eta \jobLife, i + (1-2\eta)\jobLife]$. By Lemma \ref{lms:shrink}, we have that the cost of this schedule is at most $(1+O(\eta))^{\alpha}\opt$. 
    The statement therefore follows by arguing that $s$ also gives a feasible schedule of $\wA$  with duration $(1-2\eta)D$. To see this note that $\NRA$ moves the  workload $\wReal_i$ to a subset of $\wA_{i}, \wA_{i+1}, \ldots, \wA_{i + 2\eta D}$. All of these jobs are allowed to be processed during $[i+2\eta D, i + (1-2\eta)D]$.
    It follows that the part of these jobs that corresponds to $\wReal_i$ can be processed in the computed schedule $s$ (whenever it processes $\wReal_i$) since $s$ process that job in the time interval $[i + 2\eta D, i+(1-2\eta)D]$. By doing this ``reverse-mapping'' for every job, we can thus use $s$ as a schedule for the instance $\wA$ with duration $(1-2\eta)D$. 
\end{proof}

We now proceed to analyze the consistency and smoothness. The following lemma is the main technical part of the analysis.  We use the common notation $(a)^+$ for  $\max\{a, 0\}$.
\begin{lem}
    The workload vector $\wOnline$ produced by \NRA satisfies
    \begin{align*}
        \sum_i \left[\left(\wOnline_i - (1+\zeta)\wPred_i\right)^+\right]^\alpha \leq  O(1/\zeta)^{3\alpha} \cdot \min_{w\in N_\eta(\wReal)} \sum_i \left[\left(w_i - \wPred_i\right)^+\right]^\alpha\,.
    \end{align*}
    \label{lem:workload_robust}
\end{lem}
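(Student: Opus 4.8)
The plan is to prove the bound by induction on the number of \emph{overflow jobs}, i.e.\ the jobs $\wReal_i$ for which \NRA performs its ``distribute the remaining workload evenly'' step. If there is no overflow job the left-hand side is $0$, so assume there is at least one and let $\wReal_{t^*}$ be the \emph{last} one; then $t^*+\eta\jobLife$ is the rightmost slot with $\wOnline_{t^*+\eta\jobLife}>(1+\zeta)\wPred_{t^*+\eta\jobLife}$ and no job released after $t^*$ overflows. Mirroring the construction of $\wOnline$, I would build increasing families of job indices $A_1\subseteq A_2\subseteq\cdots$ and slots $B_1\subseteq B_2\subseteq\cdots$ by $A_1=\{t^*\}$, $B_1=\{t:|t-t^*|\le\eta\jobLife\}$, $A_i=\{t:\text{\NRA put part of }\wReal_t\text{ into }B_{i-1}\}$ and $B_i=\bigcup_{t\in A_i}\{t':|t'-t|\le\eta\jobLife\}$. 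A first, easy observation is that each $B_i$ is a \emph{contiguous} interval: passing from $B_{i-1}$ to $B_i$ only adds jobs within distance $\eta\jobLife$ of $B_{i-1}$, and the radius-$\eta\jobLife$ windows of such jobs overlap $B_{i-1}$. Writing $B_p=[\ell,r]$ one then gets $\min A_p=\ell+\eta\jobLife$ and $\max A_p=r-\eta\jobLife$, which is exactly what keeps a rearrangement of the $A_p$-jobs inside $B_p$.

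Set $\zeta'=\zeta/10$, let $p$ be the smallest index with $\wReal(A_{p+1})\le\frac1{1-\zeta'}\wReal(A_p)$ (it exists since the workload is finite), and let $q=\max\{p-2/(\zeta')^2,0\}$. Two inequalities carry the argument. The first is a Hall-type overflow bound $\wReal(A_{p+1})>(1+\zeta)\wPred(B_p)$: \NRA never raises a slot above its capacity $(1+\zeta)\wPred_j$ in the greedy phase, every slot of $B_p$ has in fact reached that capacity, and the mass sitting in $B_p$ is supplied only by jobs of $A_{p+1}$, so $\wReal(A_{p+1})\ge\wOnline(B_p)\ge(1+\zeta)\wPred(B_p)$, with strictness coming from the extra $W_{t^*}/(2\eta\jobLife+1)$ that $\wReal_{t^*}$'s even-distribution step deposited into $B_1\subseteq B_p$. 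The second is an exponential-decay bound $\wReal(A_p\setminus A_q)\ge(1-\zeta')\wReal(A_p)$, obtained from the telescoping identity $\wReal(A_p)-\wReal(A_q)=\sum_{i=q}^{p-1}\bigl(\wReal(A_{i+1})-\wReal(A_i)\bigr)$ and $\wReal(A_{i+1})-\wReal(A_i)\ge\frac{\zeta'}{1-\zeta'}\wReal(A_i)$ for $i<p$ (by minimality of $p$), so the last $2/(\zeta')^2$ layers already amount to a $\ge 2/\zeta'$ multiple of $\wReal(A_q)$.

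Now the two-sided estimate. Let $w^\star\in N_\eta(\wReal)$ attain the right-hand side. The jobs of $A_p\setminus A_q$ have indices in $[\min A_p,\max A_p]$ and all lie within $2\eta\jobLife(p-q)$ of $t^*$, so under $w^\star$ their workload is confined to a region $R\subseteq B_p$ with $|R|=O(\eta\jobLife/\zeta^2)$; combining the two inequalities, $w^\star(R)\ge\wReal(A_p\setminus A_q)\ge(1-\zeta')^2\wReal(A_{p+1})>(1+\Omega(\zeta))\wPred(B_p)\ge(1+\Omega(\zeta))\wPred(R)$, so $w^\star$ overflows $\wPred$ on $R$ by $\Omega(\zeta)\wReal(A_{p+1})$, and convexity of $x\mapsto x^{\alpha}$ (power-mean over the $|R|$ slots) gives
\[
\sum_{j}\bigl[(w^\star_j-\wPred_j)^{+}\bigr]^{\alpha}\ \ge\ |R|^{1-\alpha}\bigl(\Omega(\zeta)\,\wReal(A_{p+1})\bigr)^{\alpha}\ =\ \Omega\!\left(\zeta^{3\alpha}\right)(\eta\jobLife)^{1-\alpha}\wReal(A_{p+1})^{\alpha}.
\]
On the \NRA side, the overflow mass deposited inside $B_p$ is at most $\wReal(A_{p+1})$ and the even-distribution step spreads each job's overflow over $2\eta\jobLife+1$ slots, so no slot of $B_p$ carries more than $O(\wReal(A_{p+1})/(\eta\jobLife))$ of overflow; hence $\sum_{j\in B_p}[(\wOnline_j-(1+\zeta)\wPred_j)^{+}]^{\alpha}=O\bigl((\eta\jobLife)^{1-\alpha}\bigr)\wReal(A_{p+1})^{\alpha}$ --- crucially with \emph{no} $|B_p|^{\alpha-1}$ factor --- which by the display is $O(1/\zeta)^{3\alpha}$ times part of the right-hand side. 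Finally, every slot outside $B_p$ carrying overflow must lie to the left of $B_p$ (a slot to the right would be fed by an overflow job with index exceeding $t^*$), such slots are touched only by jobs released before $\min A_p$, and \NRA processes those jobs exactly as in the run restricted to them, which has strictly fewer overflow jobs; so the induction hypothesis bounds $\sum_{j<\ell}[(\wOnline_j-(1+\zeta)\wPred_j)^{+}]^{\alpha}$ by $O(1/\zeta)^{3\alpha}$ times the corresponding optimum, and adding the two contributions closes the induction.

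The hard part is making that last step work \emph{without the induction constant compounding} over the (up to linearly many) overflow jobs: the ``outside $B_p$'' contribution and the region $R$ used for the $\wReal_{t^*}$-charge must be carved from essentially disjoint parts of the timeline, which is why the contiguity of the $B_i$, the exact location $\min A_p=\ell+\eta\jobLife$, and a careful choice of the recursion boundary (together with care for the $O(\eta\jobLife)$ overflow jobs whose windows straddle $\ell$) all matter. The remaining ingredients --- the two key inequalities and the convexity/spreading estimates --- are routine once this combinatorial skeleton is in place.
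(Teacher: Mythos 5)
Your plan follows the same route as the paper's proof: take the last overflow job $t^*$, grow the layered sets $A_k,B_k$, choose $p$ by a decay condition with $\zeta'=\zeta/10$ and $q=p-O(1/\zeta'^2)$, compare the overflow that \NRA spreads over $\Theta(\eta\jobLife)$ slots with the overflow that any $w\in N_\eta(\wReal)$ is forced to create on a window of $O(\eta\jobLife/\zeta^2)$ slots, and recurse on the prefix. However, two steps that you assert are exactly where the work lies. First, ``every slot of $B_p$ has in fact reached that capacity'' and ``the jobs of $A_p\setminus A_q$ all lie within $2\eta\jobLife(p-q)$ of $t^*$'' do not follow from contiguity of the $B_i$ alone. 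Both rest on the structural fact, proved in the paper by an induction over the layers, that the layers move monotonically to the left (every job of a new layer precedes the earliest job of the previous layer), which in turn uses the greedy left-to-right filling rule \emph{and} the choice of $t^*$ as the last overflow job. Without this monotonicity a job of $A_p\setminus A_q$ could sit deep inside earlier layers, the span of $A_p\setminus A_q$ would only be bounded by $|B_p|=O(\eta\jobLife\, p)$ rather than $O(\eta\jobLife\,(p-q))$, and your region $R$ would be too large to yield the $O(1/\zeta)^{3\alpha}$ factor. (Also, as stated the claim is off: these jobs are within $O(\eta\jobLife(p-q))$ of \emph{each other}, not of $t^*$; that weaker statement is what you need, but it still requires the monotonicity argument.)

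Second, the recursion as you set it up does not close. You charge the prefix to the optimum of the \emph{restricted} run, i.e.\ to $\min_{w\in N_\eta(\wReal^{\mathrm{restr}})}\sum_i[(w_i-\wPred_i)^+]^\alpha$, and the current level to $w^\star$'s overflow on $R\subseteq B_p$. But the restricted jobs may be placed by a rearrangement up to $2\eta\jobLife$ slots beyond $\min A_p$, hence possibly inside $R$, so the two right-hand-side charges can overlap; with the lemma itself as induction hypothesis the constant then inflates at every overflow job --- precisely the compounding you flag as ``the hard part'' and leave unresolved. The paper avoids this by (i) charging throughout against one \emph{global} minimizer $w^\star$ rather than sub-instance optima, and (ii) choosing $p$ with a two-layer slack (in its disjoint-layer notation, $\wReal(A_{p+1})+\wReal(A_{p+2})\le\zeta'\sum_{i\le p}\wReal(A_i)$, with $q=p-4/\zeta'^2$), which creates a buffer layer $B_{p+1}$: the current step's left-hand side covers $B_{p+1}\cup\bigl(\cup_{i\le p}B_i\bigr)$ while its right-hand-side charge uses only $\cup_{i\le p}B_i$, and the repetition on the prefix charges $w^\star$ only on the disjoint region $T_1\cup B_{p+1}$, so the contributions add instead of multiply. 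Your single-layer choice of $p$ and recursion boundary at $\ell(B_p)$ provide no such buffer, so the inductive step as sketched fails; incorporating the buffer layer (or an equivalent disjointness device) is a missing ingredient, not a routine detail.
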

The more technical proof of this lemma is given in Section~\ref{sec:robust_prooflemma}. Here, we explain how it implies the  consistency and smoothness bounds of Theorem~\ref{thm:noise-robust}. For a workload vector $w$, we use the notation $\OPT(w)$ and $\OPT'(w)$ to denote the cost of an optimal schedule of workload $w$ with duration $D$ and $(1-2\eta)D$, respectively. Now let $\hwOnline$ be the workload vector defined by
\begin{align*}
    \hwOnline_i = \max \{ \wOnline_i, (1+\zeta) \wPred_i\}\,. 
\end{align*}
We analyze the cost of the schedule produced
by $\mathcal A$ for $\hwOnline$ (shifted by $\eta\jobLife$).
This also bounds the cost of running
 $\mathcal A$ with $\wA$:
Since $\mathcal{A}$ is monotone, the cost of the schedule computed for the workload $\hwOnline$ (shifted by $\eta D$) can only be greater than that computed for $\wA$ which equals $\wOnline$ (shifted by $\eta D$).
Furthermore, we have by Lemma~\ref{lem:workload_robust} that 
\begin{align}
    \err(\hwOnline, (1+\zeta) \wPred)
    &=  \sum_i \left[\left(\wOnline_i - (1+\zeta)\wPred_i\right)^+\right]^\alpha \label{eq:robust_error} \\
    &\leq O(1/\zeta)^{3\alpha}  \err_\eta (\wReal, \wPred)\,.
    \notag
\end{align}
It follows by the assumptions on $\mathcal{A}$ that the schedule computed by \NRA has cost at most
\begin{align*}
   C \cdot \OPT'(\hwOnline) & + C' \cdot \err(\hwOnline, (1+\zeta) \wPred) \\
   & \leq C \cdot \OPT'(\hwOnline) + O(1/\zeta)^{3\alpha} \cdot C' \cdot \err_\eta(\wreal, \wPred)\,.
\end{align*}

The following lemma implies the consistency and smoothness, as stated in Theorem~\ref{thm:noise-robust}, by relating $\OPT'(\hwOnline)$ with the cost $\OPT = \OPT(\wReal)$.

\begin{lem}
    We have 
    \begin{align*}
        \OPT'(\hwOnline) \leq ((1+\eta)(1+\zeta))^{O(\alpha)} \left(  \OPT(\wReal) 
          + O(1/\zeta)^{4\alpha}\predErr_\eta(\wReal, \wPred)\right)\,.
    \end{align*}
\end{lem}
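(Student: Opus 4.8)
The plan is to bound $\OPT'(\hwOnline)$ by a short chain of reductions: shrink the duration back from $(1-2\eta)D$ to $D$, peel off the $(1+\zeta)\wPred$ part of $\hwOnline$ leaving only an overflow workload, pass from $(1+\zeta)\wPred$ to $\wPred$, and finally from $\wPred$ to $\wReal$; at each step we pick up only a benign multiplicative factor of the form $(1-2\eta)^{-(\alpha-1)}$ or $(1+\zeta)^{O(\alpha)}$ together with an additive term proportional to $\predErr_\eta(\wReal,\wPred)$.

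Two auxiliary facts will be used repeatedly. First, $\OPT$ and $\OPT'$ are monotone in the workload (coordinatewise), which is immediate. Second, the elementary \emph{combining} inequality: for nonnegative workload vectors $a,b$ and $0<\gamma\le 1$,
$\OPT(a+b)\le (1+\gamma)^\alpha\OPT(a)+(2/\gamma)^\alpha\,\lVert b\rVert_\alpha^\alpha$,
obtained by running an optimal schedule of $a$ together with the trivial schedule that processes each $b_i$ entirely within $[i,i+1]$ and applying $(x+y)^\alpha\le(1+\gamma)^\alpha x^\alpha+(2/\gamma)^\alpha y^\alpha$ pointwise in $t$ — this is exactly the estimate already used at the end of the proof of Theorem~\ref{thm:cost-trust}.

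Now the chain. \emph{Step 1 (shrink back):} by Claim~\ref{cla:shrink} with $\delta=2\eta$, $\OPT'(\hwOnline)\le (1-2\eta)^{-(\alpha-1)}\OPT(\hwOnline)$. \emph{Step 2 (strip the prediction):} writing $v_i=(\wOnline_i-(1+\zeta)\wPred_i)^+$ we have $\hwOnline=(1+\zeta)\wPred+v$, so the combining inequality (with $\gamma=\min\{\zeta,1\}$) and monotonicity give $\OPT(\hwOnline)\le (1+\zeta)^{2\alpha}\OPT(\wPred)+(2/\zeta)^\alpha\lVert v\rVert_\alpha^\alpha$, and Lemma~\ref{lem:workload_robust} bounds $\lVert v\rVert_\alpha^\alpha\le O(1/\zeta)^{3\alpha}\,\predErr_\eta(\wReal,\wPred)$. \emph{Step 3 (prediction to real):} let $w^\star\in N_\eta(\wPred)$ attain $\predErr_\eta(\wReal,\wPred)=\lVert\wReal-w^\star\rVert_\alpha^\alpha$. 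Applying the combining inequality with $a=\wReal$, $b=(w^\star-\wReal)^+$ and monotonicity yields $\OPT(w^\star)\le(1+\zeta)^\alpha\OPT(\wReal)+(2/\zeta)^\alpha\,\predErr_\eta(\wReal,\wPred)$. It remains to relate $\OPT(\wPred)$ to $\OPT(w^\star)$: take an optimal schedule of $w^\star$ with duration $D$, restrict each job $w^\star_i$ to run inside $[i+\eta D,\,i+(1-\eta)D]$ — feasible at cost at most $(1-2\eta)^{-(\alpha-1)}\OPT(w^\star)$ by the two‑sided shrinking bound of Lemma~\ref{lms:shrink} (with $\mu=\eta$) — and then, using the transport coefficients $x_{ij}$ witnessing $w^\star\in N_\eta(\wPred)$, re‑interpret the piece $x_{ij}$ of job $i$ as a piece of $\wPred$'s job $j$; since $|i-j|\le\eta D$ one checks $[i+\eta D,\,i+(1-\eta)D]\subseteq[j,\,j+D]$, so the reinterpreted schedule is feasible for $\wPred$ with the same cost, i.e.\ $\OPT(\wPred)\le(1-2\eta)^{-(\alpha-1)}\OPT(w^\star)$.

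Multiplying the three steps through, the coefficient of $\OPT(\wReal)$ becomes $(1-2\eta)^{-2(\alpha-1)}(1+\zeta)^{3\alpha}=((1+\eta)(1+\zeta))^{O(\alpha)}$ (using that $\eta$ lies in the intended regime bounded away from $1/2$, e.g.\ $\eta\le 1/4$ — which is WLOG since larger $\eta$ only weakens $\predErr_\eta$), and each additive term carries $\predErr_\eta(\wReal,\wPred)$ with a coefficient of the form $((1+\eta)(1+\zeta))^{O(\alpha)}\cdot O(1/\zeta)^{O(\alpha)}$, which is at most $((1+\eta)(1+\zeta))^{O(\alpha)}\cdot O(1/\zeta)^{4\alpha}$ after absorbing the harmless $(1+\zeta)^{O(\alpha)}$ factors and using $1/\zeta>1$ in the regime $\zeta<1$. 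This is exactly the claimed inequality. The only step that is not pure bookkeeping is the re‑mapping in Step 3 — verifying that contracting each job's window by $\eta D$ on each side leaves precisely enough slack to accommodate the $\le\eta D$ transport of workload between $w^\star$ and $\wPred$; I expect that inclusion check to be the crux, with everything else following mechanically from the combining inequality, monotonicity, Claim~\ref{cla:shrink}, Lemma~\ref{lms:shrink}, and Lemma~\ref{lem:workload_robust}.
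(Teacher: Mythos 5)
Your proof is correct and takes essentially the same route as the paper's: decompose $\hwOnline$ into $(1+\zeta)\wPred$ plus an overflow controlled by Lemma~\ref{lem:workload_robust}, apply the Theorem~\ref{thm:cost-trust}-style combining inequality, and relate $\OPT(\wPred)$ to $\OPT(\wReal)$ through the minimizer $w^\star\in N_\eta(\wPred)$ using Lemma~\ref{lms:shrink} together with the transport remapping (the inclusion $[i+\eta\jobLife,\,i+(1-\eta)\jobLife]\subseteq[j,\,j+\jobLife]$ for $|i-j|\le\eta\jobLife$, which is exactly the paper's "shifted by at most $\eta\jobLife$" observation). The only differences are cosmetic — you undo the duration shrink up front via Claim~\ref{cla:shrink} rather than converting $\OPT'((1+\zeta)\wPred)$ to $\OPT(\wPred)$ afterwards, and you schedule the overflow within one time unit instead of quoting the \textsc{AVR}-based step — and your implicit regime assumptions ($\zeta\le 1$, $\eta$ bounded away from $1/2$) match those the paper makes implicitly.
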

\begin{proof}
    By the exact same arguments as in the proof of Theorem~\ref{thm:cost-trust}, we have that for any   $\eta'>0$
    \begin{align*}
        \OPT'(\hwOnline) &\leq  (1+\eta')^\alpha \OPT'((1+\zeta)\wPred) 
          + O(1/\eta')^\alpha \predErr(\hwOnline, (1+\zeta)\wPred)\\
        & \leq  (1+\eta')^\alpha \OPT'((1+\zeta)\wPred) 
          + O(1/\eta')^\alpha O(1/\zeta)^{3\alpha} \predErr_\eta(\wReal, \wPred)\,,
    \end{align*}
    where we used~\eqref{eq:robust_error} for the second inequality.
    
    By Lemma~\ref{lms:shrink}, we have that decreasing the duration by a factor $(1-2\eta)$ only increases the cost by factor $(1+O(\eta))^{\alpha}$ and so  $\OPT'((1+\zeta)\wPred) \leq (1+O(\eta))^\alpha \OPT((1+\zeta)\wPred)$. Furthermore, as a schedule for a workload $\wPred$ gives a schedule for $(1+\zeta) \wPred$ by increasing the speed by a factor $(1+\zeta)$, we get
    \begin{align*}
        \OPT'((1+\zeta)\wPred) \leq  (1+O(\eta))^\alpha (1+\zeta)^\alpha \OPT(\wPred)\,.
    \end{align*}
    Hence, by choosing $\eta' = \zeta$, 
    \begin{align*}
        \OPT'(\hwOnline) \leq  (1+O(\eta))^\alpha (1+\zeta)^{2\alpha} \OPT(\wPred) 
          + O(1/\zeta)^{4\alpha}\predErr_\eta(\wReal, \wPred) .
    \end{align*}
    
    It remains to upper bound $\OPT(\wPred)$ by $\OPT(\wReal)$.
    Let $w = \argmin_{w\in N_\eta(\wPred)}\predErr(w,\wReal)$  and so $\predErr_\eta(\wReal, \wPred) = \predErr(\wReal, w)$. By again applying the arguments of Theorem~\ref{thm:cost-trust}, we have for any $\eta'>0$
    \begin{align*}
        \OPT(w) \leq (1+\eta')^\alpha \OPT(\wReal) + O(1/\eta')^{\alpha} \predErr(\wreal,w)\,.
    \end{align*}
Now consider an optimal schedule of $w$ subject to  that for every time $t$ the job $w_t$ is scheduled within the interval $[t+ \eta \jobLife, t + (1-\eta)\jobLife]$. 
By Lemma~\ref{lms:shrink}, we have that this schedule has cost at most $(1+O(\eta))^\alpha \OPT(w)$.
Observe that this schedule for $w$ also defines a feasible schedule for $\wPred$ since the time of any job is shifted by at most $\eta D$ in $w$. Hence, by again selecting $\eta' = \zeta$,
\begin{align*}
    \opt(\wPred)& \leq (1+O(\eta))^\alpha \OPT(w) \\
    &\leq (1+ O(\eta))^\alpha \left( (1+\zeta)^\alpha \OPT(\wReal) + O(1/\zeta)^{\alpha} \predErr_\eta(\wReal,\wPred)\right)
\end{align*}
    Finally,  by combining all inequalities, we get
    \begin{align*}
        \OPT'(\hwOnline) \leq (1+O(\eta))^{2\alpha} \left( (1+\zeta)^{3\alpha} \OPT(\wReal) 
          + O(1/\zeta)^{4\alpha}\predErr_\eta(\wReal, \wPred)\right)
    \end{align*}
\end{proof}

\subsubsection{Proof of Lemma~\ref{lem:workload_robust}}
\label{sec:robust_prooflemma}
The lemma is trivially true if there were no jobs that had remaining workloads to be assigned uniformly, i.e., if we always have $W = 0$ at Step~$16$ of \NRA.
So suppose that there was at least one such job and 
  consider the directed bipartite graph $G$ with bipartitions $A$ and $B$ defined as follows:
\begin{itemize}
    \item $A$ contains a vertex for each component of $\wreal$ and $B$ contains one for each component
    of $\wOnline$. In other words, $A$ and $B$
    contain one vertex for each time unit.
    \item There is an arc from $i\in A$ to $j\in B$
    if $|i - j| \le \eta \jobLife$, that is,
    if $\wreal_i$ could potentially be assigned to $\wOnline_j$.
    \item There is an arc from $j\in B$ to $i\in A$ if part of the workload of $\wReal_i$ was assigned to $\wOnline_j$ by $\NRA$, i.e., if $x_{ij} >0$.
\end{itemize}

Now let $t$  be the \emph{last} time step such that the online algorithm had to assign the remaining workload of $\wReal_{t}$ uniformly. 
So, by selection, $t + \eta D$ is the last time step so that $\wOnline_{t + \eta D}> (1+\zeta)  \wPred_{t + \eta D}$. For $k\geq 0$, define the sets
\begin{align*}
    A_k & = \{i\in A: \mbox{the shortest path from $t$ to $i$ has length $2k$ in $G$}\}, \\
    B_k & = \{j \in B: \mbox{the shortest path from $t$ to $j$ has length $2k+1$ in $G$}\}. 
\end{align*}
Here $t$ stands for the corresponding vertex in $A$.
The set $A_k$ consists of those time steps, for which the corresponding jobs in $\wReal$ have been moved in $\wOnline$ to the time slots in $B_{k-1}$ but not to any time slot in $B_{k-2}, B_{k-3}, \ldots, B_{0}$; and $B_k$ are all the time slots where the jobs corresponding to $A_{k}$ could have been assigned (but no job in $A_{k-1}, A_{k-2}, \ldots, A_0$ could have been assigned). By the selection of $t$, and the construction of $\wOnline$, these sets satisfy the following two properties:
\begin{claim} The sets $(A_k, B_k)_{k\geq 0}$ satisfy
    \begin{itemize}
        \item For any time step $j\in \bigcup_k B_k$ we have $\wOnline_j \geq (1+\zeta) \wPred_j$.
        \item For any two time steps $i_k \in A_k$ and $i_\ell\in A_\ell$ with $k>\ell$, we have $i_k - i_\ell \leq 2\eta D (k-\ell+2)$.
    \end{itemize}
\end{claim}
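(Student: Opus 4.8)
The plan is to prove the two items by induction on the layer index, using two structural features of \NRA: the greedy loop (Steps~4--15) fills each arriving job's window $\{i-\eta\jobLife,\dots,i+\eta\jobLife\}$ from left to right, and $t$ was chosen as the \emph{last} time step whose job reached the uniform‑distribution branch (Steps~17--20).

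\textbf{First item.} For $k=0$ we have $B_0=\{t-\eta\jobLife,\dots,t+\eta\jobLife\}$, and since processing $\wReal_t$ reached Step~16 with positive remaining workload, the greedy loop must have raised every slot of $B_0$ to its cap $(1+\zeta)\wPred_j$; as the entries of $\wOnline$ only ever increase, these caps still hold at termination. For the inductive step, take $j\in B_k$ with a witness $i\in A_k$, $|i-j|\le\eta\jobLife$, and $j'\in B_{k-1}$ with $x_{ij'}>0$. I would split according to whether $\wReal_i$ itself overflowed. If it did, then its whole window was already at capacity at the moment it was processed, so $j$, lying in that window, is at capacity. If it did not, then $\wReal_i$ was routed greedily, hence every slot of its window lying to the left of the slot where its mass ran out was saturated during the sweep; combined with the fact (from the minimality of $t$) that no job spills after time $t$, together with a suitable choice of the witness $i$, this should cover the remaining slots of $B_k$. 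Making this case analysis watertight — so that the windows of the members of $A_k$ genuinely cover all of $B_k$ — is the bookkeeping‑heavy part here.

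\textbf{Second item.} Both arc types of $G$ join time steps at distance $\le\eta\jobLife$ (the $A\!\to\!B$ arcs by definition, the $B\!\to\!A$ arcs because $x_{ij}>0$ only if $|i-j|\le\eta\jobLife$), so every edge of a shortest path shifts time by at most $\eta\jobLife$; hence $|i-t|\le 2k\eta\jobLife$ for each $i\in A_k$. A plain triangle inequality through $t$ then gives only $i_k-i_\ell\le 2\eta\jobLife(k+\ell)$, which is too weak, so the argument must stay local: fixing a shortest path $t=p_0\to\cdots\to p_{2k}=i_k$ witnessing $i_k\in A_k$, its vertex $p_{2\ell}$ lies in $A_\ell$, and $|i_k-p_{2\ell}|\le 2(k-\ell)\eta\jobLife$ since the sub‑path $p_{2\ell}\to\cdots\to p_{2k}$ has $2(k-\ell)$ edges. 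It then remains to bound $|p_{2\ell}-i_\ell|$ by $4\eta\jobLife$, i.e.\ to show that any two members of the \emph{same} layer $A_\ell$ lie within $4\eta\jobLife$ of each other. The abstract layered graph alone only yields width $O(\ell\eta\jobLife)$ for $A_\ell$, so this is exactly where the one‑sided (left‑to‑right) structure of the redistribution, together with the maximality of $t$, must be brought in.

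\textbf{Main obstacle.} The crux of both items is to convert "layer $k$", a graph‑distance notion, into a statement about a \emph{narrow interval} of time steps whose width is governed by $k-\ell$ rather than $k+\ell$. I expect the hardest single point to be the $4\eta\jobLife$‑width bound on a single layer $A_\ell$ in the second item, since it is precisely there that the purely combinatorial description of $G$ is insufficient and one is forced to reason about how \NRA\ actually distributes workload.
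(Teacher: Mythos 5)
There is a genuine gap: what you have written is a plan that defers exactly the two steps that constitute the entire content of the claim. For the first bullet you reduce the inductive step to showing that the windows of the jobs in $A_k$ cover $B_k$ and are saturated, and you explicitly label making this watertight as ``the bookkeeping-heavy part''; for the second bullet you reduce the problem to a width bound of $4\eta D$ on a single layer $A_\ell$ and explicitly admit this is the hardest point and that the layered-graph structure alone cannot give it. Neither step is proved, so the proposal does not establish the claim. The missing idea, which is what the paper's induction actually supplies, is a \emph{monotonicity} statement about the layers: because \NRA assigns each arriving job's workload left-to-right up to the caps $(1+\zeta)\wPred_j$, and because $t$ is the \emph{last} time step whose job overflowed, every time step of $A_k$ is strictly earlier than the earliest time step of $A_{k-1}$. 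From this one gets the strengthened induction hypothesis that $B_0=\{t-\eta D,\dots,t+\eta D\}$ and each nonempty $B_k$ is the contiguous interval $\{\ell(B_k),\dots,\ell(B_{k-1})-1\}$ sitting immediately to the left of $B_{k-1}$, with $\ell(B_{k-1})-\ell(B_k)\le 2\eta D$; saturation of $B_k$ (first bullet) then follows because the leftmost job of $A_k$ was nevertheless assigned into $B_{k-1}$, which under the greedy rule is only possible if all of $B_k$ was already at capacity, and the distance bound (second bullet) follows by hopping from $i_\ell$ into $B_\ell$, walking through at most $k-\ell+1$ consecutive intervals of length at most $2\eta D$ each, and hopping out to $i_k$, giving $\eta D+(k-\ell+1)2\eta D+\eta D=2\eta D(k-\ell+2)$.

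Your alternative route for the second bullet (fix a shortest path and bound the width of one layer by $4\eta D$) is salvageable in principle --- the $4\eta D$ width bound is in fact a consequence of the interval structure above, since every element of $A_k$ lies within $\eta D$ of the interval $B_{k-1}$, which has length at most $2\eta D$ --- but proving it requires establishing essentially the same monotonicity lemma, so as written your argument is circular in the sense that the ``remaining detail'' is the whole proof. Likewise, your case analysis for the first bullet (overflowing versus non-overflowing $\wReal_i$) does not by itself rule out the problematic configuration in which some $i\in A_k$ lies to the right of a member of $A_{k-1}$; excluding that configuration via the greedy order and the maximality of $t$ is precisely the step you would need to carry out.
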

\begin{proof}[Proof of claim.]
    In the proof of the claim we use the notation $\ell(A_k)$ and $\ell(B_k)$ to denote the left-most (earliest) time step in $A_k$ and $B_k$, respectively.
    The proof is by induction on $k\geq 0$ with the following induction hypothesis (IH):
    \begin{enumerate}
        \item For any time step $j\in B_k$ we have $\wOnline_j \geq (1+\zeta) \wPred_j$.
        \item $B_0 = \{t-\eta D, \ldots, t+ \eta D\}$ and for any (non-empty) $B_k$ with $k> 1$ we have $B_k = \{\ell(B_k), \ldots, \ell(B_{k-1})-1\}$ and $\ell(B_k) - \ell(B_{k-1}) \leq 2\eta D$. 
    \end{enumerate}
    The first part of IH immediately implies the first part of the claim. The second part implies the second part of the claim as follows: Any time step in $A_{\ell}$ has a time step in $B_\ell$ that differs by at most $\eta D$. Similarly, for any time step in $A_k$ there is a time step in $B_{k-1}$ at distance at most $\eta D$. Now by the second part of the induction hypothesis, the distance between these time steps in $B_{k-1}$ and $B_\ell$ is at most $(k-\ell+1)2\eta D$.  
    
    We complete the proof by verifying the inductive hypothesis. For the base case when $k=0$, we have $B_0 = \{t-\eta D, \ldots, t+\eta D\}$ by definition since $A_0 = \{t\}$. We also have that the first part of IH holds by the definition of \NRA and the fact that the overflow of job $\wReal(t)$ was uniformly assigned to these time steps.  
    
    For the inductive step, consider a time step $i\in A_{k}$. By definition $\wReal_i$ was  assigned to a time step in $B_{k-1}$ but to no time step in $B_{k-2} \cup \ldots \cup B_0$. Now suppose toward contradiction that there is a time step $j\in A_{k-1}$ such that $j< i$. But then by the greedy strategy of $\NRA$ (jobs are assigned left-to-right), we reach the contradiction that $\wReal_i$ must have been assigned to a time step in $B_{k-2} \cup \ldots \cup B_0$ if $k\geq 2$ since then $\wReal_j$ is assigned to a time step in $B_{k-2}$. For $k=1$, we have $j=t$ and so all time steps in $B_0$ were full (with respect to capacity $(1+\zeta) \wPred$) after $t$ was processed. Hence, in this case, $\wReal_i$ could only be assigned to a time step in $B_0$ if it it had overflow that was uniformly assigned by \NRA, which contradicts the selection of $t$. 
    
    We thus have that each time step in $A_k$ is smaller than the earliest time step in $A_{k-1}$. It follows that $B_k = \{\ell(B_k), \ldots, \ell(B_{k-1}) -1\}$ where $\ell(B_k) = \ell(A_k) - \eta D$. The bound $\ell(B_k)- \ell(B_{k-1}) \leq 2\eta D$ then follows since, by definition, $\{\ell(A_k) - \eta D, \ldots, \ell(A_k) + \eta D\}$ must intersect $B_{k-1}$. This completes the inductive step for the second part of IH. For the first part, note that the job  $\wReal_{\ell(A_k)}$  was also assigned to $B_{k-1}$ by \NRA. By the greedy left-to-right strategy, this only happens if the capacity of all time steps $B_k$ is saturated. 
\end{proof}

Now let $p$ be the smallest index such that $\wReal(A_{p+1}) + \wReal(A_{p+2}) \leq \zeta' \sum_{i=0}^p \wReal(A_i)$ where we select $\zeta'=\zeta/10$. We have
\begin{align}
    \sum_{i=0}^{p+1}\wReal(A_{i}) \geq \sum_{i=0}^p \wOnline(B_i) \geq  (1+\zeta) \sum_{i=0}^p \wPred(B_i)
    \label{eq:perturbations_1}
\end{align}
where the first inequality holds by the definition of the sets and the second is by the first part of the above claim.
 In addition, by the selection of $p$,
\begin{align}
    \sum_{i=0}^p\wReal(A_{i}) \geq  (1-\zeta') \sum_{i=0}^{p+2}\wreal(A_{i})\,.
    \label{eq:perturbatoins_2}
\end{align}
Now let $q = \max\{p - 4/(\zeta')^2, 0\}$. We claim the following inequality
\begin{align}
    \sum_{i=q}^p\wReal(A_{i}) \geq  (1-\zeta') \sum_{i=0}^p \wReal(A_{i})\,.
    \label{eq:exp_decay}
\end{align}
 The inequality is trivially true if $q = 0$. Otherwise, we have by the selection of $p$,
\begin{align*}
    \sum_{i=q}^p\wReal(A_i) &  = (1-\zeta')\sum_{i=q}^p \wReal(A_i) + \zeta'\sum_{i=q}^p  \wReal(A_i) \\
    & \geq (1-\zeta') \sum_{i=q}^p \wReal(A_i) + \frac{(p-q)}{2} (\zeta')^2\sum_{i=0}^{q-1}\wReal(A_i) \\
    & \geq (1-\zeta')\sum_{i=q}^p \wReal(A_i) + 2 \sum_{i=0}^{q-1}\wReal(A_i)
\end{align*}
and so~\eqref{eq:exp_decay} holds.

We are now ready to complete the proof of the lemma.  Let $w^*$ be a minimizer of the right-hand-side, i.e., 
\begin{align*}
    w^* = \argmin_{w\in N_\eta(\wReal)} \sum_i \left[\left(w_i - \wPred_i\right)^+\right]^\alpha
\end{align*}
Divide the time steps of the instance into $T_1, B_{p+1}, T_2$ and $T_3$ where $T_1$ contains all time steps earlier than $\ell(B_{p+1})$, $T_2$ contains the time steps in $\cup_{i=0}^{p} B_i$, and $T_3$ contains the remaining time steps, i.e., those after $t+\eta D$.
By the selection of $t$, we have  $\wOnline_i \leq (1+\zeta) \wPred_i$ for all $i\in T_3$. We thus have that $\sum_i \left[\left(\wOnline_i - (1+\zeta)\wpred_i\right)^+\right]^\alpha$ equals
\begin{align*}
    \sum_{i \in T_1} \left[\left(\wOnline_i - (1+\zeta)\wpred_i\right)^+\right]^\alpha + 
    \sum_{i \in B_{p+1} \cup T_2} \left[\left(\wOnline_i - (1+\zeta)\wpred_i\right)^+\right]^\alpha\,. 
\end{align*}
We start by analyzing the second sum.  The only jobs in $\wReal$ that contribute to the workload of $\wOnline$ at the time steps in $B_{p+1} \cup T_2$  are by definition those corresponding to time steps in $A_0 \cup \ldots \cup A_{p+2}$. In the worst case, we have that $\wPred$ is $0$ during these time steps and that the jobs in $\wReal$ are uniformly assigned to the same $2\eta D + 1$ time steps. This gives us the upper bound:
\begin{align*}
    \sum_{i \in B_{p+1} \cup T_2} \left[\left(\wOnline_i - (1+\zeta)\wpred_i\right)^+\right]^\alpha &\leq \left(\frac{\sum_{i=0}^{p+2}\wReal(A_i)}{2\eta D+1} \right)^\alpha\cdot (2\eta D + 1) \\
    &\leq (1+\zeta')^\alpha\left(\frac{\sum_{i=0}^{p}\wReal(A_i)}{2\eta D} \right)^\alpha 2\eta D\,.
\end{align*}
At the same time, combining~\eqref{eq:perturbations_1}~\eqref{eq:perturbatoins_2}, and~\eqref{eq:exp_decay} give us
\begin{align*}
    \sum_{i=q}^p \wReal(A_i)\geq (1-\zeta')^2 (1+\zeta) \sum_{i=0}^p \wPred(B_i) \geq (1+\zeta/2) \sum_{i=0}^p \wpred(B_i)\,.
\end{align*}
By definition, the jobs in $\wReal$ corresponding to time steps $\cup_{k=q}^p A_k$ can only be assigned to $\wOnline$ during time steps $T_2 = \cup_{k=0}^p B_k$. Therefore, as  the difference between the largest time and smallest time in $\cup_{k=q}^p A_k$ is at most $2\eta D (p-q+2)$ (second statement of the above claim) and thus the workload of those time steps can be assigned to at most $2\eta D(p-q+4)$ time steps, we have  
\begin{align*}
     \sum_{i\in T_2} \left[\left(w^*_i - \wPred_i\right)^+\right]^\alpha& \geq  \left(\frac{\sum_{i=q}^p \wReal (A_i) - \sum_{i=0}^p \wpred(B_i)}{(p-q+4)\cdot 2\eta D}\right)^\alpha \cdot (p-q+4) \cdot 2\eta D  \\
   &\geq  \left(c\cdot \zeta^{3}\right)^{\alpha}  \left(\frac{\sum_{i=0}^p\wReal(A_{i})}{ 2\eta D} \right)^\alpha \cdot 2\eta D 
\end{align*}
for an absolute constant $c$. It follows that 
\begin{align*}
    \sum_{i \in B_{p+1} \cup T_2} \left[\left(\wOnline_i - (1+\zeta)\wpred_i\right)^+\right]^\alpha \leq \left(\frac{1+\zeta'}{c\zeta^3} \right)^\alpha \sum_{i\in T_2} \left[\left(w^*_i - \wPred_i\right)^+\right]^\alpha\,.
\end{align*}
We have thus upper bounded the sum on the left over time steps in $B_{p+1} \cup T_2$ by the sum on the right over only time steps in $T_2$. Since $\NRA$ does not assign the workload $\wReal_i$ for $i\in T_1$ to $\wOnline$ on any of the time steps in $T_2$, we  can repeatedly apply  the arguments on the time steps in $T_1$ to show
\begin{align*}
    \sum_{i \in T_1} \left[\left(\wOnline_i - (1+\zeta)\wpred_i\right)^+\right]^\alpha \leq \left(\frac{1+\zeta'}{c\zeta^3} \right)^\alpha \sum_{i\in T_1 \cup B_{p+1}} \left[\left(w^*_i - \wPred_i\right)^+\right]^\alpha\,,
\end{align*}
yielding the statement of the lemma.

\section{\robustalg for uniform deadlines}
\label{sec:appendix_convolution}
Here we provide the proofs of Claim~\ref{cla:convolution1}, Claim~\ref{cla:convolution_energy}, Claim~\ref{cla:convolution_robust}.

\CONVcla*
\begin{proof}
Since $s$ is a feasible schedule for $(w,(1-\delta D),T)$, we have that
\begin{equation*}
    \int_{r_i}^{r_i+D} s_i^{(\delta)}(t) dt = \int_{r_i}^{r_i+D} \frac{1}{\delta D}\left(\int_{t-\delta D}^{t}s_i(t') dt' \right) dt = \int_{r_i}^{r_i+(1-\delta)D} s_i(t') \left(\int_{t'}^{t'+\delta D} \frac{1}{\delta D} dt\right) dt' = w_i .
\end{equation*}
\end{proof}
\CONVclai*
\begin{proof}
The proof only uses Jensen's inequality in the second line and the statement can be calculated as follows.
\begin{align*}
    \int_{0}^{\maxTime} \left(s^{(\delta)}(t)\right)^\alpha dt =& \int_{0}^{\maxTime} \left(\frac{1}{\delta D} \int_{t-\delta D}^{t} s(t') dt'\right)^\alpha dt\\
     \leq& \int_{0}^{\maxTime} \frac{1}{\delta D} \left(\int_{t-\delta D}^{t} (s(t'))^\alpha dt'\right) dt\\
     =& \int_{0}^{\maxTime} (s(t'))^\alpha \left(\int_{t'}^{t'+\delta D} \frac{1}{\delta D} dt\right) dt'\\
     =& \int_{0}^{\maxTime} \left(s(t)\right)^\alpha dt
\end{align*}
\end{proof}
\CONVclaim*
\begin{proof}
We have that 
\begin{align*}
    s_i^{(\delta)}(t) &=\frac{1}{\delta D}\int_{t-\delta D}^t s_i(t')dt' \leq \frac{1}{\delta D}\int_{r_i}^{r_i+D} s_i(t')dt'=\frac{w_i}{\delta D}= \frac{s_i^{\mathrm{AVR}}(t)}{\delta} .
\end{align*}
\end{proof}

\section{\robustalg for general deadlines}
\label{sec:general_deadlines}
In this section, we discuss generalizations of our techniques to general deadlines. Recall that an instance with general deadlines is defined by a set $\mathcal{J}$ of jobs $J_j = (r_j, d_j, w_j)$, where $r_j$ is the time the job
becomes available, $d_j$ is the deadline by which it
must be completed, and $w_j$ is the work to be completed. For $\delta >0$, we use the notation $\mathcal{J}^\delta$ to denote the instance obtained from $\mathcal{J}$ by shrinking the duration of each job by a factor $(1-\delta)$. That is, for each job $(r_j, d_j , w_j) \in \mathcal{J}$, $\mathcal{J}^\delta$ contains the job $(r_j, r_j + (1-\delta)(d_j - r_j), w_j)$. 

Our main result in this section  generalizes $\robustalg$ to general deadlines.
\begin{thm}
    For any $\delta > 0$, given an online algorithm for general deadlines that produces a schedule for $\mathcal{J}^\delta$ of cost $C$, we can  compute online a schedule for $\mathcal{J}$ of cost at most
    \begin{align*}
        \min\left\{ \left(\frac{1}{1-\delta}\right)^{\alpha-1} C, (2\alpha/\delta^2)^\alpha/2\cdot \OPT\right\}\,,
    \end{align*}
    where $\OPT$ denotes the cost of an optimal schedule of $\mathcal{J}$.
    \label{thm:gen_robustness}
\end{thm}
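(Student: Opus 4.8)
The plan is to lift the $\delta$-convolution operator behind \robustalg\ to general deadlines by letting the convolution window depend on each job. Write $L_j := d_j - r_j$. Passing from $\mathcal{J}$ to $\mathcal{J}^\delta$ creates, for job $j$, a slack of $\delta L_j$ between its original deadline $r_j+L_j$ and its shrunk deadline $r_j+(1-\delta)L_j$. Given the online schedule $s$ produced for $\mathcal{J}^\delta$ (of cost $C$), I would form $s^{(\delta)}$ by blurring each job $j$ forward in time by a $\delta$-fraction of that slack, i.e.\ convolving $s_j$ with the uniform kernel of width $\delta^2 L_j$ (in contrast to the fixed width $\delta\jobLife$ of the uniform case). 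This is computable online since $s_j^{(\delta)}(t)$ depends only on $s_j$ restricted to $[0,t]$; the reason for spending only a $\delta^2 L_j$, rather than $\delta L_j$, window is that the remaining slack is needed in the energy analysis below, and it is exactly this choice that produces the $\delta^2$ in the statement.

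As in the uniform case (Claims~\ref{cla:convolution1}--\ref{cla:convolution_robust}) there are three things to verify. \textbf{Feasibility}: blurring a job forward by $\delta^2 L_j\le \delta L_j$ moves its completion time from at most $r_j+(1-\delta)L_j$ to at most $r_j+(1-\delta+\delta^2)L_j< r_j+L_j=d_j$, so $s^{(\delta)}$ is feasible for $\mathcal{J}$. \textbf{Robustness}: for every job $s^{(\delta)}_j(t)\le w_j/(\delta^2 L_j)=\delta^{-2}s^{\mathrm{AVR}}_j(t)$, hence $s^{(\delta)}(t)\le \delta^{-2}s^{\mathrm{AVR}}(t)$ pointwise, and combining this with the $2^{\alpha-1}\alpha^\alpha$-competitiveness of \textsc{AVR} for general deadlines~\cite{DBLP:conf/focs/YaoDS95} yields $\int (s^{(\delta)})^\alpha\le \delta^{-2\alpha}\cdot 2^{\alpha-1}\alpha^\alpha\,\OPT=(2\alpha/\delta^2)^\alpha\OPT/2$. \textbf{Cost versus $s$}: the claim is $\int(s^{(\delta)})^\alpha\le (1-\delta)^{-(\alpha-1)}C$. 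Taking the minimum of the last two bounds gives the theorem, and everything except the last claim is the same bookkeeping as in the proof of Theorem~\ref{thm:cost-robustify}.

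The main obstacle is the cost-versus-$s$ estimate. In the uniform case this was the one-line Jensen computation of Claim~\ref{cla:convolution_energy}, which works precisely because a single common window makes the convolution commute with summing the per-job speeds; with a distinct window $\delta^2 L_j$ per job, $\|\sum_j (h_j * s_j)\|_\alpha$ is no longer visibly dominated by $\|\sum_j s_j\|_\alpha=\|s\|_\alpha$, since blurring different jobs at different scales can in principle align their masses. I would handle this by bucketing the jobs into \emph{length classes}, class $\ell$ containing the jobs with $L_j\in[\ell,\ell/(1-\delta))$, so that within a class all windows $\delta^2 L_j$ agree up to a factor $1/(1-\delta)$; replacing each job's window by the smallest one in its class only shrinks the blur (so feasibility is preserved), and on these uniform-per-class windows the convolution commutes with the class sum, so Claim~\ref{cla:convolution_energy} applies class by class. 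The classes are then recombined into a single $\le\!1$-job-at-a-time schedule for $\mathcal{J}$ by an earliest-deadline-first repacking, which is feasible because the unused $\approx\delta L_j$ of slack leaves room for the Hall-type condition of Lemma~\ref{folklore_lemma}; this repacking, together with the within-class rounding of the blur widths, is a shrink of each job by a factor at most $1/(1-\delta)$, which by Lemma~\ref{lms:shrink} accounts exactly for the $(1-\delta)^{-(\alpha-1)}$ loss. Verifying that this recombination really bounds the energy of $s^{(\delta)}$ against that of $s$ — i.e.\ that no cross-class interaction is lost in the repacking — is the delicate step I expect to require the most care.
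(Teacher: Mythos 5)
There is a genuine gap, and it sits exactly at the step you yourself flag as delicate: the cost-versus-$s$ bound fails already for the job-dependent convolution, so the bucketing/repacking sketch cannot rescue it. Concretely, take a long job $1$ with $L_1=1/\delta^2$, released at time $0$, which $s$ processes at speed $1$ on $[0,1]$, and a short job $2$ with $L_2=1/(1-\delta)$, released at time $1$, which $s$ processes at speed $1$ on $[1,2]$; this $s$ is feasible for $\mathcal{J}^\delta$ and has cost $C=2$. Your operator blurs job $1$ with a window of width $\delta^2L_1=1$ and job $2$ with a window of width about $\delta^2$, so the combined speed is $t$ on $[0,1]$ and roughly $3-t$ on $[1,2]$, with energy
\begin{equation*}
  \int_0^1 t^\alpha\,dt+\int_1^2(3-t)^\alpha\,dt \;=\; \frac{2^{\alpha+1}}{\alpha+1},
\end{equation*}
which for fixed $\delta$ is exponentially larger than $\left(\frac{1}{1-\delta}\right)^{\alpha-1}C$ (already at $\alpha=3$, $\delta=0.1$ it is $4$ versus about $2.47$). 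This is precisely the cross-scale alignment you worried about: a long job's early burst gets smeared onto a region where other jobs run, and the masses pile up. The length-class bucketing does not help, because Claim~\ref{cla:convolution_energy} applied class by class only bounds $\int (s^{(\delta),\ell})^\alpha$ by $\int (s^{\ell})^\alpha$ for each class $\ell$, whereas what you need is a bound on the energy of the \emph{sum} over classes, and $(\sum_\ell x_\ell)^\alpha$ is not controlled by $\sum_\ell x_\ell^\alpha$; the example above uses just two classes. Nor can the EDF repacking plus Lemma~\ref{lms:shrink} close the gap: Lemma~\ref{folklore_lemma} is a feasibility statement about processing times and Lemma~\ref{lms:shrink} compares optima of shrunk instances, and neither bounds the energy of a specific recombined schedule against the specific cost $C$ of $s$.

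The paper's \genrobustalg avoids this by never blurring jobs independently. It splits each (infinitesimal) interval into a base part of length $(1-\delta)\Delta$ and an auxiliary part of length $\delta\Delta$, and moves work of interval $t$ into the auxiliary parts of the next $\delta D_{j(t)}/\Delta$ intervals only when the current auxiliary speed is below $s(t)/(1-\delta)$, calibrated by the work-conservation equation~\eqref{eq:genrobust_magic} so that every piece of interval $t$'s work is processed at speed at most $s(t)/(1-\delta)$. The consistency bound then follows from a hybrid-schedule monotonicity argument (the schedules $h^{(t)}$) using convexity, and robustness follows because overflow is spread over $\delta^2 D_j$ units of auxiliary time — the same $\delta^{-2}$-versus-\textsc{AVR} comparison you derived. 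So your feasibility and robustness steps do match the paper's in spirit, but the coordination mechanism that makes the $(1-\delta)^{-(\alpha-1)}$ cost comparison valid is exactly what is missing from your construction.
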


Since it is easy to design a consistent algorithm by just blindly following the prediction, we have the following corollary.

\begin{cor}
    There exists a learning augmented online algorithm for the General Speed Scaling problem, parameterized by $\varepsilon>0$,  with the following guarantees:
    \begin{itemize}
        \item \emph{Consistency:} If the prediction is accurate, then the cost of the returned schedule is at most $(1+\varepsilon) \opt$\,.
        \item \emph{Robustness:} Irrespective of the prediction, the cost of the returned schedule is at most $O(\alpha^3/\varepsilon^2)^\alpha \cdot \OPT$.
    \end{itemize}
\end{cor}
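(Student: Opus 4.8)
The plan is to obtain the corollary as a black-box consequence of Theorem~\ref{thm:gen_robustness} together with the shrinking lemma, by feeding a trivial ``trust-the-prediction'' base algorithm into the general-deadlines robustification and tuning the shrinkage parameter $\delta\in(0,1)$ as a function of $\alpha$ and $\varepsilon$. I would choose $\delta$ small enough that $(1-\delta)^{-2(\alpha-1)}\le 1+\varepsilon$; since $-\ln(1-\delta)\le 2\delta$ for $\delta\le 1/2$, the choice $\delta=\Theta(\varepsilon/\alpha)$ works in the relevant regime $\varepsilon\le 1$ (for larger $\varepsilon$ one simply runs the algorithm obtained for $\varepsilon=1$).

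The base algorithm $\mathcal{B}$ acts on the shrunk instance $\mathcal{J}^\delta$. At time $0$ it computes, using the offline algorithm of~\citet{DBLP:conf/focs/YaoDS95}, an optimal schedule $S$ for the shrunk \emph{predicted} instance; as real jobs arrive it follows $S$ whenever the arriving job matches a predicted one, and handles any mismatch (an unpredicted job, or a predicted job that never arrives) by a default safe rule confined to each job's shrunk window $[r_j,\,r_j+(1-\delta)(d_j-r_j)]$, e.g.\ running the extra work at a uniform rate over that window, so that the output is always a feasible schedule of $\mathcal{J}^\delta$. The only feature we need is that when the prediction is exactly correct, $\mathcal{B}$ outputs precisely $S$, whose cost equals $\OPT(\mathcal{J}^\delta)$ since in that case $\mathcal{J}^\delta_{\mathrm{pred}}=\mathcal{J}^\delta$.

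Now apply Theorem~\ref{thm:gen_robustness} to $\mathcal{B}$. For \emph{consistency}: if the prediction is accurate the schedule $\mathcal{B}$ produces has cost $C=\OPT(\mathcal{J}^\delta)$, and by the first case of Lemma~\ref{lms:shrink} (shrinking deadlines only) $\OPT(\mathcal{J}^\delta)\le (1-\delta)^{-(\alpha-1)}\OPT$; hence the final schedule costs at most $(1-\delta)^{-(\alpha-1)}C\le(1-\delta)^{-2(\alpha-1)}\OPT\le(1+\varepsilon)\OPT$. For \emph{robustness}: Theorem~\ref{thm:gen_robustness} unconditionally bounds the cost by $(2\alpha/\delta^2)^\alpha/2\cdot\OPT$, and substituting $\delta=\Theta(\varepsilon/\alpha)$ gives $2\alpha/\delta^2=O(\alpha^3/\varepsilon^2)$, so the cost is at most $O(\alpha^3/\varepsilon^2)^\alpha\cdot\OPT$. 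The only real content is checking that $\mathcal{B}$ is a genuine online algorithm that always returns a feasible schedule of $\mathcal{J}^\delta$ no matter how inaccurate the prediction is — this is exactly what lets Theorem~\ref{thm:gen_robustness} be invoked as a black box; the remaining steps (the choice of $\delta$ and the two short algebraic estimates) are routine.
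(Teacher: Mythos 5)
Your proposal is correct and follows essentially the same route as the paper: a ``blindly follow the prediction'' base algorithm on $\mathcal{J}^\delta$, plugged as a black box into Theorem~\ref{thm:gen_robustness}, with $\OPT(\mathcal{J}^\delta)\le(1-\delta)^{-(\alpha-1)}\OPT$ from Lemma~\ref{lms:shrink} and the choice $\delta=\Theta(\varepsilon/\alpha)$ giving both the $(1+\varepsilon)$ consistency and the $O(\alpha^3/\varepsilon^2)^\alpha$ robustness. Your small refinement of confining the fallback rule for mispredicted jobs to the shrunk window $[r_j,\,r_j+(1-\delta)(d_j-r_j)]$, so the base algorithm's output is literally a feasible schedule of $\mathcal{J}^\delta$ as the theorem requires, is if anything slightly more careful than the paper's phrasing.
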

\begin{proof}[Proof of Corollary]
    Consider the algorithm that blindly follows the prediction to do an optimal schedule of $\mathcal{J}^\delta$ when in the consistent case.  That is, given the prediction of $\mathcal{J}$, it schedules all jobs that agrees with the prediction according to the optimal schedule of the predicted $\mathcal{J}^\delta$; the workload of the remaining jobs $j$ that were wrongly predicted is scheduled uniformly during their duration from release time $r_j$ to deadline $d_j$.  In the consistent case, when the prediction is accurate,  the cost of the computed schedule  equals thus the cost $\opt(J^\delta)$ of an optimal schedule of $J^\delta$. Furthermore, we have by Lemma~\ref{lms:shrink} 
    \begin{align*}
        \OPT(\mathcal{J}^\delta) \leq \left(\frac{1}{1-\delta}\right)^{\alpha-1} \OPT\,,
    \end{align*}
    where $\OPT$ denotes the cost of an optimal schedule to $\mathcal{J}$. Applying Theorem~\ref{thm:gen_robustness} on this algorithm we thus obtain an algorithm that is also robust. Specifically, we obtain an algorithm with the following guarantees:
    \begin{itemize}
        \item If prediction is accurate, then the computed  schedule has cost at most $\left(\frac{1}{1-\delta}\right)^{2(\alpha-1)}\cdot \OPT$.
        \item The cost of the computed schedule is always at most $(2\alpha/\delta^2)^\alpha/2 \cdot \OPT$.
    \end{itemize}
    
    The corollary thus follows by selecting $\delta = \Theta(\varepsilon/\alpha)$ so that $1/(1-\delta)^{2(\alpha-1)} = 1+\varepsilon$.

\end{proof}
We remark that one can also define ``smooth''  algorithms for general deadlines as we did in the uniform case. However, the prediction model and the measure of error quickly get complex and notation heavy. Indeed, our main motivation for studying the Uniform Speed Scaling problem is that it is a clean but still relevant version that allows for a natural prediction model. 

We proceed by proving the main theorem of this section, Theorem~\ref{thm:gen_robustness}.

\paragraph{The procedure \genrobustalg.} We describe the procedure  $\genrobustalg$ that generalizes $\robustalg$ to general deadlines. Its analysis then implies Theorem~\ref{thm:gen_robustness}.
Let $\mathcal{A}$ denote the online algorithm of Theorem~\ref{thm:gen_robustness} that produces a schedule of $\mathcal{J}^\delta$ of cost $C$. 
To simplify the description of \genrobustalg, we fix $\Delta > 0$ and assume that the schedule $s$ output by  $\mathcal{A}$ only changes at times that are multiples of $\Delta$.  This is without loss of generality as we can let $\Delta$ tend to $0$. To simplify our calculations, we further assume that $\delta (d_j- r_j)/\Delta$ evaluates to an integer for all jobs $(r_j, d_j, w_j) \in \mathcal{J}$.

The time line is thus partitioned into time intervals of length $\Delta$ so that in each time interval either no job is processed by $s$ or exactly one job is processed at constant speed by $s$. We denote by $s(t)$ the speed at which $s$ processes the job $j(t)$ during the $t$:th time interval, where we let $s(t)=0$ and $j(t) = \bot$ if no job was processed by $s$ (during this time interval).

To describe the schedule computed by \genrobustalg, we further divide each time interval into a \emph{base} part of length $(1-\delta) \Delta$ and an \emph{auxiliary} part of length $\delta \Delta$. In the $t$:th time interval, \genrobustalg schedules job $j(t)$ at a certain speed $s^\text{base}(t)$ during the base part, and a subset $\mathcal{J}(t) \subseteq \mathcal{J}$ of the jobs  is scheduled during the auxiliary part, each $i\in J(t)$ at a speed $s^\text{aux}_i(t)$. These quantities are computed by \genrobustalg online at the start of the $t$:th time interval as follows:
\begin{itemize}
    \item Let $s^\text{aux}(t) = \sum_{i\in \mathcal{J}(t)} s^\text{aux}_i(t)$ be the current speed of the auxiliary part and let $D_{j(t)} = d_{j(t)} - r_{j(t)}$ be the duration of job $j(t)$.
    \item If $s(t)/(1-\delta) \leq s^\text{aux}(t)$, then set  $s^\text{base}(t) = s(t)/(1-\delta)$. 
    \item Otherwise, set $s^\text{base}(t)$ so that 
    \begin{align}
     (1-\delta) \Delta s^\text{base}(t) + \left(s^\text{base}(t) - s^\text{aux}(t)\right)\delta^2 D_{j(t)} = s(t) \Delta 
\label{eq:genrobust_magic}
    \end{align}
    and add $j(t)$ to $J(t), J(t+1), \ldots, J(t+\delta D_{j(t)}/\Delta-1)$ with all auxiliary speeds $s^\text{aux}_{j(t)}(t), s^\text{aux}_{j(t)}(t+1), \ldots , s^\text{aux}_{j(t)}(t+ \delta D_{j(t)}/\Delta-1)$ set to $s^\text{base}(t)- s^\text{aux}(t)$.
\end{itemize}
This completes the formal description of \genrobustalg. Before proceeding to its analysis, which implies Theorem~\ref{thm:gen_robustness}, we explain the example depicted in Figure~\ref{fig:genrobust}.
%
 Schedule $s$, illustrated on the left, schedules a blue, red, and green job during the first, second, and third time interval, respectively. We have that $\delta/\Delta$ times the duration of the blue job and the red job are $3$ and $4$, respectively. \genrobustalg now produces the schedule on the right where the auxiliary parts are indicated by the horizontal stripes. When the the blue job is scheduled it is partitioned  among the base part of the first interval and evenly among the auxiliary parts of the first, second and third intervals so that the speed at the first interval is the same in the base part and auxiliary part. Similarly, when the red job is scheduled, \genrobustalg splits it among the base part of the second interval and evenly among the auxiliary part of the second, third, fourth and fifth intervals so that the speed during the base part equals the speed at the auxiliary part during the second interval. Finally, the green job is processed at a small speed and is thus only scheduled in the base part of the third interval (with a speed increased by a factor $1/(1-\delta)$).
\begin{figure}
    \centering
    
    \tikzset{>=latex}
    \begin{tikzpicture}
    
        \node at (3,4) {Schedule by $\mathcal{A}$};
        \draw (0,0) edge[thick, ->] (5.45,0);
        \draw(1.0, -0.1)  edge node[below=0.1cm] {\small $\Delta$} (1.0, 0.1);
        \draw(2.0, -0.1)  edge node[below=0.1cm] {\small $2\Delta$} (2.0, 0.1);
        \draw(3.0, -0.1)  edge node[below=0.1cm] {\small $3\Delta$} (3.0, 0.1);
        \draw(4.0, -0.1)  edge node[below=0.1cm] {\small $4\Delta$} (4.0, 0.1);
        \draw(5.0, -0.1)  edge node[below=0.1cm] {\small $5\Delta$} (5.0, 0.1);
        \node at (5.8, 0) {time};
        \draw (0,0) edge[thick, ->] (0,3);
        \node at (0, 3.3) {speed};
    
        \draw[fill=blue!20!white] (0.0, 0) rectangle (1.0, 1);
        \draw[fill=red!20!white] (1.0, 0) rectangle (2.0, 2);
        \draw[fill=green!20!white] (2.0, 0) rectangle (3.0, 0.25);
        
        \begin{scope}[xshift=7cm]
        \node at (3,4) {Schedule by $\genrobustalg$};
        \draw (0,0) edge[thick, ->] (5.45,0);
        \draw(1.0, -0.1)  edge node[below=0.1cm] {\small $\Delta$} (1.0, 0.1);
        \draw(2.0, -0.1)  edge node[below=0.1cm] {\small $2\Delta$} (2.0, 0.1);
        \draw(3.0, -0.1)  edge node[below=0.1cm] {\small $3\Delta$} (3.0, 0.1);
        \draw(4.0, -0.1)  edge node[below=0.1cm] {\small $4\Delta$} (4.0, 0.1);
        \draw(5.0, -0.1)  edge node[below=0.1cm] {\small $5\Delta$} (5.0, 0.1);
        \node at (5.8, 0) {time};
        \draw (0,0) edge[thick, ->] (0,3);
        \node at (0, 3.3) {speed};
    
        \draw[fill=blue!20!white] (0.0, 0) rectangle (0.8, 0.7);
        \draw[fill=blue!20!white] (0.8, 0) rectangle (1.0, 0.7);
        \draw[pattern=horizontal lines] 
                    (0.8, 0) rectangle (1.0, 0.7);
        \draw[fill=red!20!white] (1.0, 0) rectangle (1.8, 1.5);
        \draw[fill=blue!20!white] (1.8, 0) rectangle (2.0, 0.7);
        \draw[fill=red!20!white] (1.8, 0.7) rectangle (2.0, 1.5);
        \draw[pattern=horizontal lines] 
                    (1.8, 0) rectangle (2.0, 1.5);
        \draw[fill=green!20!white] (2.0, 0) rectangle (2.8, 0.30);
        
        \draw[fill=blue!20!white] (2.8, 0) rectangle (3.0, 0.7);
        \draw[fill=red!20!white] (2.8, 0.7) rectangle (3.0, 1.5);
        \draw[pattern=horizontal lines] 
                    (2.8, 0) rectangle (3.0, 1.5);
        
        \draw[fill=red!20!white] (3.8, 0.0) rectangle (4.0, 0.8);
        \draw[pattern=horizontal lines] 
                    (3.8, 0) rectangle (4.0, 0.8);
        \draw[fill=red!20!white] (4.8, 0.0) rectangle (5.0, 0.8);
        \draw[pattern=horizontal lines] 
                    (4.8, 0) rectangle (5.0, 0.8);
                    
        \end{scope}
    \end{tikzpicture}
    \caption{Given the schedule on the left, \genrobustalg produces the schedule on the right.}
    \label{fig:genrobust}
\end{figure}
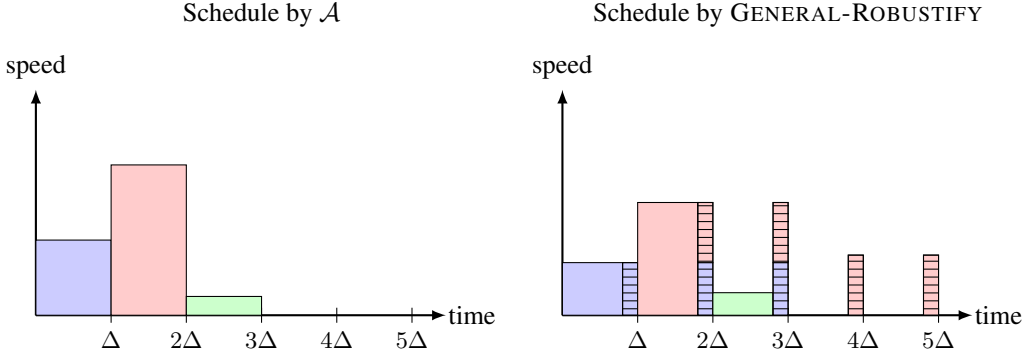

\paragraph{Analysis.}  We show that \genrobustalg satisfies the guarantees stipulated by Theorem~\ref{thm:gen_robustness}. We first argue that \genrobustalg produces a feasible schedule to $\mathcal{J}$. During the $t$:th interval, the schedule $s$ computed by $\mathcal{A}$ processes $\Delta \cdot s(t)$ work of job $j(t)$. We argue that \genrobustalg processes the same amount of work from this time interval. At the time when this interval is considered by \genrobustalg, there are two cases:
\begin{itemize}
    \item If $s(t)/(1-\delta) \leq s^\text{aux}(t)$ then $s^\text{base}(t) = s(t)/(1-\delta)$ so \genrobustalg processes $(1-\delta)\Delta s(t)/(1-\delta) = s(t) \Delta$ work of $j(t)$ during the base part of the $t$:th time interval.
    \item Otherwise, we have that \genrobustalg processes $(1-\delta)\Delta s^\text{base}(t)$ of $j(t)$ during the base part of the $t$:th time interval and $\delta \Delta \left(s^\text{base}(t) - s^\text{aux}(t)\right)$ during the auxiliary part of each of the $\delta D_{j(t)}/\Delta$ time intervals $t, t+1, \ldots, t+ \delta D_{j(t)}/\Delta - 1$. By the selection~\eqref{eq:genrobust_magic}, it thus follows that \genrobustalg processes all work $s(t)\Delta $ from this time interval. in this case as well.
\end{itemize}
The schedule of \genrobustalg thus completely processes every job. Furthermore, since each job is delayed at most  $\delta D_{j(t)}$ time steps we have that it is a feasible schedule to $\mathcal{J}$ since we started with a schedule for $\mathcal{J}^\delta$, which completes each job $j$ by time $r_j + (1-\delta ) D_j$. It remains to prove the robustness and soundness guarantees of Theorem~\ref{thm:gen_robustness}
\begin{lem}[Robustness]
   \genrobustalg computes a schedule of cost at most $(2\alpha/\delta^2)^\alpha/2\cdot \OPT$. 
\end{lem}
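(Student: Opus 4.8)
The plan is to establish a pointwise bound comparing the speed of the \genrobustalg schedule against the Average Rate schedule of the original instance $\mathcal J$, and then invoke the known competitive ratio of Average Rate for general deadlines. Write $\bar s(\tau)$ for the speed of the schedule produced by \genrobustalg at time $\tau$, and let $s^{\mathrm{AVR}}(\tau)=\sum_{j:\,r_j\le\tau<d_j} w_j/D_j$ be the Average Rate speed, where $D_j=d_j-r_j$. The key claim is
\begin{equation*}
  \bar s(\tau)\;\le\;\frac{1}{\delta^{2}}\,s^{\mathrm{AVR}}(\tau)\qquad\text{for every time }\tau .
\end{equation*}
Given this, monotonicity of $x\mapsto x^\alpha$ and the fact that Average Rate is $2^{\alpha-1}\alpha^\alpha$-competitive for general deadlines~\cite{DBLP:conf/focs/YaoDS95} yield
\begin{equation*}
  \int(\bar s(\tau))^\alpha\,d\tau\;\le\;\frac{1}{\delta^{2\alpha}}\int(s^{\mathrm{AVR}}(\tau))^\alpha\,d\tau\;\le\;\frac{2^{\alpha-1}\alpha^\alpha}{\delta^{2\alpha}}\,\OPT\;=\;\frac12\Big(\frac{2\alpha}{\delta^{2}}\Big)^\alpha\OPT ,
\end{equation*}
which is exactly the claimed bound. (Feasibility of the schedule was already argued above, so only this cost bound remains.)

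It suffices to prove the pointwise claim for each fixed discretization parameter $\Delta$ and then let $\Delta\to0$. First I would record that any job $i$ that \genrobustalg processes at a time $\tau$ — whether in a base part or an auxiliary part — satisfies $r_i\le\tau<d_i$: since $s$ is feasible for $\mathcal J^\delta$, it processes $i$ only at times in $[r_i, r_i+(1-\delta)D_i]$, and an interval in which $s$ processes $i$ is spread forward by at most $\delta D_i$, so the auxiliary contribution of $i$ stays within $[r_i, r_i+(1-\delta)D_i+\delta D_i]=[r_i,d_i]$. Second, I would bound the auxiliary speed generated by a single job: if $s$ processes $i$ in interval $t$ and the ``hard'' branch is taken, then by \eqref{eq:genrobust_magic} the generated auxiliary workload is $s(t)\Delta-(1-\delta)\Delta\,s^{\mathrm{base}}(t)\le s(t)\Delta$, spread at constant speed over $\delta D_i/\Delta$ intervals, so that speed is at most $s(t)\Delta/(\delta^{2}D_i)$. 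Summing over the intervals $t$ at which $s$ processes $i$ whose spread window contains $\tau$, and using $\sum_{t:\,j(t)=i}s(t)\Delta=w_i$, the total auxiliary speed of job $i$ at time $\tau$ is at most $w_i/(\delta^{2}D_i)$.

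To finish, fix $\tau$ and let $t^\ast$ be the interval containing it. At most one job occupies the base part of $t^\ast$; call it $j$. If $j$ is handled by the easy branch, then $s^{\mathrm{base}}(t^\ast)=s(t^\ast)/(1-\delta)\le s^{\mathrm{aux}}(t^\ast)$, and $s^{\mathrm{aux}}(t^\ast)$, summed over all contributing jobs via the previous paragraph, is at most $\delta^{-2}\sum_{i\text{ alive}}w_i/D_i=\delta^{-2}s^{\mathrm{AVR}}(\tau)$. If $j$ is handled by the hard branch, \eqref{eq:genrobust_magic} gives $s^{\mathrm{base}}(t^\ast)\le s(t^\ast)\Delta/(\delta^{2}D_j)+s^{\mathrm{aux}}(t^\ast)$, where $s^{\mathrm{aux}}(t^\ast)$ aggregates only workload $s$ processed strictly before $t^\ast$; grouping the extra term $s(t^\ast)\Delta/(\delta^{2}D_j)$ with the $j$-part of $s^{\mathrm{aux}}(t^\ast)$ still gives at most $w_j/(\delta^{2}D_j)$ (since $\sum_{t\le t^\ast:\,j(t)=j}s(t)\Delta\le w_j$), while the parts of $s^{\mathrm{aux}}(t^\ast)$ coming from jobs $i\ne j$ sum to at most $\delta^{-2}\sum_{i\ne j\text{ alive}}w_i/D_i$; adding these up yields $\bar s(\tau)\le\delta^{-2}s^{\mathrm{AVR}}(\tau)$. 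The speed in an auxiliary part of $t^\ast$ is handled the same way: in the hard branch it also equals $s^{\mathrm{base}}(t^\ast)$ once $j(t^\ast)$ has been added, and in the easy branch it equals the already-bounded $s^{\mathrm{aux}}(t^\ast)$.

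The step I expect to be the main obstacle is this last accounting: one has to organize the per-job bookkeeping so that the ``work processed so far'' telescopes, guaranteeing that the current interval's base contribution plus the accumulated auxiliary speed of the \emph{same} job never exceeds $w_j/(\delta^{2}D_j)$, and so that no auxiliary workload is double-counted across the several spread windows of a single job that may overlap a fixed time $\tau$. Carrying this out with the right constants is precisely what produces the clean factor $1/\delta^{2}$ rather than something larger.
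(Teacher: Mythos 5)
Your proposal is correct and follows essentially the same route as the paper: the crux in both is that the auxiliary workload of a job $j$ is spread over at least $\delta^2 D_j$ units of time, giving a pointwise speed bound of $\delta^{-2}$ times the \textsc{Average Rate} speed, after which the known $2^{\alpha-1}\alpha^\alpha=(2\alpha)^\alpha/2$ competitive ratio of AVR yields the claim. The only cosmetic difference is that the paper first observes that the base-part speed never exceeds the (final) auxiliary-part speed and then compares only the auxiliary speed to AVR, whereas you bound the full schedule speed pointwise by handling the hard-branch base speed directly; both amount to the same accounting.
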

\begin{proof}
    By the definition of the algorithm we have, for each time interval, that the speed of the base part is at most the speed of the auxiliary part. Letting $s^\text{base}(t)$ and $s^\text{aux}(t)$ denote the speed of the base and auxiliary part of the $t$:th time interval, we thus have
    \begin{align*}
        \sum_{t} \left((1-\delta) s^\text{base}(t)^\alpha + \delta s^\text{aux}(t)^\alpha \right) \leq \sum_t s^\text{aux}(t)^\alpha\,.
    \end{align*}
    Now we have that the part of a job $j$ that is processed during the auxiliary part of a time interval has been uniformly assigned to at least $\delta^2 D_j$ time steps. It follows that the speed at any auxiliary time interval is at most $1/\delta^2$ times the speed at that time of the \textsc{Average Rate} heuristic (AVR). The lemma now follows since that heuristic is known \cite{DBLP:conf/focs/YaoDS95} to have  competitive ratio at most  $(2\alpha)^\alpha/2$.
\end{proof}

\begin{lem}[Consistency] 
   \genrobustalg computes a schedule of cost at most $\left(\frac{1}{1-\delta}\right)^{\alpha-1}\cdot C$ where $C$ denotes the cost of the schedule $s$ computed by $\mathcal{A}$. 
\end{lem}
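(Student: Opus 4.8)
The plan is to bound the cost of the schedule produced by $\genrobustalg$ against a ``na\"ive'' reference schedule and then morph the reference schedule into $\genrobustalg$'s schedule through a sequence of cost-nonincreasing modifications. Concretely, let $R_0$ be the schedule that for every time interval $t$ runs the job $j(t)$ at the constant speed $s(t)/(1-\delta)$ throughout the base part of interval $t$ (of length $(1-\delta)\Delta$) and leaves every auxiliary part empty. Since the base part of length $(1-\delta)\Delta$ at speed $s(t)/(1-\delta)$ processes exactly $s(t)\Delta$ work, $R_0$ is a feasible schedule of $\mathcal{J}$ of cost $\sum_t (1-\delta)\Delta\big(s(t)/(1-\delta)\big)^\alpha = (1-\delta)^{-(\alpha-1)}\sum_t\Delta\,s(t)^\alpha = (1-\delta)^{-(\alpha-1)}C$. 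Now I would let $t_1<t_2<\dots<t_N$ be the time intervals in which $\mathcal{A}$ triggers the \emph{second} case of $\genrobustalg$ (the intervals in which work is spread into auxiliary parts), listed in the order they are processed, and define $R_j$ as the schedule obtained from $R_{j-1}$ by performing $\genrobustalg$'s modification for interval $t_j$: decrease the base speed at $t_j$ from $s(t_j)/(1-\delta)$ to $s^{\mathrm{base}}(t_j)$, and increase the speed on the auxiliary part of each of the intervals $t_j,\dots,t_j+\delta D_{j(t_j)}/\Delta-1$ by $a:=s^{\mathrm{base}}(t_j)-s^{\mathrm{aux}}(t_j)$. For intervals that do not trigger the second case, $R_0$ already coincides with $\genrobustalg$'s output, so $R_N$ equals the schedule computed by $\genrobustalg$, and it suffices to prove $\mathrm{cost}(R_j)\le\mathrm{cost}(R_{j-1})$ for every $j$.

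For the inductive step, write $s=s(t_j)$, $s_b=s^{\mathrm{base}}(t_j)$, $k=\delta D_{j(t_j)}/\Delta$, and let $V_0,\dots,V_{k-1}$ denote the speeds of the auxiliary parts of the intervals $t_j,\dots,t_j+k-1$ in $R_{j-1}$. Only these $k+1$ slots change, so $\mathrm{cost}(R_j)-\mathrm{cost}(R_{j-1})=\Delta\big[(1-\delta)\big(s_b^\alpha-(s/(1-\delta))^\alpha\big)+\delta\sum_{i=0}^{k-1}\big((V_i+a)^\alpha-V_i^\alpha\big)\big]$. I then use two observations. First, $V_0=s^{\mathrm{aux}}(t_j)=s_b-a\ge 0$: the auxiliary part of $t_j$ in $R_{j-1}$ has accumulated contributions exactly from the earlier second-case intervals that are still ``active'' at $t_j$, which is precisely the quantity $s^{\mathrm{aux}}(t_j)$ used by $\genrobustalg$. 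Second, $V_i\le V_0$ for all $i$: an earlier second-case interval $t'<t_j$ contributes to the auxiliary part of $t_j+i$ only if $t_j+i\le t'+\delta D_{j(t')}/\Delta-1$, a condition that only gets harder as $i$ grows; since all contributions are nonnegative, the $V_i$ are nonincreasing in $i$. As $x\mapsto(x+a)^\alpha-x^\alpha$ is nondecreasing on $x\ge 0$ (its derivative is $\alpha\big((x+a)^{\alpha-1}-x^{\alpha-1}\big)\ge 0$), this gives $(V_i+a)^\alpha-V_i^\alpha\le s_b^\alpha-(s_b-a)^\alpha$, hence $\mathrm{cost}(R_j)-\mathrm{cost}(R_{j-1})\le\Delta\big[(1-\delta)\big(s_b^\alpha-(s/(1-\delta))^\alpha\big)+\delta k\big(s_b^\alpha-(s_b-a)^\alpha\big)\big]$.

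It remains to show that the bracketed quantity is nonpositive. Dividing \eqref{eq:genrobust_magic} by $\Delta$ gives $s=(1-\delta)s_b+\delta k\,a$; setting $\beta=\delta k/(1-\delta)$ we get $s/(1-\delta)=s_b+\beta a$ and $\delta k=(1-\delta)\beta$, so the claim reduces to the elementary inequality $\beta\big(s_b^\alpha-(s_b-a)^\alpha\big)\le (s_b+\beta a)^\alpha-s_b^\alpha$ for $0\le a\le s_b$ and $\beta\ge 0$. With $g(x)=x^\alpha$ convex and increasing on $[0,\infty)$, the left-hand side equals $\beta\int_{s_b-a}^{s_b}g'(y)\,dy\le\beta a\,g'(s_b)$ while the right-hand side equals $\int_{s_b}^{s_b+\beta a}g'(y)\,dy\ge\beta a\,g'(s_b)$, since $g'$ is nondecreasing; chaining these proves it. Telescoping, $\mathrm{cost}(R_N)\le\mathrm{cost}(R_0)=(1-\delta)^{-(\alpha-1)}C$, which is the claimed bound. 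I expect the main obstacle to be identifying this telescoping through the auxiliary schedules $R_0,\dots,R_N$ together with the monotonicity $V_i\le V_0$: that monotonicity is exactly what lets the per-interval accounting absorb the superadditivity of $x\mapsto x^\alpha$ on the shared auxiliary parts, after which only the bookkeeping around \eqref{eq:genrobust_magic} and the one-line convexity estimate remain.
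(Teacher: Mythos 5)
Your proof is correct and follows essentially the same route as the paper: the paper defines hybrid schedules $h^{(t)}$ that interpolate between the all-base $1/(1-\delta)$-speedup schedule (of cost $(1/(1-\delta))^{\alpha-1}C$) and the output of \genrobustalg, and argues each step is cost-nonincreasing, which is exactly your telescoping through $R_0,\dots,R_N$ (restricting to second-case intervals is equivalent, since first-case intervals are unchanged). Your write-up is in fact more careful than the paper's per-step argument, since you make explicit the monotonicity $V_i\le V_0=s^{\mathrm{aux}}(t)$ and the convexity inequality that justify why redistributing interval $t$'s work cannot raise the cost.
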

\begin{proof}
    For $t\geq 0$, let $h^{(t)}$  be the schedule that processes the workload during the first $t$ time intervals as in the schedule computed by \genrobustalg, and the workload of the remaining time intervals is processed during the base part of that time interval by increasing the speed by a factor $1/(1-\delta)$. Hence, $h^{(0)}$ is the schedule that processes  the workload of all time intervals during the base part at a speed up of $1/(1-\delta)$, and $h^{(\infty)}$ equals the schedule produced by $\genrobustalg$. By definition, the cost of $h^{(0)}$ equals $\left(\frac{1}{1-\delta}\right)^\alpha (1-\delta)\cdot C$ and so the lemma follows by observing that for every $t\geq 1$ the cost of $h^{(t)}$ is at most the cost of $h^{(t-1)}$. To see this consider the two cases of \genrobustalg when considering the $t$:th time interval:
    \begin{itemize}
        \item If $s(t)/(1-\delta) \leq s^\text{aux}(t)$ then \genrobustalg processes all the workload during the base part at a speed of $s^\text{base}(t) = s(t)/(1-\delta)$. Hence, in this case,  the schedules $h^{(t)}$ and $h^{(t-1)}$ processes the workload of the $t$:th time interval identically and so they have equal costs. 
        
        \item Otherwise, \genrobustalg partitions the workload of the $t$:th time interval among the base part of the $t$:th interval and $\delta D_{j(t)}/\Delta$ many auxiliary parts so that the speed at each of these parts is strictly less than $s(t)/(1-\delta)$. Hence, since $h^{(t)}$ processes the workload of the $t$:th time interval at a lower speed than $h^{(t-1)}$ we have that its cost is strictly lower if $\alpha > 1$ (and the cost is equal if $\alpha=1$).  
    \end{itemize}
\end{proof}

\section{Additional Experiments}
\label{sec:Additional Experiments}
In this section we further explore the performance of LAS algorithm for different values of the parameter $\alpha$. We conduct experiments on the login requests of \textit{BrightKite} using the same experimental setup used in Section \ref{sec:Experimental analysis}. The results are summarized in Table \ref{table:table with alpha parameters}. In every column the average competitive ratios of each algorithm for a fixed $\alpha$ are presented. We note that, as expected, higher values of $\alpha$ penalize heavily wrong decisions deteriorating the competitive ratios of all algorithms. Nevertheless, \textsc{LAS} algorithm consistently outperforms \textsc{AVR} and \textsc{OA} for all different values of $\alpha$.

\begin{table}[H]
\centering
\caption{Real dataset results with different $\alpha$ values}
\vskip 0.1in
\label{table:table with alpha parameters}
{\small
\begin{tabular}{lccccr}
\toprule
Algorithm & $\alpha =3$ & $\alpha = 6$ & $\alpha = 9$ & $\alpha = 12$ \\
\midrule
AVR     & 1.365 & 2.942 & 7.481 & 21.029 \\
OA     & 1.245     & 2.211     & 4.513 & 9.938  \\
LAS, $\epsilon = 0.8$    & 1.113& 1.576& 2.806 & 7.204 \\
LAS, $\epsilon = 0.01$    & 1.116& 1.598& 2.918 & 8.055 \\
\bottomrule
\end{tabular}
}
\caption*{The timeline was discretized in chunks of ten minutes and $D$ was set to 20.}
\vskip -0.2in
\end{table}

}{}

\end{document}
